\newcommand{\ELnote}[1]{\textcolor{blue}{[{\em {\bf **EL Note:} #1}]}}
\newtheorem{assumption}{Assumption}
\DeclareMathOperator*{\argmin}{arg\,min}    
\DeclareMathOperator*{\argmax}{arg\,max}    
\newcommand{\eps}{\varepsilon}
\newcommand{\1}{\mathbbm{1}}
\newcommand{\E}{\mathbb{E}}
\newcommand{\N}{\mathbb{N}}
\renewcommand{\P}{\mathbb{P}}
\newcommand{\R}{\mathbb{R}}
\newcommand{\Z}{\mathbb{Z}}
\newcommand{\Ac}{\mathcal{A}}
\newcommand{\Fc}{\mathcal{F}}
\newcommand{\Nc}{\mathcal{N}}
\newcommand{\Qc}{\mathcal{Q}}
\begin{document}
\lstset{language=Python}
\title{A Greedy Algorithm for Quantizing Neural Networks}

\author{\name Eric Lybrand \email elybrand@ucsd.edu \\
       \addr Department of Mathematics\\
       University of California, San Diego\\
       San Diego, CA 92121, USA
       \AND
       \name Rayan Saab \email rsaab@ucsd.edu \\
       \addr Department of Mathematics, and\\
       Halicioglu Data Science Institute\\
       University of California, San Diego\\
       San Diego, CA 92121, USA}

\editor{Gal Elidan}

\maketitle

\begin{abstract}
    We propose a new computationally efficient method for quantizing the weights of pre-trained neural networks that is general enough to handle  both multi-layer perceptrons and convolutional neural networks. Our method deterministically quantizes layers in an iterative fashion with no complicated re-training required. Specifically, we  quantize each neuron, or hidden unit, using a greedy path-following algorithm. This simple algorithm is equivalent to running a dynamical system, which we prove is stable for quantizing a single-layer neural network (or, alternatively, for quantizing the first layer of a multi-layer network) when the training data are Gaussian. We show that under these assumptions, the  quantization error decays with the width of the layer, i.e., its level of over-parametrization. We
    provide numerical experiments, on multi-layer networks, to illustrate the performance of our methods on MNIST and CIFAR10 data, as well as for quantizing the VGG16 network using ImageNet data.
\end{abstract}

\begin{keywords}
  quantization, neural networks, deep learning, stochastic control, discrepancy theory
\end{keywords}

\section{Introduction}

 Deep neural networks have taken the world by storm. They  outperform competing algorithms on applications ranging from speech recognition and translation to autonomous vehicles and even games, where they have beaten the best human players at, e.g., Go (see,  \citealt{lecun2015deep, goodfellow2016deep, schmidhuber2015deep, silver2016mastering}).  Such spectacular performance comes at a cost. Deep neural networks require a lot of computational power to train, memory to store, and power to run (e.g., \citealt{han2015deep, kim2015compression, gupta2015deep, courbariaux2015binaryconnect}). They are  painstakingly trained on powerful computing devices and then either run on these powerful devices or on the cloud. Indeed, it is well-known that the expressivity of a network depends on its architecture \citep{baldi2019capacity}. Larger networks can capture more complex behavior \citep{cybenko1989approximation} and therefore, for example, they generally learn better classifiers. The trade off, of course, is that larger networks require more memory for storage as well as more power to run computations. Those who design neural networks for the purpose of loading them onto a particular device must therefore account for the device's memory capacity, processing power, and power consumption. A deep neural network might yield a more accurate classifier, but it may require too much power to be run often without draining a device's battery. On the other hand, there is much to be gained in building  networks directly into hardware, for example as speech recognition or translation chips on mobile or handheld devices or hearing aids. Such mobile applications also impose restrictions on the amount of memory a neural network can use as well as its power consumption. 
\bigskip

\if{    Advances in computing and processing speed have brought about an era where complex classifiers such as neural networks can be trained on incredibly large data sets. The abundance of training data has enabled machine learning practitioners to train state-of-the-art image classifiers that only a decade ago would have seemed too challenging. Already neural networks are being embedded into various technologies such as self-driving cars and mobile phones \ELnote{citations!}. It is well-known that the expressivity of a network depends on its architecture. Larger networks can capture more complex behavior and therefore, in general, learn better classifiers \ELnote{citation}. The trade off, of course, is that larger networks require more memory for storage as well as more power to run computations to generate classifications. Those who design neural networks for the purpose of loading them onto a particular device must therefore account for the device's memory capacity, processing power, and power consumption. A deep neural network might yield a more accurate classifier, but it may require too much power to be run often without draining a device's battery, for example.}\fi
        
    This tension between network expressivity and the cost of computation has naturally posed the question of whether neural networks can be compressed without compromising their performance. Given that neural networks require computing many matrix-vector multiplications, arguably one of the most impactful changes would be to quantize the weights in the neural network. In the extreme case, replacing each \(32\)-bit floating point weight with a single bit would reduce the memory  required for storing a network by a factor of \(32\) and simplify scalar multiplications in the matrix-vector product. It is not clear at first glance, however, that there even exists a procedure for quantizing the weights that does not dramatically affect the network's performance.
    
    \subsection{Contributions}
    The goal of this paper is to propose a framework for quantizing neural networks without sacrificing their predictive power, and to provide theoretical justification for our framework. Specifically, 
    \begin{itemize}
        \item We propose a novel algorithm in \eqref{eq: first layer dynamical system} and \eqref{eq: hidden layer dynamical system} for sequentially quantizing layers of a pre-trained neural network in a data-dependent manner. This algorithm requires no retraining of the network, requires tuning only \(2\) hyperparameters---namely, the number of bits used to represent a weight and the radius of the quantization alphabet---and has a run time complexity of \(O(Nm)\) operations per layer. Here, \(N\) is the ambient dimension of the inputs, or equivalently, the number of features per input sample of the layer, while \(m\) is the number of training samples used to learn the quantization. This \(O(Nm)\) bound is optimal in the sense that any data-dependent quantization algorithm requires reading the \(Nm\) entries of the training data matrix. Furthermore, this algorithm is parallelizable across neurons in a given layer.
        \item We establish upper bounds on the relative training error in Theorem \ref{thm: first layer relative error} and the generalization error in Theorem \ref{thm: generalize} when quantizing the first layer of a neural network that hold with high probability when the training data are Gaussian. Additionally, these bounds make explicit how the relative training error and generalization error decay as a function of the overparametrization of the data.
        \item We provide numerical simulations in Section \ref{sec: numerics} for quantizing networks trained on the benchmark data sets MNIST and CIFAR10 using both multilayer perceptrons and convolutional neural networks. We quantize all layers of the neural networks in these numerical simulations to demonstrate that the quantized networks generalize very well even when the data are not Gaussian.
    \end{itemize}
    
    \section{Notation}
    Throughout the paper, we will use the following notation. Universal constants will be denoted as \(C, c\) and their values may change from line to line. For real valued quantities \(x, y\), we write \(x \lesssim y\) when we mean that \(x \leq C y\) and \(x \propto y\) when we mean \(c y \leq x \leq C y\). For any natural number \(m \in \N\), we denote the set \(\{1, \hdots, m\}\) by \([m]\). For column vectors \(u, v \in \R^{m}\), the Euclidean inner product is denoted by \(\langle u, v \rangle = u^T v = \sum_{j=1}^{m} u_j v_j\), the \(\ell_2\)-norm by \(\|u\|_2 = \sqrt{\sum_{j=1}^{m} u_j^2}\), the \(\ell_1\)-norm by \(\|u\|_1 = \sum_{j=1}^{m} |u_j|\), and the \(\ell_{\infty}\)-norm by \(\|u\|_{\infty} = \max_{j\in [m]} |u_j|\). \(B(x, r)\) will denote the \(\ell_2\)-ball centered at \(x\) with radius \(r\) and we will use the notation \(B_2^{m} := B(0,1) \subset \R^{m}\).  For a sequence of vectors \(u_t \in \R^{m}\) with \(t \in \Z\), the backwards difference operator \(\Delta\) acts by \(\Delta u_t = u_{t} - u_{t-1}\).  For a matrix \(X \in \R^{m \times N}\) we will denote the rows using lowercase characters \(x_t\) and the columns with uppercase characters \(X_t\). For two matrices \(X, Y \in \R^{m \times N}\) we denote the Frobenius norm by \(\|X-Y\|_F := \sqrt{\sum_{i,j}|X_{i,j}-Y_{i,j}|^2}\).
    \(\Phi\) will denote a $L$-layer neural network, or multilayer perceptron, which acts on data \(x \in \R^{N_0}\) via 
    \[\Phi(x) := \varphi \circ A^{(L)} \circ \cdots \circ \varphi \circ A^{(1)}(x).\]
    Here, \(\varphi: \R \to \R\) is a rectifier which acts on each component of a vector, \(A^{(\ell)}\) is an affine operator with  \(A^{(\ell)}(v) = v^T W^{(\ell)}+b^{(\ell)T}\) and \(W^{(\ell)} \in \R^{N_{\ell} \times N_{\ell+1}}\) is the \(\ell^{th}\) layer's weight matrix, \(b^{(\ell)} \in \R^{N_{\ell+1}}\) is the bias.

\section{Background}

    While there are a handful of empirical studies on quantizing neural networks, the mathematical literature on the subject is still in its infancy. In practice there appear to be three different paradigms for quantizing neural networks. These include quantizing the gradients during training, quantizing the activation functions, and quantizing the weights either during or after training. \cite{guo2018survey} presents an overview of these different paradigms. Any quantization that occurs during training introduces issues regarding the convergence of the learning algorithm. In the case of using quantized gradients, it is important to choose an appropriate codebook for the gradient prior to training to ensure stochastic gradient descent converges to a local minimum. When using quantized activation functions, one must suitably modify backpropagation since the activation functions are no longer differentiable. Further, enforcing the weights to be discrete during training also causes problems for backpropagation which assumes no such restriction. In any of these cases, it will be necessary to carefully choose hyperparameters and modify the training algorithm beyond what is necessary to train unquantized neural networks. In contrast to these approaches, our result allows the practitioner to train neural networks in any fashion they choose and quantizes the trained network afterwards. Our quantization algorithm only requires tuning the number of bits that are used to represent a weight and the radius of the quantization alphabet. We now turn to surveying approaches similar to ours which quantize weights after training.
    
    A natural question to ask is whether or not for every neural network there exists a quantized representation that approximates it well on a given data set. It turns out that a partial answer to this question lies in the field of discrepancy theory. Ignoring bias terms for now, let's look at quantizing the first layer. There we have some weight matrix \(W \in \R^{N_{0} \times N_{1}}\)  which acts on input \(x \in \R^{N_{0}}\) by \(x^T W\) and this quantity is then fed through the rectifier. Of course, a layer can act on a collection of \(m > 0\) inputs stored as the rows in a matrix \(X \in \R^{m\times N_{0}}\) where now the rectifier acts componentwise. Focusing on just one neuron \(w\), or column of \(W\), rather than viewing the matrix vector product \(X w\) as a collection of inner products \(\{x_i^T w\}_{i \in [m]}\), we can think about this as a linear combination of the columns of \(X\), namely \(\sum_{t\in[N_{0}]} w_t X_t\). This elementary linear algebra observation now lends the quantization problem a rather elegant interpretation: is there some way of choosing quantized weights \(q_t\) from a fixed alphabet \(\Ac\), such as \(\{-1, 0, 1\}\), so that the walk \(Xq = \sum_{t=1}^{N_0} q_t X_t\) approximates the walk \(Xw = \sum_{t=1}^{N_0} w_t X_t\)? 
    
    As we mentioned above, the study of the existence of such a \(q\) when \(Xw = 0\) has a rather rich history from the discrepancy theory literature.  \cite{spencer1985six} in Corollary 18 was able to prove the following surprising claim. There exists an absolute constant \(c > 0\) so that given \(N\) vectors \(X_1, \hdots, X_N \in \R^{m}\) with \(\sup_{t\in[N]}\|X_t\|_{2} \leq 1\) there exists a vector \(q \in \{-1, 1\}^{N}\) so that \(\|Xw - Xq\|_{\infty} = \|Xq\|_{\infty} \leq c\log(m)\). What makes this so remarkable is that the upper bound is \textit{independent} of \(N\), or the number of vectors in the walk. Spencer further remarks that J\'anos Koml\'os has conjectured that this upper bound can be reduced to simply \(c\). The proof of the Koml\'os conjecture seems to be elusive except in special cases. One special case where it is true is if we require \(N < m\) and now allow \(q \in \{-1, 0, 1\}^N\). Theorem 16 in \cite{spencer1985six} then proves that there exists universal constants \(c \in (0,1)\) and \(K > 0\) so that for every collection of vetors \(X_1, \hdots, X_N \in \R^{m}\) with \(\max_{i \in [N]} \|X_i\|_{2} \leq 1\) there is some \(q \in \{-1, 0, 1\}^N\) with \(|\{i \in [N] : q_i = 0\}| < cN\) and \(\|Xq\|_{\infty} \leq K\). 
    
    Spencer's result inspired others to attack the Koml\'os conjecture and variants thereof. \cite{banaszczyk1990beck} was able to prove a variant of Spencer's result for vectors \(X_t\) chosen from an ellipsoid. In the special case where the ellipsoid is the unit ball in \(\R^{m}\), Banaszczyk's result says for any \(X_1, \hdots, X_N \in B_2^{m}\) there exists \(q \in \{-1, 1\}^N\) so that \(\|Xq\|_2 \leq \sqrt{m}\). This bound is tight, as it is achieved by the walk with \(N = m\) and when the vectors \(X_t\) form an orthonormal basis. Later works consider a more general notion of boundedness. Rather than controlling the infinity or Euclidean norm one might instead wonder if there exists a bit string \(q\) so that the quantized walk never leaves a sufficiently large convex set containing the origin. The first such result, to the best of our knowledge, was proven by  \cite{giannopoulos1997some}. Giannopoulos proved there that for any origin-symmetric convex set \(K \subset \R^m\) with standard Gaussian measure \(\gamma(K) \geq 1/2\) and for any collection of vectors \(X_1, \hdots, X_m \in B_2^{m}\) there exists a bit string \(q \in \{-1, 1\}^{m}\) so that \(Xq \in c \log(m)K\). Notice here that the number of vectors in this result is equal to the dimension.  \cite{banaszczyk1998balancing} strengthened this result by allowing the number of vectors to be arbitrary and further showed that, under the same assumption \(\gamma(K) \geq 1/2\), there exists a \(q \in \{-1, 1\}^N\) which satisfies \(Xq \in cK\). Scaling the hypercube appropriately, this immediately implies that the bound in Spencer's result can be reduced from \(c\log(m)\) to \(c\sqrt{1+\log(m)}\). Though the above results were formulated in the special case when \(w=0\), a result by \cite{lovasz1986discrepancy} proves that results in this special case naturally extend to results in the \textit{linear discrepancy} case when \(\|w\|_{\infty} \leq 1\), though the universal constant scales by a factor of 2.
    
     While all of these works are important contributions towards resolving the Koml{\'o}s conjecture, many important questions remain, particularly pertaining to their applicability to our problem of quantizing neural networks. Naturally the most important question remains on how to construct such a \(q\) given \(X\), \(w\). A na\"ive first guess towards answering both questions would be to solve an integer least squares problem. That is, given a data set \(X,\) a neuron \(w\), and a quantization alphabet \(\Ac\), such as \{-1, 1\}, solve
    \begin{equation}\label{eq: integer least squares}
        \begin{aligned}
        & \underset{q
        }{\text{minimize}}
        & & \|Xw - Xq\|_2^2 \\
        & \text{subject to}
        & & q_i \in \Ac, \;\; i = 1, \ldots, m.
        \end{aligned}
    \end{equation}
    It is well-known, however, that solving \eqref{eq: integer least squares} is NP-Hard. See, for example, \cite{ajtai1998shortest}.
    Nevertheless, there have been many iterative constructions of vectors \(q \in \{-1, 1\}^N\) which satisfy the bounds in the aforementioned works. A non-comprehensive list of such works includes \cite{bansal2010constructive, lovett2015constructive, rothvoss2017constructive, harvey2014discrepancy, eldan2014efficient}. Constructions of \(q\) which satisfy the bound in the result of \cite{banaszczyk1998balancing} include the works of \cite{dadush2016towards, bansal2018gram}. These works also generalize to the linear discrepancy setting. In fact, \cite{bansal2018gram} prove a much more general result which allows the use of more arbitrary alphabets other than \(\{-1, 1\}\). Their algorithm is random though, so their result holds with high probability on the execution of the algorithm. This is in contrast, as we will see, with our result which will hold with high probability on the draw of Gaussian data. Beyond this, the computational complexity of the algorithms in \cite{dadush2016towards, bansal2018gram} prohibit their use in quantizing deep neural networks. 
    %
    %
    For \cite{dadush2016towards}, this consists of looping over \(O(N_0^5)\) iterations of solving a semi-definite program and computing a Cholesky factorization for a \(N_0\times N_0\) matrix.  The Gram-Schmidt walk algorithm in \cite{bansal2018gram} has a run-time complexity of \(O(N_0 (N_0+m)^{\omega})\), where \(\omega \geq 2\) is the exponent for matrix multiplication. These complexities are already quite restrictive and only give the run-time for quantizing a single neuron. As the number of neurons in each layer is likely to be large for deep neural networks, these algorithms are simply infeasible for the task at hand. As we will see, our algorithm in comparison has a run-time complexity of \(O(N_0 m)\) per neuron which is optimal in the sense that any data driven approach towards constructing \(q\) will require one pass over the \(N_0 m\) entries of \(X\). Using a norm inequality on Banasczyzk's bound, the result in \cite{bansal2010constructive} guarantees for \(\|w\|_{\infty} \leq 1\) the existence of a \(q\) such that \(\|Xw - Xq\|_2 \leq c \sqrt{m \log(m)}\). Provided \(w\) is a generic vector in the hypercube, namely that \(\|w\|_2 \propto \sqrt{N_0}\), then a simple calculation shows that with high probability on the draw of Gaussian data \(X\) with entries having variance \(1/m\) to ensure that the columns are approximately unit norm, the Gram-Schmidt walk achieves a relative error bound of \(\frac{\|Xw - Xq\|_2}{\|Xw\|_2} \lesssim \sqrt{m \log(m)/N_0}\). As we will see in Theorem \ref{thm: first layer relative error}, our relative training error bound for quantizing neurons in the first layer decays like \(\log(N_0)\sqrt{m/N_0}\). In other words, to achieve a relative error of less than \(\eps\) in the overparametrized regime where \(N_0 \gg m\), the Gram-Schmidt walk requires on the order of \(\frac{m^3 \log^3(m)}{\eps^6}\) floating point operations as compared to our algorithm which only requires on the order of \(\frac{m^2}{\eps^2}\) floating point operations.
    
    With no quantization algorithm that is both competitive from a theoretical perspective and computationally feasible, we turn to surveying what has been done outside the mathematical realm. Perhaps the simplest manner of quantizing weights is to quantize each weight within each neuron independently. The authors in \cite{rastegari2016xnor} consider precisely this set-up in the context of convolutional neural networks (CNNs). For each weight matrix \(W^{(\ell)} \in \R^{N_{\ell} \times N_{\ell + 1}}\) the quantized weight matrix \(Q^{(\ell)}\) and optimal scaling factor \(\alpha_{\ell}\) are defined as minimizers of \(\|W^{(\ell)} - \alpha Q\|_F^2\) subject to the constraint that \(Q_{i,j} \in \{-1, 1\}\) for all \(i, j\). It turns out that the analytical solution to this optimization problem is \(Q_{i,j}^{(\ell)} = \mathrm{sign}(W_{i,j}^{(\ell)})\) and \(\alpha_{\ell} = \frac{1}{mn} \sum_{i,j} |W_{i,j}^{(\ell)}|\). This form of quantization has long been known to the digital signal processing community as Memoryless Scalar Quantization (MSQ) because it quantizes a given weight independently of all other weights. While MSQ may minimize the Euclidean distance between two weight matrices, we will see that it is far from optimal if the concern is to design a matrix \(Q\) which approximates \(W\) on an overparameterized data set. Other related approaches are investigated in, e.g., \cite{hubara2017quantized}.
    
    In a similar vein, \cite{wang2017fixed} consider learning a quantized factorization of the weight matrix \(W = XDY\), where the matrices \(X, Y\) are ternary matrices with entries \(\{-1, 0, 1\}\) and \(D\) is a full-precision diagonal matrix.  While in general this is a NP-hard problem authors use a greedy approach for constructing \(X, Y, D\) inspired by the work of \cite{kolda1998semidiscrete}. They provide simulations to demonstrate the efficacy of this method on a few pre-trained models, yet no theoretical analysis is provided for the efficacy of this framework. We would like to remark that the work  \cite{kueng2019binary} gives a framework for computing factorizations of \(W\) when \(\mathrm{rank}(W) = r\) as \(W = SA \in \R^{n\times m}\), where \(S \in \{-1,1 \}^{n \times r}, A \in \R^{r\times m}\). The reason this work is intruiging is that it does offer a means for compressing the weight matrix \(W\) by storing a smaller analog matrix \(A\) and a binarized matrix \(S\) though it does not offer nearly as much compression as if we were to replace \(W\) by a fully quantized matrix \(Q\). Indeed, the matrix \(A\) is not guaranteed to be binary or admit a representation with a low-complexity quantization alphabet. Nevertheless, \cite{kueng2019binary}  give conditions under which such a factorization exists and propose an algorithm which provably constructs \(S, A\) using semi-definite programming. They extend this analysis to the case when \(W\) is the sum of a rank \(r\) matrix and a sparse matrix but do not establish robustness of their factorization to more general noise models.
    
    Extending beyond the case where the quantization alphabet is fixed \text{a priori}, \cite{gong2014compressing} propose learning a codebook through vector quantization to quantize only the dense layers in a convolutional neural network. This stands in contrast to our work where we quantize all layers of a network. They consider clustering weights using \(k\)-means clustering and using the centroids of the clusters as the quantized weights. Moreover, they consider three different methods of clustering, which include clustering the neurons as vectors, groups of neurons thought of as sub-matrices of the weight matrix, and quantizing the neurons and the successive residuals between the cluster centers and the neurons. Beyond the fact that this work does not consider quantizing the convolutional layers, there is the additional shortcoming that clustering the neurons or groups thereof requires choosing the number of clusters in advance and choosing a maximal number of iterations to stop after. Our algorithm gives explicit control over the alphabet in advance, requires tuning only the radius of the quantization alphabet, and runs in a fixed number of iterations. Similar to the above work, we make special mention of Deep Compression by \cite{han2015deep}. Deep Compression seems to enjoy compressing large networks like AlexNet without sacrificing their empirical performance on data sets like ImageNet. There, authors consider first pruning the network and quantizing the values of the (scalar-valued) weights in a given layer using \(k\)-means clustering. This method applies both to fully connected and convolutional layers. An important drawback of this quantization procedure is that the network must be retrained, perhaps multiple times, to fine tune these learned parameters. Once these parameters have been fine tuned, the weight clusters for each layer are further compressed using Huffman coding. We further remark that quantizing in this fashion is sensitive to the initialization of the cluster weights.

\section{Algorithm and Intuition}\label{sec: algorithm and intuition}
     Going forward we will consider neural networks without bias vectors. This assumption may seem restrictive, but in practice one can always use MSQ with a big enough bit budget to control the quantization error for the bias. Even better, one may simply embed the $m$ dimensional data/activations $x$ and weights $w$ into an $m+1$ dimensional space via $x \mapsto (x,1)$ and $w \mapsto (w,b)$ so that $w^Tx +b = (w,b)^T(x,1)$. In other words, the bias term can simply be treated as an extra dimension to the weight vector, so we will henceforth ignore it. Given a trained neural network \(\Phi\) with its associated weight matrices \(W^{(\ell)}\) and a data set \(X \in \R^{m \times N_0}\), our goal is to construct quantized weight matrices \(Q^{(\ell)}\) to form a new neural network \(\widetilde{\Phi}\) for which \(\|\Phi(X) - \widetilde{\Phi}(X)\|_F\) is small. For simplicity and ease of exposition, we will focus on the extreme case where the weights are constrained to the ternary alphabet \(\{-1, 0, 1\}\), though there is no reason that our methods cannot be applied to arbitrary alphabets.
    
     Our proposed algorithm will quantize a given neuron independently of other neurons. Beyond making the analysis easier this has the practical benefit of allowing us to easily parallelize quantization across neurons in a layer. If we denote a neuron as \(w \in \R^{N_{\ell}}\), we will sucessively quantize the weights in \(w\) in a greedy data-dependent way. Let \(X \in \R^{m \times N_0}\) be our data matrix, and let \(\Phi^{(\ell-1)}, \widetilde\Phi^{(\ell-1)}\) denote the analog and quantized neural networks up to layer \(\ell-1\) respectively. 
     
     In the first layer, the aim is to achieve \( Xq = \sum\limits_{t=1}^{N_0}{q_{t} X_t }\approx \sum\limits_{t=1}^{N_0}{w_{t} {X}_t } =  Xw\)  by selecting, at the \(t^{th}\) step,  $q_{t}$ so  the running sum $\sum\limits_{j=1}^{t}{q_{j} X_j}$ tracks its analog $\sum\limits_{j=1}^{t}{w_{j} X_j}$ as well as possible in an $\ell_2$ sense. That is, at the $t^{th}$ iteration, we set
    \[ q_t := {\argmin_{p\in\{-1,0,1\}}}
    \| \sum\limits_{j=1}^{t}{w_j} X_j - \sum\limits_{j=1}^{t-1}q_{j} X_j - pX_t \|_2^2
    .\]
     It will be more amenable to analysis, and to implementation, to instead consider the equivalent dynamical system where we quantize neurons in the first layer using
    \begin{align}\label{eq: first layer dynamical system}
        u_0 :&= 0 \in \R^{m}, \nonumber\\
        q_t :&= \argmin_{p \in \{-1, 0, 1\}}\|u_{t-1} + w_t X_t - p X_t \|_2^2,\\
        u_t :&= u_{t-1} + w_t X_t - q_t X_t. \nonumber
    \end{align}
    One can see, using a simple substitution, that $u_t = \sum_{j=1}^t(w_j X_j - q_j X_j)$ is the error vector at the $t^{th}$ step. Controlling it will be a main focus in our error analysis. An interesting way of thinking about \eqref{eq: first layer dynamical system} is by imagining the analog, or unquantized, walk as a drunken walker and the quantized walk is a concerned friend chasing after them. The drunken walker can stumble with step sizes \(w_t\) along an avenue in the direction \(X_t\) but the friend can only move in steps whose lengths are encoded in the alphabet \(\Ac\).
    
    In the subsequent hidden layers, we follow a slightly modified version of \eqref{eq: first layer dynamical system}. Letting \(Y := \Phi^{(\ell-1)}(X)\), \(\widetilde{Y} := \widetilde{\Phi}^{(\ell-1)}(X) \in \R^{m \times N_{\ell}}\), we quantize neurons in layer \(\ell\) by
    \begin{align}\label{eq: hidden layer dynamical system}
        u_0 :&= 0 \in \R^{m}, \nonumber\\
        q_t :&= \argmin_{p \in \{-1, 0, 1\}}\|u_{t-1} + w_t Y_t - p \widetilde{Y}_t \|_2^2,\\
        u_t :&= u_{t-1} + w_t Y_t - q_t \widetilde{Y}_t. \nonumber
    \end{align}
    We say the vector \(q \in \R^{N_\ell}\) is the quantization of \(w\). In this work, we will provide a theoretical analysis for the behavior of \eqref{eq: first layer dynamical system} and leave analysis of \eqref{eq: hidden layer dynamical system} for future work. To that end, we re-emphasize the critical role played by \emph{the state variable} \(u_t\) defined in \eqref{eq: first layer dynamical system}. Indeed, we have the identity \(\|Xw - Xq\|_2 = \|u_{N_0}\|_2\). That is, the two neurons \(w, q\) act approximately the same on the batch of data \(X\) only provided the state variable \(\|u_{N_0}\|_2\) is well-controlled. Given bounded input \(\{(w_t, X_t)\}_{t}\), systems which admit uniform upper bounds on \(\|u_t\|_2\) will be referred to as \textit{stable}. When the \(X_t\) are random, and in our theoretical considerations they will be, we remark that this is a much stronger statement than proving convergence to a limiting distribution which is common, for example, in the Markov chain literature. For a broad survey of such Markov chain techniques, one may consult \cite{meyn2012markov}. The natural question remains: is the system \eqref{eq: first layer dynamical system} stable?
    Before we dive into the machinery of this dynamical system, we would like to remark that there is a concise  form solution for \(q_t\). Denote the greedy ternary quantizer by \(\Qc: \R \to \{-1, 0, 1\}\) with \[\Qc(z) = \argmin_{p \in \{-1, 0, 1\}} |z - p|.\] Then we have the following.
    
    \begin{lemma}\label{lem: q rounds w with dither} In the context of \eqref{eq: first layer dynamical system}, we have for any \(X_t \neq 0\) that	
    	\begin{align}\label{eq: nicer form}
    		q_t = \Qc\left( w_t + \frac{X_t^T u_{t-1}}{\|X_t\|_2^2}\right).
    	\end{align}
    \end{lemma}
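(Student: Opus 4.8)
The plan is to expand the objective in \eqref{eq: first layer dynamical system} as a one-dimensional quadratic in the scalar $p$ and recognize that, up to an additive constant independent of $p$, it agrees with the objective defining $\Qc$ after an affine change of variable. First I would write $f(p) := \|u_{t-1} + w_t X_t - p X_t\|_2^2$ and expand the inner product to obtain
\[
f(p) = \|X_t\|_2^2\, p^2 - 2 p\, X_t^T(u_{t-1} + w_t X_t) + \|u_{t-1} + w_t X_t\|_2^2 .
\]
Since $X_t \neq 0$ we have $\|X_t\|_2^2 > 0$, so $f$ is a strictly convex quadratic in $p$, and completing the square gives $f(p) = \|X_t\|_2^2\,(p - z_t)^2 + c_t$, where
\[
z_t := \frac{X_t^T(u_{t-1} + w_t X_t)}{\|X_t\|_2^2} = w_t + \frac{X_t^T u_{t-1}}{\|X_t\|_2^2}
\]
and $c_t$ does not depend on $p$.

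From here the conclusion is immediate. Minimizing $f$ over $p \in \{-1,0,1\}$ is equivalent to minimizing $(p - z_t)^2$, equivalently $|p - z_t|$, over the same finite set, and by definition the minimizer of $|z_t - p|$ over $p \in \{-1,0,1\}$ is $\Qc(z_t)$. Hence $q_t = \Qc(z_t) = \Qc\!\left(w_t + \frac{X_t^T u_{t-1}}{\|X_t\|_2^2}\right)$, which is \eqref{eq: nicer form}.

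The only point meriting a word of care is that both $q_t$ and $\Qc$ are defined through an $\argmin$ that can in principle be multivalued; this happens exactly when $z_t \in \{-\tfrac12,\tfrac12\}$, where two alphabet elements are equidistant from $z_t$. But the tie occurs simultaneously and identically for $f$ and for $|\,\cdot - z_t|$, so the two $\argmin$ sets coincide no matter how ties are broken, and with a consistent convention both sides select the same element. Thus there is no real obstacle here: the lemma is essentially a one-line computation, and its value is organizational — it recasts the dynamical system \eqref{eq: first layer dynamical system} in the ``rounding with dither'' form \eqref{eq: nicer form}, which is the form the stability analysis will actually manipulate.
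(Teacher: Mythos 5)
Your proof is correct and is essentially the same complete-the-square argument the paper uses: both expand the objective as a quadratic in $p$, complete the square to identify the target point $w_t + X_t^T u_{t-1}/\|X_t\|_2^2$, and conclude that minimizing over $\{-1,0,1\}$ is exactly applying $\Qc$. Your added remark on tie-breaking at $z_t \in \{-\tfrac12, \tfrac12\}$ is a small but correct refinement that the paper leaves implicit.
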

    \begin{proof}
    	This follows simply by completing a square. Provided \(X_t \neq 0\), we have by the definition of \(q_t\)
    	\begin{align*}
    		q_t &= \argmin_{p \in \{-1, 0, 1\}} \| u_{t-1} + (w_t - p) X_t\|_2^2 = \argmin_{p \in \{-1, 0, 1\}} (w_t - p)^2 + 2(w_t-p) \frac{X_t^T u_{t-1}}{\|X_{t-1}\|_2^2} \\
    		&=  \argmin_{p \in \{-1, 0, 1\}} \left((w_t - p) +  \frac{X_t^T u_{t-1}}{\|X_{t-1}\|_2^2}\right)^2 -  \left(\frac{X_t^T u_{t-1}}{\|X_{t-1}\|_2^2}\right)^2. 
    	\end{align*}
    	Because the former term is always non-negative, it must be the case that the minimizer is \( \Qc\left( w_t + \frac{X_t^T u_{t-1}}{\|X_t\|_2^2}\right)\).
    \end{proof}
    
    Any analysis of the stability of \eqref{eq: first layer dynamical system} must necessarily take into account how the vectors \(X_t\) are distributed. Indeed, one can easily cook up examples which give rise to sequences of \(u_t\) which diverge rapidly. For the sake of illustration consider restricting our attention to the case when \(\|X_t\|_2 = 1\) for all \(t\). The triangle inequality gives us the crude upper bound
    \begin{align*}
        \|X(w-q)\|_2 \leq \sum_{t=1}^{N_0} |w_t - q_t| \|X_t\|_2 = \|w-q\|_1.
    \end{align*}
    Choosing \(q\) to minimize \(\|w-q\|_1\), or any \(p\)-norm for that matter, simply reduces back to the MSQ quantizer where the weights within \(w\) are quantized independently of one another, namely \(q_t = \Qc(w_t)\). It turns out that one can effectively attain this upper bound by adversarially choosing \(X_t\) to be orthogonal to \(u_{t-1}\) for all \(t\). Indeed, in that setting we have exactly the MSQ quantizer
    \begin{align*}
        q_t :&=\Qc\left( w_t + X_t^T u_{t-1}\right) = \Qc(w_t),\\
        u_t :&= u_{t-1} + (w_t - q_t)X_t.
    \end{align*}
    Consequentially, by repeatedly appealing to  orthogonality,
    \begin{align*}
        \|u_t\|_2^2 = \|u_{t-1} + (w_t - q_t)X_t\|_2^2 = \|u_{t-1}\|_2^2 + (w_t - q_t)^2\|X_t\|_2^2  = \sum_{j=1}^{t} (w_j - q_j)^2.
    \end{align*}
    \noindent
    Thus, for generic vectors $w$, and adversarially chosen $X_t$, the error $\|u_t\|_2$ scales like $\sqrt{t}$. Importantly, this adversarial construction requires knowledge of $u_{t-1}$ at ``time" $t$, in order to construct an orthogonal $X_t$. In that sense, this extreme case is rather contrived. In an opposite (but also contrived) extreme case, all of the \(X_t\) are equal, and therefore \(X_t\) is parallel to \(u_{t-1}\) for all \(t\), the dynamical system reduces to a first order greedy \(\Sigma\Delta\) quantizer
    \cite{}
    \begin{align}\label{eq: reduction to sigma delta}
        q_t &= \Qc\left( w_t + X_t^T u_{t-1} \right) = \Qc\left( w_t + \sum_{j=1}^{t-1} w_j - q_j \right), \nonumber\\
        u_t &= u_{t-1} + (w_t - q_t) X_t.
    \end{align}
    Here, when \(w_t \in [-1,1]\), one can show by induction that \(\|u_t\|_2 \leq 1/2\) for all \(t\), a dramatic contrast with the previous scenario. For more details on \(\Sigma\Delta\) quantization, see for example \cite{inose1962telemetering, daubechies2003approximating}.
    
    
    Recall that in the present context the signal we wish to approximate is not the neuron \(w\) itself, but rather \(Xw\). The goal therefore is not to minimize the error \(\|w-q\|_2\) but rather to minimize \(\|X(w -q)\|_2\), which by construction is the same as \(\|u_{N_0}\|_2\). Algebraically that means carefully selecting \(q\) so that \(w-q\) is in or very close to the kernel, or null-space, of the data matrix \(X\). This immediately suggests how overparameterization may lead to better quantization. Given \(m\) data samples stored as rows in \(X \in \R^{m \times N_0}\), having \(N_0 \gg m\) or alternatively having \(\dim(\mathrm{Span}\{x_1, \hdots, x_m\}) \ll N_0\) ensures that the kernel of \(X\) is large, and one may attempt to design $q$ so that the vector $w-q$ lies as close as possible to the kernel of $X$.

\section{Main Results}

    We are now ready to state our main result which shows that \eqref{eq: first layer dynamical system} is stable when the input data \(X\) are Gaussian. The proofs of the following theorems are deferred to Section \ref{sec: gory proofs}, as the proofs are quite long and require many supporting lemmata. 
    
    \begin{theorem}\label{thm: first layer relative error}
       Suppose \(X \in \R^{m \times N_0}\) has independent columns \(X_t \sim \Nc(0, \sigma^2 I_{m\times m})\), \(w \in \R^{N_0}\) is independent of \(X\) and satisfies \(w_t \in [-1,1]\) and \(\mathrm{dist}(w_t, \{-1, 0, 1\}) > \eps\) for all \(t\). Then, with probability at least \(1 - C\exp(-c m\log(N_0))\) on the draw of the data \(X\), if $q$ is selected according to \eqref{eq: first layer dynamical system} we have that
        \begin{align}\label{eq: relative training error}
            \frac{\|Xw - Xq\|_2}{\|Xw\|_2} \lesssim \frac{\sqrt{m}\log(N_0)}{\|w\|_2},
        \end{align}
        where \(C, c > 0\) are constants that depend on \(\eps\) in a manner that is made explicit in the statement of Theorem \ref{thm: first layer gory details}.
    \end{theorem}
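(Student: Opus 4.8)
The recursion \eqref{eq: first layer dynamical system} is scale-covariant --- sending $X_t\mapsto\sigma^{-1}X_t$ leaves each $q_t$ unchanged and sends $u_t\mapsto\sigma^{-1}u_t$, while the quantity in \eqref{eq: relative training error} is scale-invariant --- so I would first reduce to $\sigma=1$. The denominator then needs only one concentration inequality: since $w$ is independent of $X$, $Xw=\sum_t w_tX_t\sim\Nc(0,\|w\|_2^2 I_m)$, hence $\|Xw\|_2\ge\tfrac12\sqrt m\,\|w\|_2$ outside an event of probability $e^{-cm}$. Everything then reduces to the \emph{stability bound}: with probability $\ge1-Ce^{-cm\log N_0}$ one has $\|u_{N_0}\|_2\lesssim m\log N_0$, and dividing by the lower bound on $\|Xw\|_2$ yields \eqref{eq: relative training error}.

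For the stability bound I would track the energy $V_t:=\|u_t\|_2^2$. Writing $\xi_t:=X_t^Tu_{t-1}/\|X_t\|_2^2$ and the rounding residual $r_t:=(w_t+\xi_t)-\Qc(w_t+\xi_t)$ of Lemma~\ref{lem: q rounds w with dither}, the update splits orthogonally, $u_t=\bigl(I-X_tX_t^T/\|X_t\|_2^2\bigr)u_{t-1}+r_tX_t$, which gives the exact identity
\begin{equation*}
V_t=V_{t-1}+\|X_t\|_2^2\bigl(r_t^2-\xi_t^2\bigr)=V_{t-1}+(w_t-q_t)^2\|X_t\|_2^2+2(w_t-q_t)\,X_t^Tu_{t-1}.
\end{equation*}
Conditioning on $\Fc_{t-1}:=\sigma(X_1,\dots,X_{t-1})$, the fresh column $X_t$ is independent of $u_{t-1}$, so $Z:=X_t^Tu_{t-1}\sim\Nc(0,V_{t-1})$, $\|X_t\|_2^2$ concentrates at $m$, and $\E[w_tZ\mid\Fc_{t-1}]=0$; the conditional drift of $V$ is therefore $\E[(w_t-q_t)^2\|X_t\|_2^2\mid\Fc_{t-1}]-2\,\E[q_tZ\mid\Fc_{t-1}]$, whose first term is always $\le4m$. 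The crucial observation is that $q_t=\Qc(w_t+Z/\|X_t\|_2^2)$ equals $\sign(Z)$ as soon as $|Z|\gtrsim m$, and $\P(|Z|\lesssim m\mid\Fc_{t-1})=\Oc(m/\sqrt{V_{t-1}})$, so $\E[q_tZ\mid\Fc_{t-1}]=\E|Z|+\Oc(m^2/\sqrt{V_{t-1}})=\sqrt{2V_{t-1}/\pi}\,(1-o(1))$ once $V_{t-1}\gg m^2$. Hence there is a threshold $\Lambda\asymp m$ with
\begin{equation*}
\|u_{t-1}\|_2\ge\Lambda\quad\Longrightarrow\quad\E[V_t-V_{t-1}\mid\Fc_{t-1}]\le4m-c'\|u_{t-1}\|_2\le-\tfrac{c'}{2}\|u_{t-1}\|_2 .
\end{equation*}
Passing to $W_t:=\|u_t\|_2=\sqrt{V_t}$ and using $W_t-W_{t-1}=(V_t-V_{t-1})/(W_t+W_{t-1})$, this $-\|u_{t-1}\|_2$ drift on $V$ turns into a \emph{constant} negative drift $\le-c''<0$ on $W$ whenever $W_{t-1}\ge\Lambda$, while $W_t-W_{t-1}$ is conditionally sub-Gaussian with an $\Oc(1)$ variance proxy (because $Z/W_{t-1}$ is a standard normal and $q_t$ depends on it only through a bounded number of thresholds). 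This is where the hypothesis $\mathrm{dist}(w_t,\{-1,0,1\})>\eps$ enters: it forces $|w_t-\Qc(w_t)|\in(\eps,\tfrac12]$, which is what pins down the constants $c',c'',\Lambda,C$ and the probability exponent as functions of $\eps$ --- most visibly in the transient regime $V_{t-1}\lesssim m^2$, where $q_t=\Qc(w_t)$ typically and the per-step drift of $V$ is $(w_t-\Qc(w_t))^2m\in[\eps^2m,\tfrac m4]$.

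It remains to turn this into a uniform bound on $\max_{t\le N_0}W_t$. On the good event $\Omega_0$ that $\|X_t\|_2^2\in[\tfrac m2,2m]$ and $|X_t^Tu_{t-1}|\le\|u_{t-1}\|_2\sqrt{Cm\log N_0}$ for every $t\le N_0$ --- which has probability $\ge1-Ce^{-cm\log N_0}$ by $\chi^2$-concentration, a conditional Gaussian tail bound, and a union bound over the $N_0$ steps --- each increment of $W$ satisfies $|W_t-W_{t-1}|\lesssim\sqrt{m\log N_0}$ and the negative drift above $\Lambda$ is in force. Decompose the trajectory into excursions above level $\Lambda$. Using that such an excursion (i) lasts more than $\ell$ steps with probability $e^{-\Theta(\ell)}$, since its increments are sub-Gaussian with drift $\le-c''$, and (ii) given it lasts $\le\ell$ steps, rises above $\Lambda+h$ with probability $e^{-\Theta(h^2/\ell)}$ via an exponential supermartingale built from $W_t+c''t$ and Doob's inequality, one optimizes at $\ell\asymp h$ to find that a single excursion reaches $\Lambda+h$ with probability $e^{-\Theta(h)}$. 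A union bound over the at most $N_0$ excursions gives $\P\bigl(\max_{t\le N_0}W_t\ge\Lambda+C_1m\log N_0\bigr)\le N_0\,e^{-\Theta(C_1m\log N_0)}\le e^{-cm\log N_0}$ for $C_1$ large, so $\|u_{N_0}\|_2\lesssim m\log N_0$ off an event of the stated probability; with the denominator bound this proves \eqref{eq: relative training error}.

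The step I expect to be the real obstacle is the conditional drift computation underlying the two displayed implications: one must pin down $\E[q_tZ\mid\Fc_{t-1}]$ and $\E[(w_t-q_t)^2\|X_t\|_2^2\mid\Fc_{t-1}]$ \emph{uniformly} across the transient ($V_{t-1}\lesssim m^2$) and recurrent ($V_{t-1}\gtrsim m^2$) regimes, which forces a careful analysis of the joint law of $(X_t^Tu_{t-1},\|X_t\|_2^2)$ together with the nonlinear, $w_t$-dependent quantizer $g\mapsto\Qc\bigl(w_t+\|u_{t-1}\|_2\,g/\|X_t\|_2^2\bigr)$, with every constant tracked in $\eps$; a secondary nuisance is that these estimates are valid only on $\Omega_0$, so the supermartingale/excursion argument should really be run on a suitably stopped or modified process and then transferred back.
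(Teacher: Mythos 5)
Your high-level reduction is exactly the paper's: normalize $\sigma$, lower-bound the denominator via $Xw\sim\Nc(0,\|w\|_2^2I_m)$ and Gaussian norm concentration (Lemma~\ref{lem: gaussian norm concentration}), and fold everything into the stability claim $\|u_{N_0}\|_2\lesssim\sigma m\log N_0$, which is Theorem~\ref{thm: first layer gory details}. Where you diverge is in how that stability claim is proved. The paper runs a Hajek-style argument \emph{directly on the squared norm} $V_t=\|u_t\|_2^2$: Lemmas~\ref{lem: mgf q_t=0, gaussian}--\ref{lem: mgf q_t=-1 gaussian} show that, conditioned on $\Fc_{t-1}$, $U_{N_0}$, and $\|u_{t-1}\|_2\ge\beta$, the moment generating function $\E[e^{\lambda\Delta V_t}\mid\cdots]$ at the scale $\lambda\propto 1/(\sigma^2 m\log N_0)$ is $\le\rho<1$, and Theorem~\ref{thm: first layer gory details} iterates $\E[e^{\lambda V_t}\1_{U_{N_0}}]$ to get a geometric recursion. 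You instead compute the \emph{first-moment drift} of $V_t$, transfer it to a constant negative drift on $W_t=\sqrt{V_t}$, establish sub-Gaussian concentration of $\Delta W_t$ separately, and assemble these with an excursion decomposition plus a Doob/supermartingale bound. Both are drift analyses and both land on the same hard calculation — integrating Gaussians over the $q_t$-level sets of Lemma~\ref{lem: q level sets in general} — but the paper's one-shot mgf bound encodes drift and tail simultaneously, which makes the iteration in Theorem~\ref{thm: first layer gory details} clean, whereas your version is more modular at the cost of extra bookkeeping: the $V\to W$ passage (which needs something like $\E[\sqrt{V_{t-1}+\Delta V_t}\mid\Fc_{t-1}]\le\sqrt{V_{t-1}+\E[\Delta V_t\mid\Fc_{t-1}]}$), a union bound over excursions, and a stopped process to transfer estimates off $\Omega_0$. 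You have correctly and candidly flagged the drift estimate as the real obstacle; that is precisely the content of Lemmas~\ref{lem: mgf q_t=0, gaussian}--\ref{lem: mgf q_t=-1 gaussian}, which constitute the bulk of the paper's technical work.

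Two corrections worth noting. The lower bound $\|X_t\|_2^2\ge m/2$ you include in $\Omega_0$ holds per column only with probability $1-e^{-cm}$, so a union bound over $N_0$ columns gives failure probability $N_0e^{-cm}$, which is \emph{not} $\lesssim e^{-cm\log N_0}$ in the regime $N_0\gg m$ of interest; the paper's good event $U_{N_0}$ deliberately carries only the upper bound, inflated by a factor $\sqrt{\log N_0}$ so the union bound lands at $e^{-cm\log N_0}$. Fortunately your drift estimate does not actually need the lower bound (a small $\|X_t\|_2^2$ only shrinks the quantizer thresholds and makes $q_t=\sign(Z)$ \emph{more} likely), so it should simply be dropped. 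Separately, your attribution of the hypothesis $\mathrm{dist}(w_t,\{-1,0,1\})>\eps$ to the transient growth rate is a plausible story, but it is not where the paper's proof uses it: there $\eps$ enters the recurrent-regime mgf bound through the $1/w_t$ factor in \eqref{eq: mgf qt=0 final easy half space} and sets the threshold $\beta\propto 1/\eps^2$ in Theorem~\ref{thm: first layer gory details}.
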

    \begin{proof}
        Without loss of generality, we'll assume \(\sigma = 1/\sqrt{m}\) since this factor appears in both numerator and denominator of \eqref{eq: relative training error}. Theorem \ref{thm: first layer gory details} guarantees with probability at least \(1 - Ce^{-c m\log(N_0)}\) that \(\|u_{N_0}\|_2 = \|Xw - Xq\|_2 \lesssim \sqrt{m} \log(N_0)\). Using Lemma \ref{lem: gaussian norm concentration}, we have \(\|Xw\|_2 \gtrsim \|w\|_2\) with probability at least \(1 - 2\exp(-c_{norm} m)\). Combining these two results gives us the desired statement.
    \end{proof}
    \noindent
    For generic vectors \(w\) we have \(\|w\|_2 \propto \sqrt{N_0}\), so in this case Theorem \ref{thm: first layer relative error} tells us that up to logarithmic factors the relative error decays like \(\sqrt{m/N_0}\). As it stands, this result suggests that it is sufficient to have \(N_0 \gg m\) to obtain a small relative error. In Section \ref{sec: gory proofs}, we address the case where the feature data \(X_t\) lay in a \(d\)-dimensional subspace to get a bound in terms of \(d\) rather than \(m\). In other words, this suggests that the relative training error depends not on the number of training samples \(m\) but on the intrinsic dimension of the features \(d\). See Lemma \ref{lem: subspace model} for details.

    Our next result shows that the quantization error is well-controlled in the span of the training data so that the quantized weights generalize to new data.

    \begin{theorem}\label{thm: generalize}
        Define \(X, w\) and \(q\) as in the statement of Theorem \ref{thm: first layer relative error} and further suppose that \(N_0 \gg m\). Let \(X = U\Sigma V^T\) be the singular value decomposition of \(X\), and let \(z = Vg\) where \(g \sim \Nc(0, \sigma_z^2 I_{m\times m})\) is drawn independently of \(X, w\). In other words, suppose \(z\) is a Gaussian random variable drawn from the span of the training data \(x_i\). Then with probability at least \(1 - Ce^{-c m\log(N_0)} - 3\exp(-c'' m)\) we have
        \begin{align}\label{eq: generalization error}
            |z^T(w-q)|\lesssim \left(\frac{\sigma_z m}{\sigma(\sqrt{N_0} - \sqrt{m})}\right) \sigma m \log(N_0).
        \end{align}
    \end{theorem}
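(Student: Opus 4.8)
The plan is to exploit that $z$ lives in the row space of $X$, so that the only part of the quantization error $w-q$ that can be ``seen'' by $z$ is its component in $\mathrm{row}(X)$, and that component is exactly what $\|Xw-Xq\|_2=\|u_{N_0}\|_2$ controls, the (possibly large) component of $w-q$ in $\ker X$ being annihilated. Concretely, writing $z=Vg$ and using $X(w-q)=U\Sigma V^{T}(w-q)$ together with the fact that $X$ has full row rank $m$ almost surely (since $N_0\ge m$), one gets $V^{T}(w-q)=\Sigma^{-1}U^{T}X(w-q)=\Sigma^{-1}U^{T}u_{N_0}$, hence
\[
z^{T}(w-q)=g^{T}V^{T}(w-q)=g^{T}\Sigma^{-1}U^{T}u_{N_0}.
\]
Cauchy--Schwarz, $\|U^{T}\|_{\mathrm{op}}=1$, and $\|\Sigma^{-1}\|_{\mathrm{op}}=1/\sigma_{\min}(X)$ then yield the master inequality
\[
|z^{T}(w-q)|\;\le\;\frac{\|g\|_{2}\,\|u_{N_0}\|_{2}}{\sigma_{\min}(X)} .
\]

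It remains to bound the three factors on a single high-probability event. For the error norm, I would invoke Theorem~\ref{thm: first layer gory details} (the quantitative version underlying Theorem~\ref{thm: first layer relative error}); undoing the $\sigma=1/\sqrt m$ normalization used there, it gives $\|u_{N_0}\|_{2}=\|Xw-Xq\|_{2}\lesssim \sigma m\log N_0$ with probability at least $1-Ce^{-cm\log N_0}$. For $\|g\|_{2}$, since $g\sim\Nc(0,\sigma_z^{2}I_{m})$ is drawn independently of $(X,w)$ and hence of $q$, Lemma~\ref{lem: gaussian norm concentration} (equivalently $\chi^{2}$-concentration) gives $\|g\|_{2}\lesssim\sigma_z\sqrt m\le\sigma_z m$ with probability at least $1-e^{-cm}$ (note also $\|z\|_2=\|Vg\|_2=\|g\|_2$ since $V^{T}V=I_m$, which makes the ``$z$ Gaussian from the span of the data'' interpretation precise). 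For $\sigma_{\min}(X)$, the columns of $X$ are i.i.d.\ $\Nc(0,\sigma^{2}I_m)$, i.e.\ $X$ has i.i.d.\ $\Nc(0,\sigma^{2})$ entries, so a classical lower bound on the smallest singular value of a Gaussian matrix gives $\sigma_{\min}(X)\ge\sigma(\sqrt{N_0}-\sqrt m-t)$ with probability at least $1-2e^{-t^{2}/2}$; taking $t\propto\sqrt m$ and using $N_0\gg m$ yields $\sigma_{\min}(X)\gtrsim\sigma(\sqrt{N_0}-\sqrt m)$ with probability at least $1-2e^{-c''m}$. A union bound over these three events (total failure probability $Ce^{-cm\log N_0}+3e^{-c''m}$, as in the statement) and substitution into the master inequality give
\[
|z^{T}(w-q)|\;\lesssim\;\frac{\sigma_z m\cdot\sigma m\log N_0}{\sigma(\sqrt{N_0}-\sqrt m)}\;=\;\Big(\frac{\sigma_z m}{\sigma(\sqrt{N_0}-\sqrt m)}\Big)\,\sigma m\log N_0 .
\]

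I do not expect a genuinely difficult step: the content is the row-space/kernel split plus the SVD bookkeeping, and everything else is three standard tail bounds. The only ingredient not already in the excerpt is the Gaussian smallest-singular-value estimate, which is classical; the one place to be slightly careful is choosing the deviation parameter $t$ so that the resulting bound is simultaneously a constant multiple of $\sigma(\sqrt{N_0}-\sqrt m)$ --- which is exactly where the hypothesis $N_0\gg m$ is used --- and has failure probability $e^{-\Omega(m)}$, so that it is subsumed by (or at worst comparable to) the $e^{-cm\log N_0}$ term inherited from Theorem~\ref{thm: first layer relative error}. One should also note that the estimate $\|g\|_2\lesssim\sigma_z\sqrt m$ is only ever used in the weaker direction $\|g\|_2\le\sigma_z m$, so the stated bound is an honest (if slightly loose in $\sqrt m$) upper bound.
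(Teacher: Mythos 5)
Your proposal is correct, but it takes a cleaner and genuinely different route than the paper. The paper does not work through the pseudoinverse directly. Instead, it first extends the bound on $\|u_{N_0}\|_2 = \|X(w-q)\|_2$ to a bound on $|y^T(w-q)|$ uniformly over the set $X^T(B_1^{N_0}) = \{\sum_i a_i x_i : \|a\|_1\le 1\}$ via a triangle-inequality argument, then introduces the largest dilation $\alpha^*$ such that $\alpha^* z \in X^T(B_1^{N_0})$ and bounds $|z^T(w-q)| \lesssim \sigma m\log(N_0)/\alpha^*$. To lower-bound $\alpha^*$ it solves $X^T h = z$, uses $\alpha^* \ge \|h\|_1^{-1}$, and estimates $\|h\|_1^{-1}$ via $\sigma_{\min}(X^T)$ together with the crude comparison $\min_{\|y\|_1=1}\|y\|_2 = 1/\sqrt m$. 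Your argument instead exploits directly that $z$ lies in the row space of $X$, writing $z^T(w-q)=g^T\Sigma^{-1}U^T u_{N_0}$ and applying Cauchy--Schwarz with $\|\Sigma^{-1}\|_{\mathrm{op}} = 1/\sigma_{\min}(X)$. Both proofs rest on the same three tail estimates (the drift bound from Theorem~\ref{thm: first layer gory details}, the $\chi^2$ concentration for $\|g\|_2$, and the Gaussian smallest-singular-value estimate) and the same failure probability, but you avoid the lossy $\ell_1$-to-$\ell_2$ step that the paper's $\alpha^*$-argument incurs. As you yourself note, this means your master inequality actually yields the sharper bound with $\sigma_z\sqrt m$ in place of $\sigma_z m$ in the numerator; in other words, the pseudoinverse route improves the exponent on $m$ in \eqref{eq: generalization error} (and consequently the $m^{3/2}$ in Remark~\ref{rem: simplify generalization error} would drop to $m$), and you only recover the stated inequality by deliberately relaxing $\sqrt m \le m$.
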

    \begin{proof}
        To begin, notice that the error bound in Theorem \ref{thm: first layer gory details} easily extends to the set \(X^T(B_1^{N_0}) := \{y \in \R^{N_0} : y = \sum_{i=1}^{m} a_i x_i, \,\, \|a\|_1 \leq 1\}\) with a simple yet pessimistic argument. With probability at least \(1 - Ce^{-c m\log(N_0)}\), for any $y\in X^T(B_1^{N_0})$ one has 
        \begin{align}\label{eq: convex hull generalization}
            |y^T(w-q)| &= \left| \sum_{i=1}^{m} a_i x_i^T(w-q) \right| \leq \sum_{i=1}^{m} |a_i| |x_i^T(w-q)| \nonumber \\
            &\lesssim \sum_{i=1}^{m} |a_i| \sigma m \log(N_0) \leq \sigma m \log(N_0).
        \end{align}
        Now for \(z\) as defined in the statement of this theorem define \(p := \alpha^* z\), where
        \begin{align*}
            \begin{aligned}
                \alpha^* := & \argmax_{\alpha \geq 0}
                & & \alpha \\
                & \text{subject to}
                & & \alpha z \in X^T(B_1^{N_0}).
            \end{aligned}
        \end{align*}
        If it were the case that \(\alpha^* > 0\) then we could use \eqref{eq: convex hull generalization} to get the bound
        \begin{align*}
            |z^T(w-q)| = \frac{1}{\alpha^* } \|p^T X (w-q)\|_2 \lesssim \frac{\sigma m \log(N_0)}{\alpha^*}.
        \end{align*}
        So, it behooves us to find a strictly positive lower bound on \(\alpha^*\). By the assumption that \(z = V g\), there exists \(h\in \R^m\) so that \(X^T h = z\). Since \(N_0 > m\), \(X^T\) is injective almost surely and therefore \(h\) is unique. Setting \(v := \|h\|_1^{-1} h\), observe that \(X^Tv = \|h\|_1^{-1} z\) and \(\sum_{i=1}^{m} v_i = 1\). It follows that \(\alpha^* \geq \|h\|_1^{-1}\). To lower bound \(\|h\|_1^{-1}\), note 
        \begin{align}
            \|h\|_1^{-1} \|z\|_2 &= \|X^T v\|_2 \geq \min_{\|y\|_1 = 1}\|X^T y\|_{2} \geq \left(\min_{\|\eta\|_2 = 1}\|X^T \eta\|_{2}\right) \min_{\|y\|_1 = 1} \|y\|_2 \nonumber\\
            & \gtrsim \sigma (\sqrt{N_0} - \sqrt{m}) \min_{\|y\|_1 = 1} \|y\|_2 = \frac{\sigma(\sqrt{N_0} - \sqrt{m})}{\sqrt{m}}.
        \label{eq: h1z2}\end{align}
        The penultimate inequality in the above equation follows directly from well-known bounds on the singular values of isotropic subgaussian matrices that hold with probability at least \(1-2\exp(-c'm)\) (see \citealt{vershynin2018high}). To make the argument explicit, note that \(X^T = \sigma G\), where \(G \in \R^{N_0 \times m}\) is a matrix whose rows are independent and identically distributed gaussians with \(\E[g_ig_i^T] = I_{m\times m}\) and are thus isotropic. Using Lemma \ref{lem: gaussian norm concentration} we have with probability at least \(1-\exp(-c_{norm} m /4)\) that \(\|z\|_2 = \|Vg\|_2 = \|g\|_2 \lesssim \sigma_z \sqrt{m}\). Substituting in \eqref{eq: h1z2}, we have
        \begin{align*}
          \|h\|_1^{-1} \gtrsim \frac{\sigma(\sqrt{N_0} - \sqrt{m})}{\sigma_z m}.
        \end{align*}
        Therefore, putting it all together, we have with probability at least \(1 - Ce^{-c m\log(N_0)} - 3\exp(-c'' m)\)
        \begin{align*}
            |z^T(w-q)| \lesssim \left(\frac{\sigma_z m}{\sigma(\sqrt{N_0} - \sqrt{m})}\right) \sigma m \log(N_0).
        \end{align*}
    \end{proof}
    \begin{remark}\label{rem: simplify generalization error}
        In the special case when \(\sigma_z = \sigma \sqrt{N_0/m}\), i.e. when \(\E[\|z\|_2^2 | V] = \E\|x_i\|_2^2 = \sigma^2 N_0\) and \(N_0 \gg m\), the bound in Theorem \ref{thm: generalize} reduces to
        \begin{align*}
            \frac{\sigma \sqrt{N_0 m}}{\sigma(\sqrt{N_0} - \sqrt{m})} \sigma m \log(N_0) \lesssim \sigma m^{3/2} \log(N_0).
        \end{align*}
        Furthermore, when the row data are normalized in expectation, or when \(\sigma^2 = N_0^{-1}\), this bound becomes \(\frac{m^{3/2} \log(N_0)}{\sqrt{N_0}}\).
    \end{remark}
    \begin{remark}
        Under the low-dimensional assumptions in Lemma \ref{lem: subspace model}, the bound \eqref{eq: generalization error} and the discussion in Remark \ref{rem: simplify generalization error} apply when \(m\) is replaced with \(d\).
    \end{remark}
    \begin{remark}
        The context of Theorem \ref{thm: generalize} considers the setting when the data are overparameterized, and there are fewer training data points used than the number of parameters. It is natural to wonder if better generalization bounds could be established if many training points were used to learn the quantization. In the extreme setting where \(m \gg N_0\), one could use a covering or \(\eps\)-net like argument. Specifically, if a new sample \(z\) were \(\varepsilon\) close to a training example \(x\), then \(|(z-x)^T w| \leq \|z-x\| \|w\| \lesssim \varepsilon \sqrt{N_0}.\) Such an argument could  be done easily when the number of training points is  large enough that it leads to a small $\varepsilon$. On the other hand, the curse of dimensionality stipulates that for this argument to work it would require an exponential number of training points, e.g., of order $(\frac{1}{\varepsilon})^d$ if the training data were in a \(d\)-dimensional subspace and did not exhibit any further structure.  We choose to focus on the overparametrized setting instead, but think that investigating the ``intermediate" setting, where one has more training data coming from a structured $d$-dimensional set than parameters, is an interesting avenue for future work.
    \end{remark}

Our technique for showing the stability of \eqref{eq: first layer dynamical system}, i.e.,  the boundedness of $\|X(w-q)\|_2$, relies on tools from drift analysis. Our analysis is inspired by the works of \cite{pemantle1999moment} and \cite{hajek1982hitting}. Given a real valued stochastic process \(\{Y_t\}_{t\in \N}\), those authors give conditions on the increments \(\Delta Y_t := Y_t - Y_{t-1}\) to uniformly bound the moments, or moment generating function, of the iterates \(Y_t\). These bounds can then be transformed into a bound in probability on an individual iterate \(Y_t\) using Markov's inequality. Recall that we're interested in bounding the state variable \(u_t\) induced by the system \eqref{eq: first layer dynamical system} which quantizes the first layer of a neural network. In situations like ours it is natural to analyze the increments of \(u_t\) since the innovations \((w_t, X_t)\) are jointly independent. To invoke the results of \cite{pemantle1999moment, hajek1982hitting} we'll consider the associated stochastic process \(\{\|u_t\|_2^2\}_{t\in[N_0]}\). Beyond the fact that our intent is to control the norm of the state variable, it turns out that stability analyses of vector valued stochastic processes typically involve passing the process through a real-valued and oftentimes quadratic function known as a Lyapunov function. There is a wide variety of stability theorems which require demonstrating certain properties of the image of a stochastic process under a Lyapunov function. For example, Lyapunov functions play a critical role in analyzing Markov chains as detailed in \cite{menshikov2016non}. However, there are a few details which preclude us from using one of these well-known stability results for the process \(\{\|u_t\|_2^2\}_{t\in[N_0]}\). First, even though the innovations \((w_t, X_t)\) are jointly independent the increments 
    \begin{align}\label{eq: drift first layer}
        \Delta \|u_t\|_2^2 = (w_t - q_t)^2 \|X_t\|_2^2 + 2(w_t - q_t)\langle X_t, u_{t-1} \rangle
    \end{align}
 have a dependency structure encoded by the bit sequence \(q\). In addition to this, the bigger challenge in the analysis of \eqref{eq: first layer dynamical system} is the discontinuity inherent in the definition of \(q_t\). Addressing this discontinuity in the analysis requires carefully handling the increments on the events where \(q_t\) is fixed.

    Based on our prior discussion, towards the end of Section \ref{sec: algorithm and intuition}, it would seem  that for generic data sets the stability of \eqref{eq: first layer dynamical system} lies somewhere in between the behavior of MSQ and \(\Sigma\Delta\) quantizers, and that behavior crucially depends on the ``dither" terms \(X_t^T u_{t-1}\). For the sake of analysis then, we will henceforth make the following assumptions.

    \begin{assumption}\label{assum: data independent of weights}
    The sequence \((w_t, X_t)_t\) defined on the probability space \((\Omega, \Fc, \P)\) is adapted to the filtration \(\Fc_t\). Further, all \(X_t\) and \(w_t\) are jointly independent.
    \end{assumption}
    
    \begin{assumption}\label{assum: weights uniformly bounded}
    \(\|W^{(\ell)}\|_{\infty} = \sup_{i,j} |W_{i,j}^{(\ell)}| \leq 1.\)
    \end{assumption}
    
    \noindent
    Assumption \ref{assum: data independent of weights}'s stipulation that the \(X_t\) are independent of the weights is a simplifying relaxation, and our proof technique handles the case when the \(w_t\) are deterministic. The joint independence of the \(X_t\) could be realized by splitting the global population of training data into two populations where one is used to train the analog network and another to train the quantization. In the hypotheses of Theorem \ref{thm: first layer relative error} it is also assumed that the entries of the weight vector \(w_t\) are sufficiently separated from the characters of the alphabet \(\{-1, 0, 1\}\). We want to remark that this is simply an artifact of the proof. In succinct terms, the proof strategy relies on showing that the moment generating function of the increment \(\Delta \|u_t\|_2^2\) is strictly less than \(1\) conditioned on the event that \(\|u_{t-1}\|_2\) is sufficiently large. In the extreme case where the weights are already quantized to \(\{-1, 0, 1\}\), this aforementioned event is the empty set since the state variable \(u_t\) is identically the zero function. As such, the conditioning is ill-defined. To avoid this technicality, we assume that the neural network we wish to quantize is not already quantized, namely \(\mathrm{dist}(w_t, \{-1, 0, 1\}) > \eps\) for some \(\eps > 0\) and for all \(t \in [N_{\ell}]\). The proof technique could easily be adapted to the case where the \(w_t\) are deterministic and this hypothesis is violated for \(O(1)\) weights with only minor changes to the main result, but we do not include these modifications to keep the exposition as clear as possible. 
    Assumption \ref{assum: weights uniformly bounded} is quite mild, and can be realized by scaling all neurons in a given layer by \(\|W\|_{\infty}^{-1}\). Choosing the ternary vector \(q\) according to the scaled neuron \(\|W\|_{\infty}^{-1} w\), any bound of the form \(\|X(\|W\|_{\infty}^{-1} w - q)\|_2\leq \alpha\) immediately gives the bound \(\|X(w - \|W\|_{\infty} q)\|_2 \leq \alpha \|W\|_{\infty}\). In other words, at run time the network can use the scaled ternary alphabet \(\{-\|W\|_{\infty}, 0, \|W\|_{\infty}\}\).

\section{Numerical Simulations}\label{sec: numerics}
We present three stylized examples which show how our proposed quantization algorithm affects classification accuracy on three benchmark data sets. In the following tables and figures, we'll refer to our algorithm as Greedy Path Following Quantization, or GPFQ for short. We look at classifying digits from the MNIST data set using a multilayer perceptron, classifying images from the CIFAR10 data set using a convolutional neural network, and finally looking at classifying images from the ILSVRC2012 data set, also known as ImageNet, using the VGG16 network \citep{simonyan2014very}. We trained both networks using Keras \citep{chollet2015keras} with the Tensorflow backend on a a 2020 MacBook Pro with a M1 chip and 16GB of RAM. Note that for the first two experiments our aim here is not to match state of the art results in training the unquantized neural networks. Rather, our goal is to demonstrate that given a trained neural network, our quantization algorithm yields a network that performs similarly. 
Below, we mention our design choices for the sake of completeness, and to demonstrate that our quantization algorithm does not require any special engineering beyond what is customary in neural network architectures. We have made our code available on GitHub at \url{https://github.com/elybrand/quantized_neural_networks}.

Our implementations for these simulations differ from the presentation of the theory in a few ways. First, we do not restrict ourselves to the particular ternary alphabet of \(\{-1, 0, 1\}\). In practice, it is much more useful to replace this with the equispaced alphabet \(\Ac = \alpha \times \{ -1 + \frac{2j}{M-1} : j \in \{0, 1,\hdots, M-1\}\} \subset [-\alpha, \alpha]\), where \(M\) is fixed in advance and \(\alpha\) is chosen by cross-validation. Of course, this includes the ternary alphabet \(\{-\alpha, 0, \alpha\}\) as a special case. The intuition behind choosing the alphabet's radius \(\alpha\) is to better capture the dynamic range of the true weights. For this reason we choose for every layer \(\alpha_{\ell} = C_{\alpha} \mathrm{median}(\{|W^{(\ell)}_{i, j}|\}_{i,j})\) where the constant \(C_{\alpha}\) is fixed for all layers and is chosen by cross-validation. 
Thus, the  cost associated with allowing general alphabets \(\Ac\) is storing a floating point number for each layer (i.e., $\alpha_\ell$) and \(N_{\ell} \times N_{\ell + 1}\) bit strings of length \(\log_2(2M+1)\) per layer as compared to \(N_{\ell} \times N_{\ell + 1}\) floats per layer in the unquantized setting. 

\subsection{Multilayer Perceptron with MNIST}

We trained a multilayer perceptron to classify MNIST digits ($28\times 28$ images) with two hidden layers. The first layer has 500 neurons, the second has 300 neurons, and the output layer has 10 neurons. We also used batch normalization layers \citep{ioffe2015batch} after each hidden layer and chose the ReLU activation function for all layers except the last where we used softmax. We trained the unquantized network on the full training set of \(60,000\) digits without any preprocessing. \(20\%\) of the training data was used as validation during training. We then tested on the remaining \(10,000\) images not used in the training set. We used categorical cross entropy as our loss function during training and the Adam optimizer---see \cite{kingma2014adam}---for 100 epochs with a minibatch size of 128. After training the unquantized model we used \(25,000\) samples from the training set to train the quantization. We used the same data to quantize each layer rather than splitting the data for each layer. For this experiment we restricted the alphabet to be ternary and  cross-validated over the alphabet scalar \(C_{\alpha} \in \{1, 2, \hdots, 10\}\). The results for each choice of \(C_\alpha\) are displayed in Figure \ref{fig: MNIST alphabet scalar}. As a benchmark we compared against a network quantized using MSQ, so each weight was quantized to the element of $\mathcal{A}$ that is closest to it. As we see in Figure \ref{fig: MNIST alphabet scalar}, the MSQ quantized network exhibits a high variability in its performance as a function of the alphabet scalar, whereas the GPFQ quantized network exhibits more stable behavior. Indeed, for a number of consecutive choices of  $C_\alpha$ the performance of the GPFQ quantized network was close to its unquantized counterpart. To illustrate how accuracy was affected as subsequent layers were quantized, we ran the following experiment. First, we chose the best alphabet scalar \(C_\alpha\) for each of the MSQ and GPFQ quantized networks separately. We then measured the test accuracy as each subsequent layer of the network was quantized, leaving the later ones unchanged. The median time it took to quantize a network was 288 seconds, or about 5 minutes. The results for MSQ and GPFQ are shown in Figure \ref{fig: MNIST layer accuracies}. Figure \ref{fig: MNIST layer accuracies} demonstrates that GPFQ is able to ``error correct" in the sense that quantizing a later layer can correct for errors introduced when quantizing previous ones. We also remark that in this setting we replace 32 bit floating point weights with \(\log_2(3)\) bit weights. Consequentially, we have compressed the network by a factor of approximately \(20\), and yet the drop in test accuracy for GPFQ was minimal. Further, this quick calculation assumes we use \(\log_2(3)\) bits to represent those weights which are quantized to zero. However, there are other important consequences for setting weights to zero. From a hardware perspective, the benefit is that forward propagation requires less energy due to there being fewer connections between layers. From a software perspective, multiplication by zero is an incredibly stable operation. 

\subsection{Convolutional Neural Network with CIFAR10}

Even though our theory was phrased in the language of multilayer perceptrons it is easy to rephrase it using the vocabulary of convolutional neural networks. Here, neurons are kernels and the data are patches from the full images or their feature data in the hidden layers. These patches have the same dimensions as the kernel. Matrix convolution is defined in terms of Hilbert-Schmidt inner products between the kernel and these image patches. In other words, if we were to vectorize both the kernel and the image patches then we could take the usual inner product on vectors and reduce back to the case of a multilayer perceptron. This is exactly what we do in the quantization algorithm. Since every channel of the feature data has its own kernel we quantize each channel's kernel independently.

\begin{figure}[h!tbp]
    \centering
    \begin{subfigure}{.5\textwidth}
        \centering
        \includegraphics[width=\linewidth]{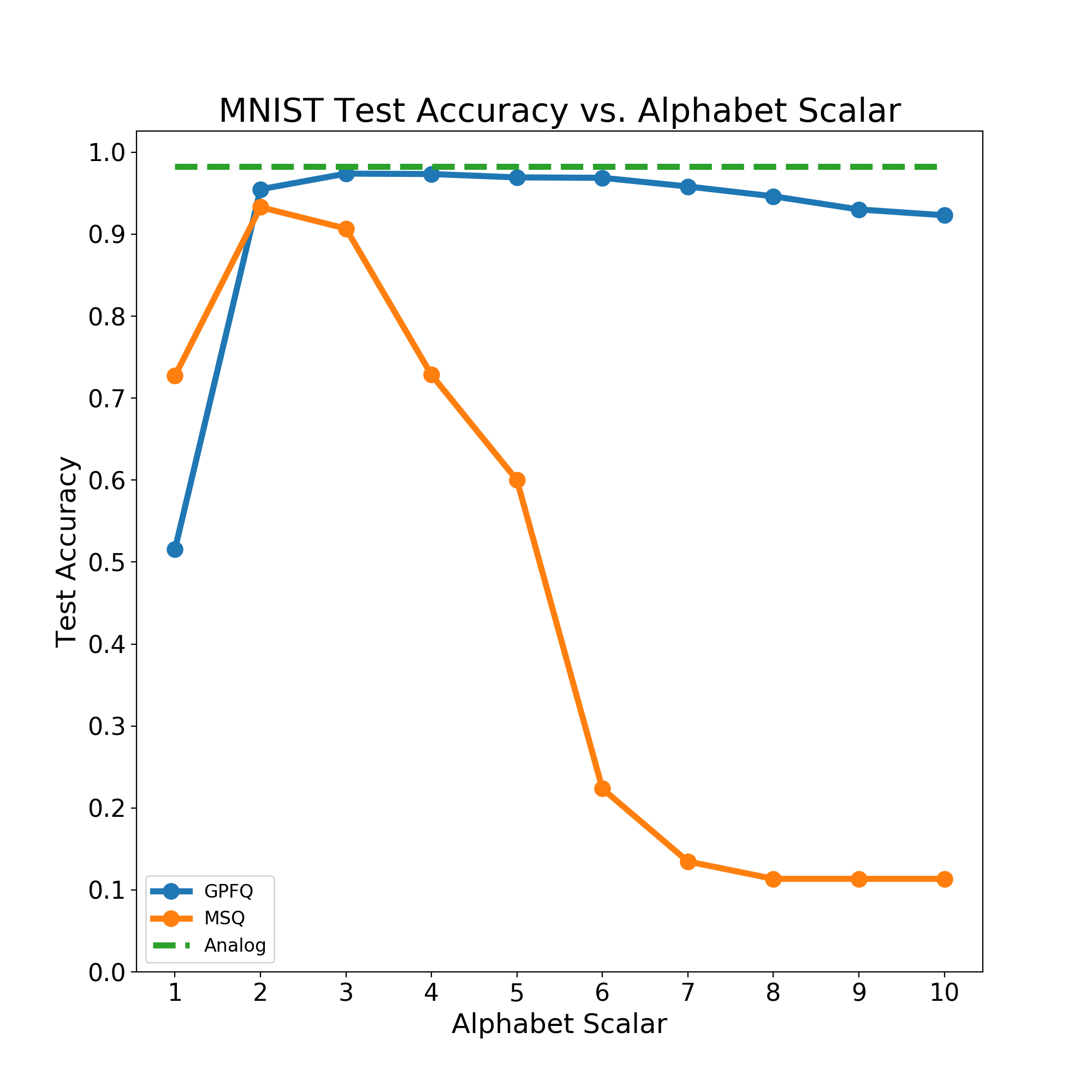}
        \caption{}
        \label{fig: MNIST alphabet scalar}
    \end{subfigure}%
    \begin{subfigure}{.5\textwidth}
         \centering
         \includegraphics[width=\linewidth]{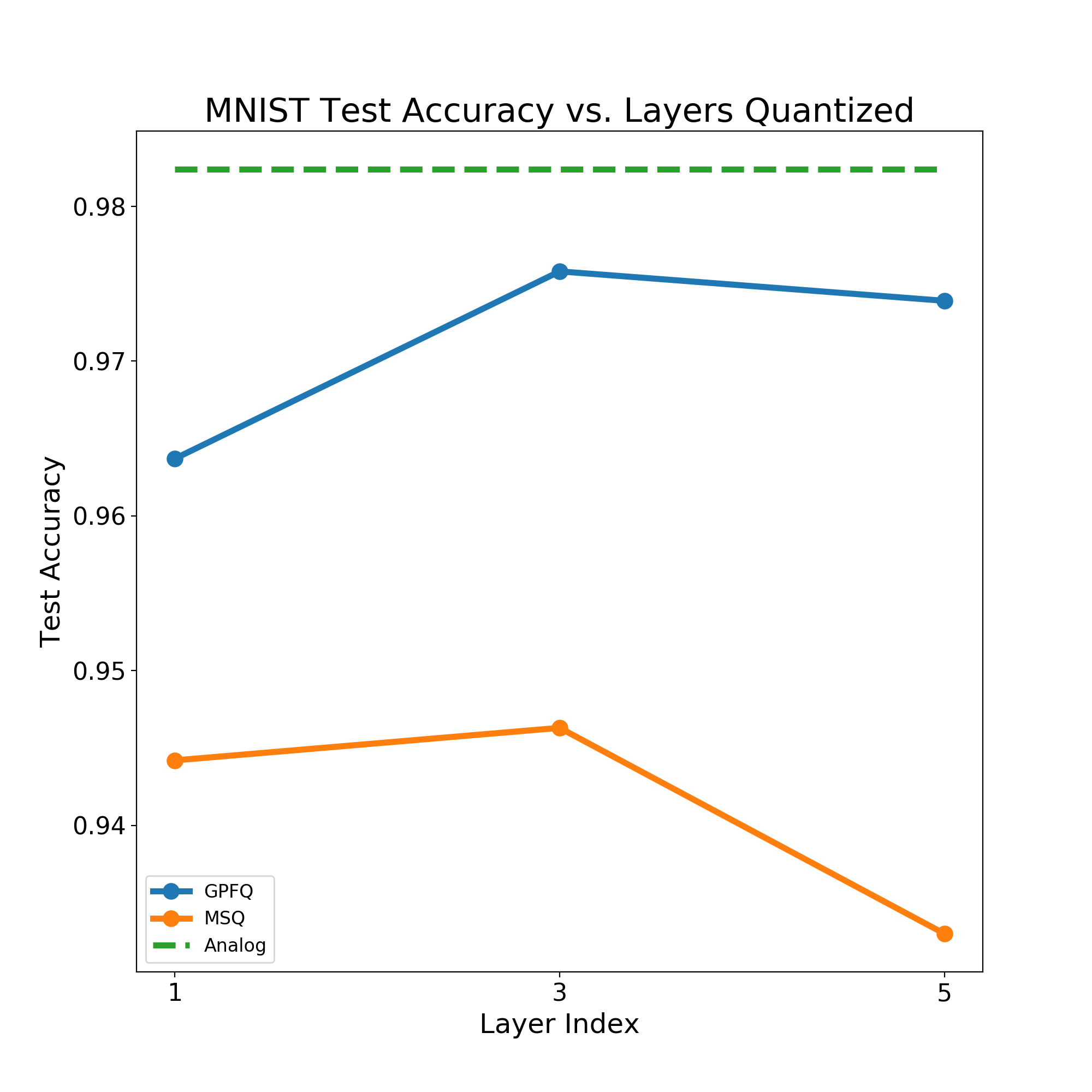}
        \caption{}
        \label{fig: MNIST layer accuracies}
    \end{subfigure}
    \caption{Comparison of GPFQ and MSQ quantized network performance on MNIST using a ternary alphabet. Figure \ref{fig: MNIST alphabet scalar} illustrates how the top-1 accuracy on the test set behaves for various alphabet scalars \(C_{\alpha}\). Figure \ref{fig: MNIST layer accuracies} demonstrates how the two quantized networks behave as each fully connected layer is successively quantized using the best alphabet scalar \(C_\alpha\) for each network. We only plot the layer indices for fully connected layers as these are the only layers we quantize.}
\end{figure}

We trained a convolutional neural network to classify images from the CIFAR10 data set with the following architecture
\begin{align*}
    2\times32C3 \to MP2 \to  2\times64C3 \to MP2 \to 2\times128C3 \to 128FC \to 10FC.
\end{align*}
\noindent
Here, \(2\times N\) C3 denotes two convolutional layers with \(N\) kernels of size \(3\times 3\), MP2 denotes a max pooling layer with kernels of size \(2\times 2\), and \(nFC\) denotes a fully connected layer with \(n\) neurons. Not listed in the above schematic are batch normalization layers which we place before every convolutional and fully connected layer except the first. During training we also use dropout layers after the max pooling layers  and before the final output layer. We use the ReLU function for every layer's activation function except the last layer where we use softmax. We preprocess the data by dividing the pixel values by \(255\) which normalizes them in the range \([0, 1]\). We augment the data set with width and height shifts as well as horizontal flips for each image. Finally, we train the network to minimize categorical cross entropy using stochastic gradient descent with a learning rate of \(10^{-4}\), momentum of \(0.9\), and a minibatch size of 64 for 400 epochs. For more information on dropout layers and pooling layers see, for example, \cite{hinton2012improving} and \cite{weng1992cresceptron}, respectively.

We trained the unquantized network on the full set of \(50,000\) training images. For training the quantization we only used the first \(5,000\) images from the training set. As we did with the multilayer perceptron on MNIST, we cross-validated the alphabet scalars \(C_{\alpha}\) over the range \(\{2, 3, 4, 5, 6\}\) and chose the best scalar for the benchmark MSQ network and the best GPFQ quantized network separately. Additionally, we cross-validated over the number of elements in the quantization alphabet, ranging over the set \(M \in \{3, 4, 8, 16\}\) which corresponds to the set of bit budgets \(\{\log_2(3), 2, 3, 4\}\). The median time it took to quantize the network using GPFQ was 1830 seconds, or about 30 minutes. The results of these experiments are shown in Table \ref{tab: CIFAR10 experiment}. In particular, the table shows that the performance of GPFQ degrades gracefully as the bit budget decreases, while the performance of MSQ drops dramatically.
\begin{table}
    \begin{tabularx}{\textwidth}{|X|X|X|X|X|}
        \multicolumn{5}{c}{CIFAR10 Top-1 Test Accuracy}\\
        \hline
        Bits          & \(C_\alpha\)     & Analog        & GPFQ    & MSQ \\
        \hline
         & 2                & 0.8922        & \bf 0.7487       & 0.1347 \\
         & 3                & 0.8922        & 0.7350       & 0.1464 \\
        \(\log_2(3)\) & 4                & 0.8922        & 0.6919       & 0.0991 \\
         & 5                & 0.8922        & 0.5627       & 0.1000 \\
         & 6                & 0.8922        & 0.3515       & 0.1000 \\\hline
                     & 2                & 0.8922        & 0.7522       & 0.2209 \\
                     & 3                & 0.8922        & \bf 0.8036       & 0.2800 \\
        2             & 4                & 0.8922        & 0.7489       & 0.1742 \\
                     & 5                & 0.8922        & 0.6748       & 0.1835 \\
                     & 6                & 0.8922        & 0.5365       & 0.1390 \\\hline
                     & 2                & 0.8922        & 0.7942       & 0.4173 \\
                     & 3                & 0.8922        & 0.8670       & 0.3754 \\
        3             & 4                & 0.8922        & \bf 0.8710       & 0.5014 \\
                     & 5                & 0.8922        & 0.8567       & 0.5652 \\
                     & 6                & 0.8922        & 0.8600       & 0.5360 \\\hline
                     & 2                & 0.8922        & 0.8124       & 0.4525 \\
                     & 3                & 0.8922        & 0.8778       & 0.7776 \\
        4             & 4                & 0.8922        & 0.8879       & 0.8443 \\
                     & 5                & 0.8922        & \bf 0.8888       & 0.8291 \\
                     & 6                & 0.8922        & 0.8810       & 0.7831 \\
        \hline
    \end{tabularx}
    \caption{This table documents the test accuracies for the analog and quantized neural networks on CIFAR10 data for the various choices of alphabet scalars \(C_\alpha\) and bit budgets.}\label{tab: CIFAR10 experiment}
\end{table}
\noindent
In this experiment, the best bit budget for both MSQ and GPFQ networks was \(4\) bits, or \(16\) characters in the alphabet. We plot the test accuracies for the best MSQ and the best GPFQ quantized network as each layer is quantized in Figure \ref{fig: CIFAR10 layer errs}. Both networks suffer from a drop in test accuracy after quantizing the second layer, but (like in the first experiment) GPFQ recovers from this dip in subsequent layers while MSQ does not. Finally, to illustrate the difference between the two sets of quantized weights in this layer we histogram the weights in Figure \ref{fig: CIFAR10 layer2 weights}.
\begin{figure}
    \centering
    \begin{subfigure}{.5\textwidth}
        \centering
        \includegraphics[width=\linewidth]{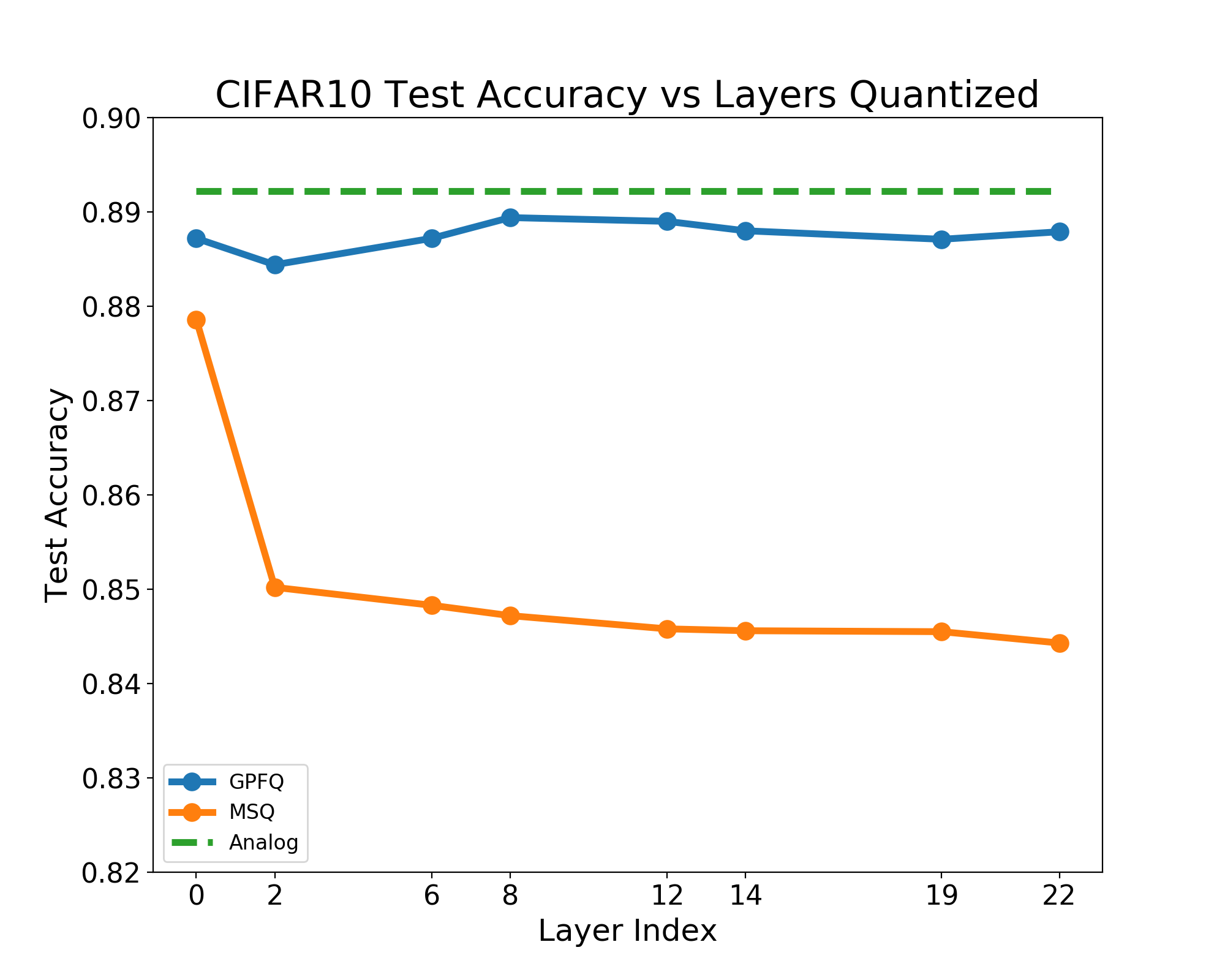}
        \caption{}
        \label{fig: CIFAR10 layer errs}
    \end{subfigure}%
    \begin{subfigure}{.5\textwidth}
         \centering
         \includegraphics[width=\linewidth]{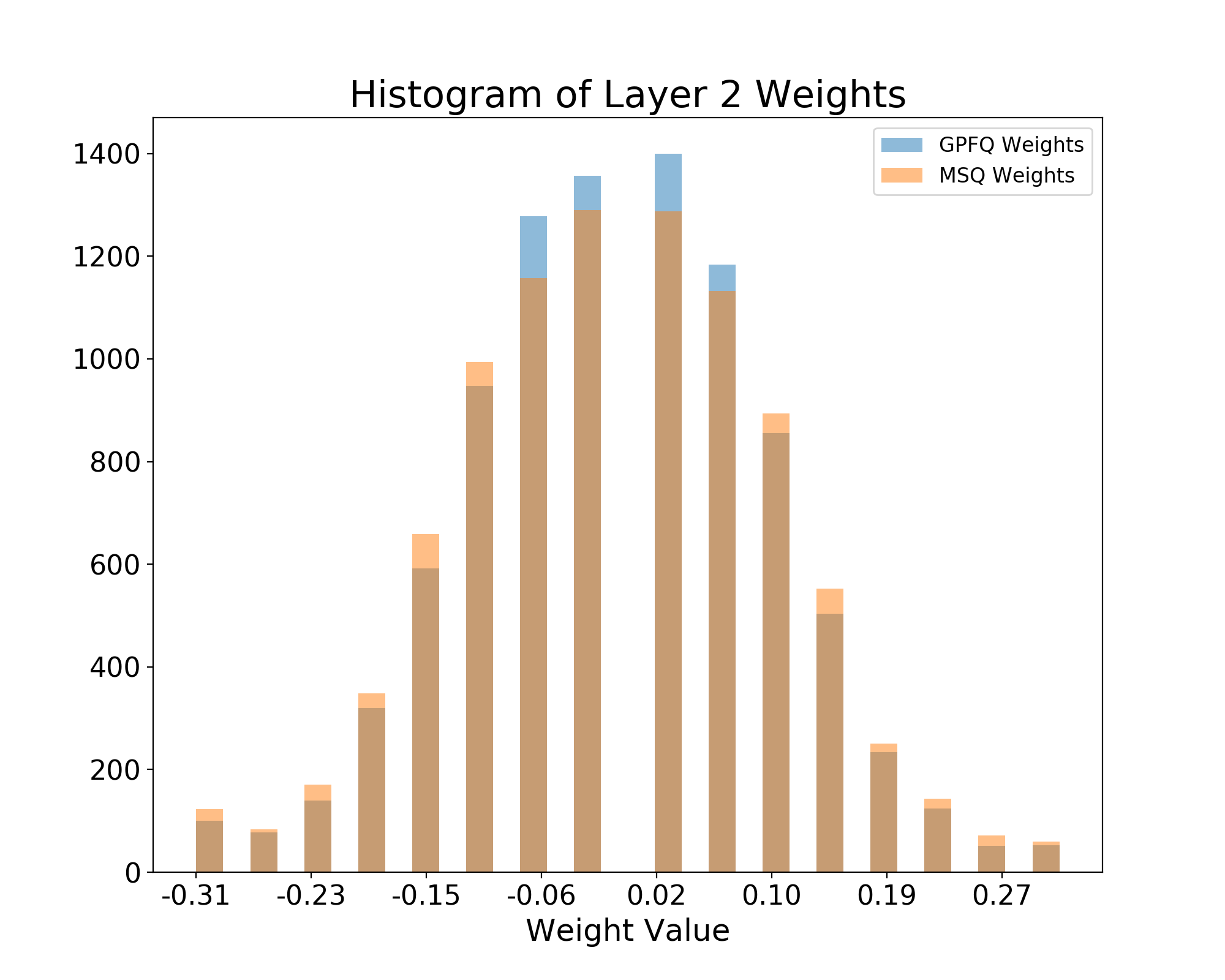}
        \caption{}
        \label{fig: CIFAR10 layer2 weights}
    \end{subfigure}
    \caption{
    Figure \ref{fig: CIFAR10 layer errs} shows how the top-1 test accuracy degrades as we quantize layers successively and leave remaining layers unquantized for the best MSQ and the best GPFQ quantized networks according to the results in Table \ref{tab: CIFAR10 experiment}. We only plot the layer indices for fully connected and convolutional layers as these are the only layers we quantize. Figure \ref{fig: CIFAR10 layer2 weights} is a  histogram of the quantized weights for the MSQ and GPFQ quantized networks at the second convolutional layer.}
\end{figure}

\subsection{VGG16 on Imagenet Data}
The previous experiments were restricted to settings where there are only 10 categories of images. To illustrate that our quantization scheme and our theory work well on more complex data sets we considered quantizing the weights of VGG16 \citep{simonyan2014very} for the purpose of classifying images from the ILSVRC2012 validation set \citep{ILSVRC15}. This data set contains 50,000 images with 1,000 categories. Since 90\% of all weights in VGG16 are in the fully connected layers, we took a similar route as \cite{gong2014compressing} and only considered quantizing the weights in the fully connected layers. We preprocessed the images in the manner that the ImageNet guidelines specify. First, we resize the smallest edge of the image to 256 pixels by using bicubic interpolation over \(4\times4\) pixel neighborhoods, and resizing the larger edge of the image to maintain the original image's aspect ratio. Next, all pixels outside the central \(224\times224\) pixels are cropped out. The image is then saved with red, green, blue (RGB) channel order\footnote{We would like to thank Caleb Robinson for outlining this procedure in his GitHub repo found at \url{https://github.com/calebrob6/imagenet_validation}.}. Finally, these processed images are further preprocessed by the function specified for VGG16 in the Keras preprocessing module. For this experiment we restrict the GPFQ quantizer to the alphabet \(\{-1, 0, 1\}\). We  cross-validate over the alphabet scalar \(C_{\alpha} \in \{2, 3, 4, 5\}\). 1500 images were randomly chosen to learn the quantization. To assess the quality of the quantized network we used 20000 randomly chosen images disjoint from the set of images used to perform the quantization and measured the top-1 and top-5 accuracy for the original VGG16 model, GPFQ, and MSQ networks. The median time it took to quantize VGG16 using GPFQ was 15391 seconds, or about 5 hours. The results from this experiment can be found in Table \ref{tab: imagenet results}. Remarkably, the best GPFQ network is able to get within \(0.65\%\) and \(0.42\%\) of the top-1 and top-5 accuracy of the analog model, respectively. In contrast, the best MSQ model can do is get within \(1.24\%\) and \(0.56\%\) of the top-1 and top-5 accuracy of the analog model, respectively. Importantly, as we saw in the previous two experiments, here again we observe a notable instability of test accuracy with respect to \(C_{\alpha}\) for the MSQ model whereas for the GPFQ model the test accuracy is more well-controlled. Moreover, just as in the CIFAR10 experiment, we see in these experiments that GPFQ networks uniformly outperform MSQ networks across quantization hyperparameter choices in both top-1 and top-5 test accuracy.
\begin{table}
    \begin{tabularx}{\textwidth}{|X|X|X|X|X|X|X|}
        \multicolumn{7}{c}{ILSVRC2012 Test Accuracy}\\
        \hline
        \(C_\alpha\) & Analog Top-1 & Analog Top-5  & GPFQ Top-1 &    GPFQ Top-5 & MSQ Top-1 & MSQ Top-5 \\
        \hline 2 & 0.7073 & 0.8977 & 0.6901 & 0.8892 & 0.68755 & 0.88785\\
        \hline 3 & 0.7073 & 0.8977 & \bf 0.70075 & \bf 0.8935 & 0.69485 & 0.8921\\
        \hline 4 & 0.7073 & 0.8977 & 0.69295 & 0.89095 & 0.66795 & 0.8713\\
        \hline 5 & 0.7073 & 0.8977 & 0.68335 & 0.88535 & 0.53855 & 0.77005\\
        \hline
    \end{tabularx}
    \caption{This table documents the test accuracy across 20000 images for the analog and quantized VGG16 networks on ILSVRC2012 data for the various choices of alphabet scalars \(C_\alpha\) using the alphabet \(\{-1, 0, 1\}\) and 1500 training images to learn the quantized weights.}\label{tab: imagenet results}
\end{table}

\section{Future Work}
Despite all of the analysis that has gone into proving stability of quantizing the first layer of a neural network using the dynamical system \eqref{eq: first layer dynamical system} and isotropic Gaussian data, there are still many interesting and unanswered questions about the performance of this quantization algorithm. The above experiments suggest that our theory can be generalized to account for non-Gaussian feature data which may have hidden dependencies between them. Beyond the subspace model we consider in Lemma \ref{lem: subspace model}, it would be interesting to extend the results to apply in the case of a manifold structure, or clustered feature data, whose intrinsic complexities can be used to improve the upper bounds in Theorem \ref{thm: first layer relative error} and Theorem \ref{thm: generalize}. Furthermore, it would be desirable to extend the analysis to address quantizing all of the hidden layers. As we showed in the experiments, our set-up naturally extends to the case of quantizing convolutional layers. Another extension of this work might consider modifying our quantization algorithm to account for other network models like recurrent networks. Finally, we observed in Theorem \ref{thm: first layer relative error} that the relative training error for learning the quantization decays like \(\log(N_0)\sqrt{m/N_0}\). We also observed in the discussion at the end of Section \ref{sec: algorithm and intuition} that when all of the feature data \(X_t\) were the same our quantization algorithm reduced to a first order greedy \(\Sigma\Delta\) quantizer. Higher order \(\Sigma\Delta\) quantizers in the context of oversampled finite frame coefficients and bandlimited functions are known to have quantization error which decays \textit{polynomially} in terms of the oversampling rate. One wonders if there exist extensions of our algorithm, perhaps with a modest increase in computational complexity, that achieve faster rates of decay for the relative quantization error. We leave all of these questions for future work.

\section{Proofs: Supporting Lemmata}

    This section  presents supporting lemmata that characterize the geometry of the dynamical system \eqref{eq: first layer dynamical system}, as well as standard results from high dimensional probability which we will use in the proof of the main technical result, Theorem \ref{thm: first layer gory details}, which appears in Section \ref{sec: gory proofs}. Outside of the high dimensional probability results, the results of Lemmas \ref{lem: q level sets in general}, \ref{lem: 1-cdf increments, pos w} and \ref{lem: 1-cdf increments, neg w} consider the behavior of the dynamical system under arbitrarily distributed data.

    \begin{lemma}\label{lem: gaussian tail bound}\cite{vershynin2018high}
        Let \(g \sim N(0,\sigma^2)\) . Then for any \(\alpha > 0\)
        \begin{align*}
            \P\left( g \geq \alpha \right) \leq \frac{\sigma}{\alpha \sqrt{2\pi}} e^{-\frac{\alpha^2}{2\sigma^2}}.
        \end{align*}
    \end{lemma}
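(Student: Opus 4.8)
The plan is to use the classical antiderivative trick for Gaussian tails. First I would reduce to the standard normal case by writing $g = \sigma Z$ with $Z \sim N(0,1)$, so that $\P(g \geq \alpha) = \P(Z \geq \alpha/\sigma)$; it then suffices to prove $\P(Z \geq \beta) \leq \frac{1}{\beta\sqrt{2\pi}} e^{-\beta^2/2}$ for every $\beta > 0$, since substituting $\beta = \alpha/\sigma$ recovers the stated bound. Alternatively one can work directly with the density $\frac{1}{\sigma\sqrt{2\pi}} e^{-t^2/(2\sigma^2)}$ of $g$; the computation is the same after an obvious rescaling.

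Next I would write the tail as $\P(Z \geq \beta) = \frac{1}{\sqrt{2\pi}} \int_\beta^\infty e^{-t^2/2}\, dt$ and observe that on the domain of integration one has $t \geq \beta > 0$, hence $1 \leq t/\beta$. Inserting this factor only enlarges the integrand, so
\[
\P(Z \geq \beta) \leq \frac{1}{\beta\sqrt{2\pi}} \int_\beta^\infty t\, e^{-t^2/2}\, dt .
\]
The purpose of the extra factor $t$ is that $t\, e^{-t^2/2}$ has the elementary antiderivative $-e^{-t^2/2}$, so the last integral evaluates to $e^{-\beta^2/2}$, giving $\P(Z \geq \beta) \leq \frac{1}{\beta\sqrt{2\pi}} e^{-\beta^2/2}$, which is the claim.

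There is no genuine obstacle here; the only point requiring care is that the bound needs $\alpha > 0$ (equivalently $\beta > 0$), which is precisely the hypothesis, since both the inequality $1 \leq t/\beta$ and the division by $\beta$ rely on positivity. Since this is a standard fact (the Mills-ratio / Chernoff-type bound for Gaussians, as in \citealt{vershynin2018high}), one could simply cite it, but the two-line argument above is self-contained.
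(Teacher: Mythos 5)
Your argument is correct, and it is exactly the classical Mills-ratio bound: reduce to the standard normal, insert the factor $t/\beta \ge 1$ on the tail domain, and integrate the resulting exact derivative $t e^{-t^2/2}$. The paper itself does not supply a proof for this lemma; it simply cites \citet{vershynin2018high} (where the same one-line argument appears), so your self-contained two-line derivation is a faithful and complete substitute.
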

    
    \begin{lemma}\label{lem: gaussian norm concentration}\cite{vershynin2018high}
        Let \(g \sim N(0, I_{m\times m})\) be an \(m\)-dimensional standard Gaussian vector. Then there exists some universal constant \(c_{norm} > 0\) so that for any \(\alpha > 0\)
        \begin{align*}
            \P\left( \left| \|g\|_2 - \sqrt{m} \right| \geq \alpha \right) \leq 2e^{-c_{norm} \alpha^2}.
        \end{align*}
    \end{lemma}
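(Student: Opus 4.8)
This is a standard fact (essentially Theorem~3.1.1 of \cite{vershynin2018high}), so I will only sketch the argument. The idea is to view $\|g\|_2$ as the value at $g$ of the map $f : \R^m \to \R$, $f(x) = \|x\|_2$, which is $1$-Lipschitz with respect to the Euclidean metric, and then invoke Gaussian concentration of measure for Lipschitz functions: if $f$ is $L$-Lipschitz and $g \sim \Nc(0, I_{m\times m})$, then $\P\big(|f(g) - \E f(g)| \geq t\big) \leq 2\exp\!\big(-t^2/(2L^2)\big)$ for all $t > 0$ (which in turn follows, e.g., from the Gaussian logarithmic Sobolev inequality or the Gaussian isoperimetric inequality). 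Applied with $f(x) = \|x\|_2$ and $L = 1$ this gives
\[
    \P\big( \big| \|g\|_2 - \E\|g\|_2 \big| \geq t \big) \leq 2 e^{-t^2/2}, \qquad t > 0 .
\]

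The remaining task is to replace $\E\|g\|_2$ by $\sqrt m$ at the cost of an absolute constant. By Jensen's inequality $\E\|g\|_2 \leq \sqrt{\E\|g\|_2^2} = \sqrt m$. For a matching lower bound, integrate the tail estimate above: $\mathrm{Var}(\|g\|_2) = \int_0^\infty \P\big(|\|g\|_2 - \E\|g\|_2| \geq \sqrt s\big)\, ds \leq \int_0^\infty 2 e^{-s/2}\, ds = 4$, hence $(\E\|g\|_2)^2 = \E\|g\|_2^2 - \mathrm{Var}(\|g\|_2) \geq m - 4$, and therefore $0 \leq \sqrt m - \E\|g\|_2 \leq \tfrac{4}{\sqrt m + \sqrt{m-4}} \leq 2$ when $m \geq 4$, while for $m \leq 3$ one trivially has $|\E\|g\|_2 - \sqrt m| \leq \sqrt m < 2$. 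So $|\E\|g\|_2 - \sqrt m| \leq C_0$ for the absolute constant $C_0 = 2$, with no dependence on $m$.

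Finally I would combine the two pieces. For $\alpha \geq 2C_0$ the triangle inequality gives $\P\big(|\|g\|_2 - \sqrt m| \geq \alpha\big) \leq \P\big(|\|g\|_2 - \E\|g\|_2| \geq \alpha - C_0\big) \leq \P\big(|\|g\|_2 - \E\|g\|_2| \geq \alpha/2\big) \leq 2 e^{-\alpha^2/8}$, while for $0 < \alpha < 2C_0$ the claimed bound holds vacuously as a probability estimate provided $2 e^{-c_{norm}(2C_0)^2} \geq 1$, i.e. $c_{norm} \leq \ln 2/(4C_0^2)$; so taking $c_{norm} := \min\{1/8,\ \ln 2/(4C_0^2)\}$ yields the lemma for all $\alpha > 0$ with a universal constant. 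An entirely elementary alternative avoids Gaussian concentration altogether: write $\|g\|_2^2 = \sum_{i=1}^m g_i^2$ as a sum of i.i.d.\ centered sub-exponential variables, apply Bernstein's inequality, and pass from $\|g\|_2^2$ to $\|g\|_2$ via $\big|\|g\|_2 - \sqrt m\big| = \big|\|g\|_2^2 - m\big|/(\|g\|_2 + \sqrt m)$, treating $\alpha \leq \sqrt m$ and $\alpha > \sqrt m$ separately. In either route the only point needing care — and the one I would watch most closely — is verifying that the final $c_{norm}$ carries no hidden dependence on $m$: for the first approach this is precisely what the $m$-free variance bound guarantees, and for the second it amounts to tracking which branch of the Bernstein $\min(\cdot,\cdot)$ is active in each regime of $\alpha$.
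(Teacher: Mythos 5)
The paper offers no proof of this lemma --- it is simply cited from \cite{vershynin2018high} --- so there is nothing of the paper's to compare against. Your argument is correct: the Lipschitz concentration bound $\P(|\|g\|_2 - \E\|g\|_2|\geq t)\leq 2e^{-t^2/2}$ is standard, the variance bound $\mathrm{Var}(\|g\|_2)\leq 4$ does give the $m$-independent control $|\E\|g\|_2-\sqrt m|\leq 2$, and the two-regime combination with $c_{norm}=\min\{1/8,\ \ln 2/16\}$ closes the argument for all $\alpha>0$. For what it's worth, Vershynin's own proof of Theorem~3.1.1 follows the second route you sketch (Bernstein for $\|g\|_2^2-m$ plus the algebraic identity $|\|g\|_2-\sqrt m| = |\|g\|_2^2-m|/(\|g\|_2+\sqrt m)$), which has the advantage of working for general sub-gaussian coordinates rather than relying on rotation invariance / Gaussian isoperimetry; your Lipschitz route is shorter in the Gaussian case and is the one most texts give first.
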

    

\begin{lemma}\label{lem: q level sets in general}
    Suppose that \(|w_t| < 1/2\). Then
	\begin{align*}
		\{X_t \in \R^{m}: q_t = 1\} &= B\left( \frac{1}{1-2w_t} u_{t-1},  \frac{1}{1-2w_t} \|u_{t-1}\|_2 \right),\\
		:&= B(\tilde{u}_{t-1}, \|\tilde{u}_{t-1}\|_2),\\
		\{X_t \in \R^{m}: q_t = -1\} &= B\left( \frac{-1}{1+2w_t} u_{t-1},  \frac{1}{1+2w_t} \|u_{t-1}\|_2 \right),\\
		:&= B(\hat{u}_{t-1}, \|\hat{u}_{t-1}\|_2)
	\end{align*}
\end{lemma}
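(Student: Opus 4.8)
The plan is to reduce everything to the closed form for $q_t$ supplied by Lemma \ref{lem: q rounds w with dither} and then complete a square. Since the two candidate balls pass through the origin but do not contain it, I may as well assume $X_t \neq 0$, so that $q_t = \Qc\bigl(w_t + \tfrac{X_t^T u_{t-1}}{\|X_t\|_2^2}\bigr)$. By the definition of the greedy ternary quantizer, $\Qc(z) = 1$ precisely when $z > 1/2$ and $\Qc(z) = -1$ precisely when $z < -1/2$; the two threshold spheres $\{z = \pm 1/2\}$ have Lebesgue measure zero and the tie-breaking convention on them is immaterial for all later integrations, so I adopt the convention that ties round toward $0$. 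Hence
\[
\{X_t : q_t = 1\} = \Bigl\{ X_t \neq 0 : w_t + \frac{X_t^T u_{t-1}}{\|X_t\|_2^2} > \tfrac12 \Bigr\}.
\]

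First I would clear the denominator: multiplying the defining inequality by the positive quantity $2\|X_t\|_2^2$ turns it into $(1 - 2w_t)\|X_t\|_2^2 - 2\, X_t^T u_{t-1} < 0$. Here the hypothesis $|w_t| < 1/2$ enters, guaranteeing $1 - 2w_t > 0$, so I can divide through and obtain $\|X_t\|_2^2 - \tfrac{2}{1-2w_t} X_t^T u_{t-1} < 0$. Completing the square in $X_t$ yields
\[
\Bigl\| X_t - \tfrac{1}{1-2w_t} u_{t-1} \Bigr\|_2^2 < \tfrac{1}{(1-2w_t)^2}\|u_{t-1}\|_2^2 = \Bigl\| \tfrac{1}{1-2w_t} u_{t-1} \Bigr\|_2^2,
\]
which is exactly $X_t \in B(\tilde u_{t-1}, \|\tilde u_{t-1}\|_2)$ with $\tilde u_{t-1} := \tfrac{1}{1-2w_t} u_{t-1}$; note that $X_t = 0$ fails this strict inequality, consistent with having excluded it.

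The case $q_t = -1$ is handled identically: $\Qc\bigl(w_t + \tfrac{X_t^T u_{t-1}}{\|X_t\|_2^2}\bigr) = -1$ iff $w_t + \tfrac{X_t^T u_{t-1}}{\|X_t\|_2^2} < -\tfrac12$, which after multiplying by $2\|X_t\|_2^2$ reads $(1+2w_t)\|X_t\|_2^2 + 2\, X_t^T u_{t-1} < 0$; since $|w_t| < 1/2$ forces $1 + 2w_t > 0$, dividing and completing the square gives $\bigl\|X_t + \tfrac{1}{1+2w_t} u_{t-1}\bigr\|_2^2 < \tfrac{1}{(1+2w_t)^2}\|u_{t-1}\|_2^2$, i.e. $X_t \in B(\hat u_{t-1}, \|\hat u_{t-1}\|_2)$ with $\hat u_{t-1} := \tfrac{-1}{1+2w_t} u_{t-1}$.

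There is no substantive obstacle here — it is an elementary completion-of-square computation — but the one point deserving a moment's care is precisely the role of the assumption $|w_t| < 1/2$: it is what makes the coefficient of $\|X_t\|_2^2$ strictly positive in each inequality, so that each level set is a genuine Euclidean ball rather than the exterior of a ball or a half-space. If $|w_t| \ge 1/2$ the sign of that coefficient can flip and the geometry degenerates, which is why the hypothesis is stated.
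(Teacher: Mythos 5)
Your proof is correct and follows essentially the same route as the paper's: invoke the closed form $q_t = \Qc\bigl(w_t + X_t^T u_{t-1}/\|X_t\|_2^2\bigr)$ from Lemma \ref{lem: q rounds w with dither}, clear the denominator, use $|w_t|<1/2$ to keep the quadratic coefficient positive, and complete the square. The only cosmetic difference is that you use strict inequalities at the quantizer thresholds while the paper writes $\geq$, which as you note is a measure-zero tie-breaking convention that affects nothing downstream.
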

\begin{proof}
    When \(q_t = 1\), \eqref{eq: nicer form} implies that
    \begin{align}\label{eq: first layer level set}
        \frac{X_t^T}{\|X_t\|_2^2}u_{t-1} \geq \frac{1}{2} - w_t 
        \iff (1-2w_t)\|X_t\|_2^2 - 2X_t^Tu_{t-1} \leq 0.
    \end{align}
    Since \(|w_t| < 1/2\), \(1-2w_t > 0\). After dividing both sides of \eqref{eq: first layer level set} by this factor, and recalling that \(\tilde{u}_{t-1} := (1-2w_t)^{-1} u_{t-1}\), we may complete the square to get the equivalent inequality
    \begin{align*}
        \left\| X_t - \tilde{u}_{t-1} \right\|_2^2 \leq  \|\tilde{u}_{t-1}\|_2^2.
    \end{align*}
    An analogous argument shows the claim for the level set \(\{X_t : q_t = -1\}\).
\end{proof}

\begin{remark}\label{re: symmetry of q_t level sets}
    When \(w > \frac{1}{2}\) or \(w < - \frac{1}{2}\), the algebra in the proof tells us that the set of \(X_t's\) for \(q_t = 1\) (resp. \(q_t = -1\)) is actually the complement of \(B\left( \tilde{u}_{t-1}, \|\tilde{u}_{t-1}\|_2 \right)\), (resp. the complement of \(B\left(\hat{u}_{t-1}, \|\hat{u}_{t-1}\|_2 \right)\)). For the special case when \(w_t = \pm 1/2\) these level sets are half-spaces.
\end{remark}

\begin{figure}
    \centering
    \includegraphics[width=0.48\textwidth]{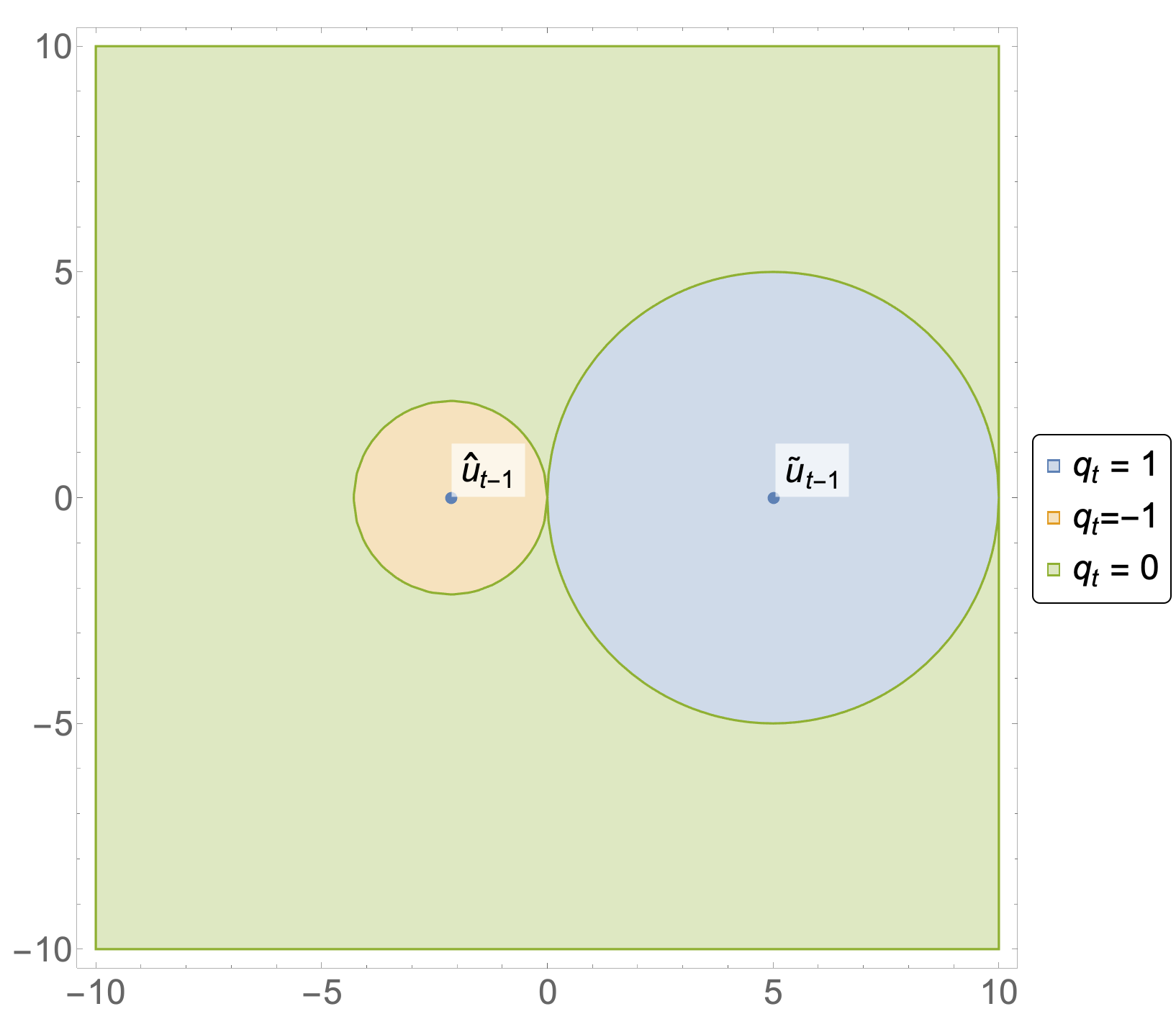}
    \hfill
    \includegraphics[width=0.48\textwidth]{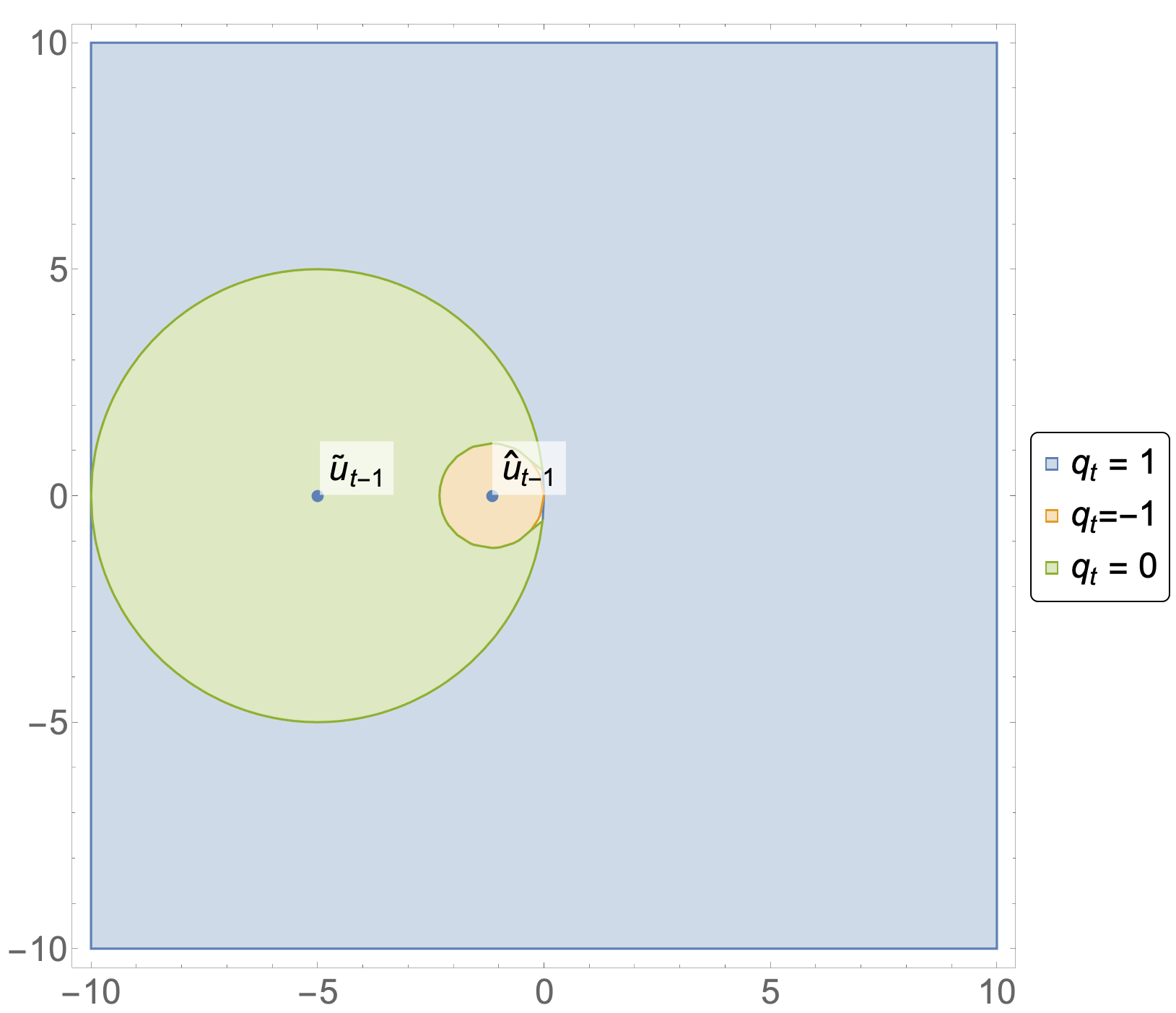}
    \caption{Visualizations of the level sets when \(u_{t-1} = 3 e_1\), \(q_t = 1\) (blue), \(q_t = -1\) (orange), and \(q_t = 0\) (green) when \(w_t = 0.2\) (left) and \(w_t = 0.8\) (right). These are the regions that must be integrated over when calculating the moment generating function of the increment \(\Delta \|u_t\|_2^2\).}
    \label{fig: regions of integration}
\end{figure}

\begin{lemma}\label{lem: 1-cdf increments, pos w} 
	Suppose \( 0 < w_t < 1\), and \(X_t\) a random vector in \(\R^m\setminus\{0\}\). Then
	\begin{align*}
		&\P_{X_t}\left((w_t - q_t)^2 + 2(w_t-q_t)\frac{X_t^T u_{t-1}}{\|X_t\|_2^2} > \alpha \Big| \Fc_{t-1} \right) \\
		& = \left\{
			\begin{array}{ll}
			      \mu_y\left( \frac{\alpha - (w_t + 1)^2}{2(w_t + 1)}, \frac{\alpha - (w_t - 1)^2}{2(w_t - 1)} \right) & \alpha < -w_t - w_t^2 \\
			      \mu_y\left( \frac{\alpha - w_t^2}{2w_t}, \frac{\alpha - (w_t - 1)^2}{2(w_t - 1)} \right) &  -w_t - w_t^2 \leq \alpha \leq w_t - w_t^2\\
			      0 & \alpha > w_t - w_t^2
			\end{array} 
			\right. 
	\end{align*}
	where \(\mu_y\) is the probability measure over \(\R\) induced by the random variable \(y := \frac{X_t^T u_{t-1}}{\|X_t\|_2^2}\).
\end{lemma}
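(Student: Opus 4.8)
\emph{Proof idea.} Work conditionally on $\Fc_{t-1}$, so that $u_{t-1}\in\R^m$ is a fixed vector; likewise regard $w_t\in(0,1)$ as given, so the only randomness is in $X_t$. The plan is to collapse the entire statement to a one–dimensional computation in the scalar $y:=X_t^Tu_{t-1}/\|X_t\|_2^2$. By Lemma~\ref{lem: q rounds w with dither}, equation \eqref{eq: nicer form}, we have $q_t=\Qc(w_t+y)$, so $q_t$ — and hence the quantity $f(y):=(w_t-q_t)^2+2(w_t-q_t)\,y$ appearing on the left — is a deterministic function of $y$. Thus the left–hand side equals $\P_{X_t}\!\left(f(y)>\alpha\mid\Fc_{t-1}\right)=\mu_y\!\left(\{y:f(y)>\alpha\}\right)$, and everything reduces to identifying this super-level set.

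Next I would write $f$ out piecewise. Since $\Qc(z)=1$ for $z\ge\tfrac12$, $\Qc(z)=-1$ for $z\le-\tfrac12$, and $\Qc(z)=0$ otherwise, the value of $q_t$ is governed by the position of $y$ relative to the thresholds $-\tfrac12-w_t$ and $\tfrac12-w_t$, giving
\begin{align*}
 f(y)=\begin{cases}
 (w_t+1)^2+2(w_t+1)\,y, & y\le-\tfrac12-w_t\quad(q_t=-1),\\
 w_t^2+2w_t\,y, & -\tfrac12-w_t<y<\tfrac12-w_t\quad(q_t=0),\\
 (w_t-1)^2+2(w_t-1)\,y, & y\ge\tfrac12-w_t\quad(q_t=1).
 \end{cases}
\end{align*}
Because $0<w_t<1$, the three slopes $2(w_t+1)$, $2w_t$, $2(w_t-1)$ are positive, positive, and negative, and a direct check shows $f$ is continuous at the two breakpoints, with $f(-\tfrac12-w_t)=-w_t-w_t^2$ and $f(\tfrac12-w_t)=w_t-w_t^2$. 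Hence $f$ is continuous, strictly increasing on $(-\infty,\tfrac12-w_t]$, strictly decreasing on $[\tfrac12-w_t,\infty)$, attains its maximum value $w_t-w_t^2$ at $y^\star:=\tfrac12-w_t$, and $f(y)\to-\infty$ as $|y|\to\infty$.

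From this ``tent'' shape, $\{f>\alpha\}$ is empty when $\alpha>w_t-w_t^2$, which is the third case. Otherwise $\{f>\alpha\}$ is a bounded open interval $(y_-,y_+)$ with $y_-<y^\star<y_+$. The right endpoint always lies on the $q_t=1$ branch, so $(w_t-1)^2+2(w_t-1)y_+=\alpha$ gives $y_+=\frac{\alpha-(w_t-1)^2}{2(w_t-1)}$. The left endpoint lies on the $q_t=0$ branch exactly when $\alpha\ge f(-\tfrac12-w_t)=-w_t-w_t^2$, giving $y_-=\frac{\alpha-w_t^2}{2w_t}$, and on the $q_t=-1$ branch when $\alpha<-w_t-w_t^2$, giving $y_-=\frac{\alpha-(w_t+1)^2}{2(w_t+1)}$; the two expressions agree at $\alpha=-w_t-w_t^2$, so the boundary value is consistent. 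Substituting $\P_{X_t}(f(y)>\alpha\mid\Fc_{t-1})=\mu_y(y_-,y_+)$ in each of the three regimes reproduces the claimed formula.

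I expect the only genuine obstacle to be organizational: one must verify that $f$ is continuous and unimodal, so that $\{f>\alpha\}$ is a single interval rather than a disjoint union, after which inverting the appropriate affine piece is routine. A secondary, minor point is the tie-breaking rule for $\Qc$ at $w_t+y=\pm\tfrac12$; since $f$ is continuous there, the value of $q_t$ on that exceptional set does not affect the formula. A symmetric argument, with the roles of the $q_t=\pm1$ branches interchanged so that the peak of $f$ falls at $-\tfrac12-w_t$, will handle the companion statement in Lemma~\ref{lem: 1-cdf increments, neg w} for $-1<w_t<0$.
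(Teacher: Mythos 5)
Your proof is correct, and it takes a genuinely different and arguably cleaner route than the paper. The paper proceeds by the law of total probability, decomposing the event according to the value of $q_t\in\{-1,0,1\}$, computing the measure contributed by each of the three pieces separately, and summing the resulting piecewise-in-$\alpha$ expressions at the end. You instead observe that $f(y)=(w_t-\Qc(w_t+y))^2+2(w_t-\Qc(w_t+y))\,y$ is a deterministic, continuous, piecewise-affine function of $y$ that is unimodal — increasing on $(-\infty,\tfrac12-w_t]$, decreasing on $[\tfrac12-w_t,\infty)$ — so its super-level set $\{f>\alpha\}$ is a single interval, whose endpoints you obtain by inverting the relevant affine branches. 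Your continuity checks and breakpoint values $f(-\tfrac12-w_t)=-w_t-w_t^2$ and $f(\tfrac12-w_t)=w_t-w_t^2$ are correct, your endpoint formulas agree at the crossover $\alpha=-w_t-w_t^2$ as you note, and your remark that the tie-breaking rule for $\Qc$ at $w_t+y=\pm\tfrac12$ is immaterial (by continuity of $f$) is sound. What your approach buys is conceptual transparency: it makes visible that the three contributions in the paper's sum automatically stitch into a single interval, which the paper never remarks on and which is the reason the final answer has the clean form $\mu_y(a,b)$. The only loose end, which does not affect the paper's downstream use since $\mu_y$ is absolutely continuous there, is the degenerate edge $\alpha=w_t-w_t^2$ where $y_-=y_+$ and $\{f>\alpha\}$ is empty; your open-interval description handles this correctly.
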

\begin{proof}
	Let \(A_b\) denote the event that \(q_t = b\) for \(b \in \{-1, 0, 1\}\). Then by the law of total probability
	\begin{align*}
	\P\left( (w_t - q_t)^2 + 2(w_t - q_t)y > \alpha \Big| \Fc_{t-1} \right) \\
	= \sum_{b \in \{-1, 0, 1\}} \P\left( (w_t - b)^2 + 2(w_t - b)y > \alpha \text{ and } A_b \Big| \Fc_{t-1} \right).
	\end{align*}
	Therefore, we need to look at each summand in the above sum. Well, \(q_t = 0\) precisely when \(-1/2 - w_t \leq  y \leq 1/2 - w_t\). So we have
		\begin{align*}
		&\P\left( w_t^2 + 2w_t y > \alpha \text{ and } A_0 \Big| \Fc_{t-1} \right) = \P\left( y > \frac{\alpha-w_t^2}{2w_t} \text{ and } -1/2 - w_t \leq y \leq 1/2-w_t  \Big| \Fc_{t-1}\right) \\
		&=\left \{ \begin{array}{ll}
			      \mu_y\left( -1/2 - w_t, 1/2 - w_t\right) & \alpha < -w_t - w_t^2 \\
			        \mu_y\left( \frac{\alpha-w_t^2}{2w_t}, 1/2 - w_t\right) &  -w_t - w_t^2 \leq \alpha \leq w_t - w_t^2\\
			      0 & \alpha > w_t - w_t^2
			\end{array} .
			\right.  
		\end{align*}
		
		Next, \(q_t = 1\) precisely when \( y > 1/2 - w_t\). Noting that \(w_t - 1 < 0\), we have
		\begin{align*}
		&\P\left( (w_t - 1)^2 + 2(w_t - 1)y > \alpha \text{ and } A_1 \Big| \Fc_{t-1} \right) 
		\\&= \P\left( y < \frac{\alpha-(w_t - 1)^2}{2(w_t-1)} \text{ and } y > 1/2 - w_t \Big| \Fc_{t-1}  \right) \\		
		&=\left\{ \begin{array}{ll}
			        \mu_y\left( 1/2 - w_t,  \frac{\alpha - (w_t - 1)^2}{2(w_t - 1)} \right) &   \alpha \leq w_t - w_t^2\\
			      0 & \alpha > w_t - w_t^2
			\end{array} .
			\right.  		
			\end{align*}
			
		Finally, \(q_t = -1\) precisely when \( y < -1/2 - w_t\). So we have
		\begin{align*}
		&\P\left( (w_t + 1)^2 + 2(w_t + 1)y > \alpha \text{ and } A_{-1} \Big|\Fc_{t-1}\right) 
		\\&= \P\left( y > \frac{\alpha-(w_t + 1)^2}{2(w_t+1)} \text{ and } y < -1/2 - w_t \Big|\Fc_{t-1}  \right) \\		
		&=\left\{ \begin{array}{ll}
			        \mu_y\left( \frac{\alpha - (w_t + 1)^2}{2(w_t + 1)}, -1/2 - w_t \right) &   \alpha \leq -w_t - w_t^2\\
			      0 & \alpha > -w_t - w_t^2
			\end{array} .
			\right.  		
			\end{align*}
 	Summing these three piecewise functions yields the result.
\end{proof}

\begin{lemma}\label{lem: 1-cdf increments, neg w}
	When \(-1 < w_t < 0\), we have
	\begin{align*}
		&\P_{X_t}\left((w_t - q_t)^2 + 2(w_t-q_t)\frac{X_t^T u_{t-1}}{\|X_t\|_2^2} > \alpha \Big| \Fc_{t-1} \right) \\
		& = \left\{
			\begin{array}{ll}
			      \mu_y\left( \frac{\alpha - (w_t + 1)^2}{2(w_t + 1)}, \frac{\alpha - (w_t - 1)^2}{2(w_t - 1)} \right) & \alpha < w_t - w_t^2 \\
			      \mu_y\left( \frac{\alpha - (w_t + 1)^2}{2(w_t + 1)}, \frac{\alpha - w_t^2}{2w_t} \right) &  w_t - w_t^2 \leq \alpha \leq -w_t - w_t^2\\
			      0 & \alpha > -w_t - w_t^2
			\end{array} .
			\right. 
	\end{align*}
\end{lemma}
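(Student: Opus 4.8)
The plan is to follow the proof of Lemma~\ref{lem: 1-cdf increments, pos w} essentially verbatim, adjusting only the signs of the coefficients \(w_t\), \(w_t-1\), \(w_t+1\) that control the direction of the inequalities once the quadratic increment is solved for the scalar random variable \(y := \frac{X_t^T u_{t-1}}{\|X_t\|_2^2}\). As in that proof, write \(A_b := \{q_t = b\}\) for \(b \in \{-1,0,1\}\). Lemma~\ref{lem: q rounds w with dither} gives \(q_t = \Qc(w_t + y)\), so the events \(A_0 = \{-1/2 - w_t \le y \le 1/2 - w_t\}\), \(A_1 = \{y > 1/2 - w_t\}\) and \(A_{-1} = \{y < -1/2 - w_t\}\) are exactly the same as in the positive case. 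By the law of total probability,
\[
\P\!\left( (w_t-q_t)^2 + 2(w_t-q_t)y > \alpha \,\big|\, \Fc_{t-1}\right) = \sum_{b\in\{-1,0,1\}} \P\!\left((w_t-b)^2 + 2(w_t-b)y > \alpha \text{ and } A_b \,\big|\, \Fc_{t-1}\right),
\]
so it suffices to evaluate each of the three summands and add them.

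The one structural change from the positive-\(w_t\) regime is the ordering of the two breakpoints: when \(-1 < w_t < 0\) we have \(w_t-1<0\), \(w_t<0\), \(w_t+1>0\), hence \(w_t - w_t^2 < 0 < -w_t - w_t^2\), which is the reverse of the ordering \(-w_t-w_t^2 < 0 < w_t-w_t^2\) that held for \(w_t>0\). I would handle each summand by substituting the value of \(b\) and solving \((w_t-b)^2 + 2(w_t-b)y > \alpha\) for \(y\): for \(b=0\) and \(b=1\) the coefficient \(w_t - b\) is negative, so this is an upper bound \(y < \tfrac{\alpha - (w_t-b)^2}{2(w_t-b)}\), while for \(b=-1\) the coefficient \(w_t+1\) is positive, so it is a lower bound \(y > \tfrac{\alpha - (w_t+1)^2}{2(w_t+1)}\). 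Intersecting each such half-line with \(A_b\) and comparing the relevant threshold against \(\pm 1/2 - w_t\) gives, for each \(b\), a three-piece function of \(\alpha\) with breakpoints among \(\{w_t-w_t^2,\,-w_t-w_t^2\}\); these comparisons are the same elementary manipulations as the three displayed computations in the proof of Lemma~\ref{lem: 1-cdf increments, pos w}.

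Summing the three piecewise pieces then yields the claimed formula. On \(\{\alpha < w_t-w_t^2\}\) all three events contribute, and the intervals \((\tfrac{\alpha-(w_t+1)^2}{2(w_t+1)}, -1/2 - w_t)\), \((-1/2 - w_t, 1/2 - w_t)\) and \((1/2 - w_t, \tfrac{\alpha-(w_t-1)^2}{2(w_t-1)})\) abut and concatenate to \((\tfrac{\alpha-(w_t+1)^2}{2(w_t+1)}, \tfrac{\alpha-(w_t-1)^2}{2(w_t-1)})\); on \(\{w_t-w_t^2 \le \alpha \le -w_t-w_t^2\}\) the \(b=1\) term drops out and the remaining two intervals concatenate to \((\tfrac{\alpha-(w_t+1)^2}{2(w_t+1)}, \tfrac{\alpha - w_t^2}{2w_t})\); and on \(\{\alpha > -w_t-w_t^2\}\) every term vanishes. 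I do not expect a genuine obstacle here: the content is entirely sign bookkeeping together with the identical abutting-interval argument already used in Lemma~\ref{lem: 1-cdf increments, pos w}, and the only point requiring care is keeping track of which of \(w_t-w_t^2\) and \(-w_t-w_t^2\) is the smaller breakpoint in this regime.
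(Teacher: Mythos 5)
Your proposal is correct and follows exactly the intended route: the paper omits a proof of this lemma entirely, presenting it as the sign-flipped mirror of Lemma~\ref{lem: 1-cdf increments, pos w}, and your case-by-case substitution of $b\in\{-1,0,1\}$, inversion of each inequality with attention to the sign of $w_t-b$, and re-ordering of the breakpoints $w_t-w_t^2 < 0 < -w_t-w_t^2$ is precisely the argument being left to the reader. The abutting-interval summation at the end is also verified correctly in all three regimes of $\alpha$.
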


\begin{corollary}\label{cor: bounded data implies bounded increments}
    If \(\|X_t\|_2^2 \leq B\) with probability \(1\), then \(\Delta\|u_{t}\|_2^2 \leq B/4\) with probability 1.
\end{corollary}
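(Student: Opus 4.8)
The plan is to reduce the claim to a single deterministic inequality on the per-step increment and then simply multiply through by the norm bound. Starting from the drift identity \eqref{eq: drift first layer}, on the event $\{X_t\neq 0\}$ set $y := X_t^T u_{t-1}/\|X_t\|_2^2$, so that $\langle X_t,u_{t-1}\rangle = \|X_t\|_2^2\, y$, and complete the square:
\[
\Delta\|u_t\|_2^2 \;=\; \|X_t\|_2^2\Bigl[(w_t-q_t)^2 + 2(w_t-q_t)y\Bigr] \;=\; \|X_t\|_2^2\Bigl[(w_t+y-q_t)^2 - y^2\Bigr].
\]
By Lemma \ref{lem: q rounds w with dither}, $q_t = \Qc(w_t+y)$ is the element of $\{-1,0,1\}$ nearest to $w_t+y$, hence $(w_t+y-q_t)^2 = \min_{p\in\{-1,0,1\}}(w_t+y-p)^2$. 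So it suffices to prove that, under the standing assumption $w_t\in[-1,1]$ (Assumption \ref{assum: weights uniformly bounded}),
\[
\min_{p\in\{-1,0,1\}}(w_t+y-p)^2 - y^2 \;\le\; \tfrac14 \qquad \text{for every } y\in\R .
\]
Granting this, the conclusion drops out: if the bracket is nonnegative then $\Delta\|u_t\|_2^2 \le \tfrac14\|X_t\|_2^2 \le \tfrac B4$; if it is negative then $\Delta\|u_t\|_2^2 \le 0 \le \tfrac B4$; and on $\{X_t=0\}$ the dynamical system \eqref{eq: first layer dynamical system} gives $u_t=u_{t-1}$, so $\Delta\|u_t\|_2^2=0$.

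To establish the deterministic inequality I would split on the size of $|w_t+y|$. If $|w_t+y|\le \tfrac32$, then some point of $\{-1,0,1\}$ lies within distance $\tfrac12$ of $w_t+y$, so $\min_p(w_t+y-p)^2\le\tfrac14$ and the bracket is at most $\tfrac14 - y^2 \le \tfrac14$. If instead $|w_t+y|>\tfrac32$, take $p=\sign(w_t+y)\in\{-1,1\}$; then $(w_t+y-p)^2 = (|w_t+y|-1)^2$, and since $|w_t|\le 1$ the triangle inequality gives $|w_t+y|-1 \le |y|$ (both sides nonnegative), so $\min_p(w_t+y-p)^2 \le y^2$ and the bracket is at most $0$. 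In either case the bound holds. (Alternatively, this corollary follows in one line from Lemmas \ref{lem: 1-cdf increments, pos w} and \ref{lem: 1-cdf increments, neg w}, which already certify that conditionally on $\Fc_{t-1}$ the per-step increment $(w_t-q_t)^2 + 2(w_t-q_t)y$ is almost surely at most $w_t-w_t^2$ when $0<w_t<1$, and at most $-w_t-w_t^2$ when $-1<w_t<0$; since $w_t(1-w_t)\le\tfrac14$ this gives the same bound, and the remaining cases $w_t\in\{-1,0,1\}$ are immediate because then the bracket is $\le 0$.)

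The only mild subtlety — what I would flag as the main obstacle — is that one cannot simply bound the per-step error by the squared quantization-cell radius $\tfrac14$: when the ``dither'' $y=X_t^Tu_{t-1}/\|X_t\|_2^2$ is large, $\Qc(w_t+y)$ can be far from $w_t+y$, and the naive estimate $\min_p(w_t+y-p)^2\le\tfrac14$ fails. The resolution is the observation that in exactly that regime $q_t$ is pushed to an extreme alphabet element and the error vector $u_t$ actually contracts ($\Delta\|u_t\|_2^2\le 0$), so the two cases together still yield the $\tfrac14$ bound. A final routine point is that, because the bracketed quantity can be negative while $\|X_t\|_2^2$ is as large as $B$, one must phrase the last step as a case distinction on the sign of the bracket rather than a single multiplication, but this is immediate.
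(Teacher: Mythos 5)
Your proposal is correct. Your primary argument — completing the square to write
\[
\Delta\|u_t\|_2^2 = \|X_t\|_2^2\bigl[(w_t+y-q_t)^2 - y^2\bigr], \qquad y := X_t^T u_{t-1}/\|X_t\|_2^2,
\]
invoking Lemma~\ref{lem: q rounds w with dither} to identify $q_t = \Qc(w_t+y)$ as a nearest alphabet element, and then bounding $\min_p (w_t+y-p)^2 - y^2 \le 1/4$ by a case split on $|w_t+y|$ — is a genuinely different and more self-contained route than the paper's. The paper instead appeals to Lemmas~\ref{lem: 1-cdf increments, pos w} and~\ref{lem: 1-cdf increments, neg w}, reading off from their third (vanishing) branch that the bracketed quantity $(w_t-q_t)^2 + 2(w_t-q_t)y$ never exceeds $|w_t|-w_t^2 \le 1/4$ almost surely, and then multiplies by $B$. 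That is precisely the one-line ``alternative'' you mention in parentheses, so you have effectively given both arguments. Your direct argument buys elementary self-containment: it makes the geometric mechanism visible (when the dither is large, $q_t$ saturates to an extreme character and $u_t$ actually contracts) without going through the conditional CDF computations, and it handles the boundary cases $w_t\in\{-1,0,1\}$ — which those lemmas exclude by hypothesis — without any extra fuss. The paper's version buys brevity by reusing machinery it needs anyway elsewhere. You are also right to flag that the intermediate bound $\|X_t\|_2^2\cdot(\text{bracket}) \le B\cdot(\text{bracket})$ as the paper writes it is only valid when the bracket is nonnegative; your explicit case distinction on the sign of the bracket is the clean way to phrase the last step, and is the logic the paper implicitly relies on.
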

\begin{proof}
    Using \eqref{eq: first layer dynamical system}, this follows from the identity
    \begin{align*}
        \Delta \|u_{t}\|_2^2 = \|X_t\|_2^2 \left( (w_t - q_t)^2 + 2(w_t-q_t)\frac{X_t^T u_{t-1}}{\|X_t\|_2^2}\right) \leq B \left( (w_t - q_t)^2 + 2(w_t-q_t)\frac{X_t^T u_{t-1}}{\|X_t\|_2^2}\right).
    \end{align*}
    Applying Lemma \ref{lem: 1-cdf increments, pos w} (or Lemma \ref{lem: 1-cdf increments, neg w}) on the latter quantity with $\alpha=|w_t|-w_t^2$ and recognizing that \(|w_t| - w_t^2 \leq 1/4\) when \(w_t \in [-1,1]\) yields the claim.
\end{proof}
    
\section{Proofs: Core Lemmata}\label{sec: gory proofs}
We start by proving our main result, Theorem \ref{thm: first layer gory details}, and its extension to the case where feature vectors live in a low-dimensional subspace, Lemma \ref{lem: subspace model}. The proof of Theorem \ref{thm: first layer gory details} relies on bounding the moment generating function of \(\Delta \|u_t\|_2^2 \Big| \Fc_{t-1}\), which in turn requires a number of results, referenced in the proof and presented thereafter.    These lemmas carefully deal with bounding the above moment generating function on the events where \(q_t\) is fixed. Given \(u_{t-1}\) and \(q_t = b\), Lemma \ref{lem: q level sets in general} tells us the set of directions \(X_t\) which result in \(q_t = b\) and these are the relevant events one needs to consider when bounding the moment generating function. Lemma \ref{lem: mgf q_t=0, gaussian} handles the case when \(q_t = 0\), Lemma \ref{lem: mgf q_t=1, gaussian} handles the case when \(q_t = 1\), and Lemma \ref{lem: mgf q_t=-1 gaussian} handles the case when \(q_t = -1\).

\begin{theorem}\label{thm: first layer gory details}
     
     Suppose that for $t\in \N$, the vectors \(X_t \sim \Nc(0, \sigma^2 I_{m\times m})\) are independent and that \(w_t \in [-1,1]\) are i.i.d. and independent of \(X_t\), and define the event \[A_{\eps} := \left\{\mathrm{dist}(w_t, \{-1, 0, 1\}) < \eps \right\}.\] 
     Then there exist positive constants $c_{norm}$, $C_\lambda$, and $C_{sup}$, such that with \(\lambda := \frac{C_{\lambda}}{C_{\sup}^2 \sigma^2 m\log(N_0)}\), and \(\rho, \eps \in (0,1)\) satisfying \(\tilde{\rho} := \rho + e^{C_{\lambda}/4}\P(A_\eps) < 1 \), the iteration \eqref{eq: hidden layer dynamical system} satisfies
	\begin{align}\label{eq: first layer probability bound}
		\P\left( \|u_{t}\|_2^2 > \alpha \right) \leq \tilde{\rho}^t e^{-\lambda \alpha} + \frac{1-\tilde{\rho}^{t}}{1-\tilde{\rho}} e^{\frac{C_{\lambda}}{4} + \lambda(\beta - \alpha)} + 2e^{-\log(N_0)(c_{norm}C_{\sup}^2 m - 1)}.
	\end{align}
	Above,  \(C > 0\) is a universal constant and \(\beta := C \frac{e^{8C_{\lambda}} \sigma^2 m^2 \log^2(N_0)}{\rho^2 \eps^2}\).
\end{theorem}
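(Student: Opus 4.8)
The plan is a drift (Lyapunov) argument for the process $V_t := \|u_t\|_2^2$, in the spirit of Hajek and Pemantle--Rosenthal: show that the conditional moment generating function of $V_t$ contracts with factor $\tilde\rho$ once $V_{t-1}$ exceeds a threshold $\beta$, incur only a bounded additive penalty when $V_{t-1}\le\beta$, iterate, and finish with Markov's inequality. All the stochastic structure is captured by the increment identity \eqref{eq: drift first layer}, $\Delta V_t = (w_t-q_t)^2\|X_t\|_2^2 + 2(w_t-q_t)\langle X_t,u_{t-1}\rangle$, together with the closed form \eqref{eq: nicer form} for $q_t$.

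First I would fix the scale $B := C_{\sup}^2\sigma^2 m\log(N_0)$ and the event $G_t := \bigcap_{s\le t}\{\|X_s\|_2^2\le B\}\in\Fc_t$. Applying Lemma \ref{lem: gaussian norm concentration} to $X_s/\sigma$ and union bounding over $s\le t$ gives $\P(G_t^c)\le 2e^{-\log(N_0)(c_{norm}C_{\sup}^2 m-1)}$ after absorbing lower-order terms into constants, which is exactly the third term of \eqref{eq: first layer probability bound}. On $G_t$, Corollary \ref{cor: bounded data implies bounded increments} forces $\Delta V_s\le B/4$ for every $s\le t$, so with $\lambda := C_\lambda/B$ one has $\lambda\,\Delta V_s\le C_\lambda/4$; this uniform bound is the origin of every $e^{C_\lambda/4}$ in the statement.

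The heart of the argument is the conditional MGF estimate: on the event $\{V_{t-1}>\beta\}$,
\[
\E\!\big[e^{\lambda\Delta V_t}\,\1_{\{\|X_t\|_2^2\le B\}}\ \big|\ \Fc_{t-1}\big]\ \le\ \tilde\rho .
\]
I would split the left-hand side according to $A_\eps$ versus $A_\eps^c$, and on $A_\eps^c$ further by the value $q_t\in\{-1,0,1\}$. On $A_\eps$ the crude bound $e^{\lambda\Delta V_t}\1_{\{\|X_t\|_2^2\le B\}}\le e^{C_\lambda/4}$ and the independence of $w_t$ from $\Fc_{t-1}$ give the contribution $e^{C_\lambda/4}\P(A_\eps)$. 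On $A_\eps^c$, Lemma \ref{lem: q rounds w with dither} writes $q_t=\Qc(w_t+y)$ with $y:=X_t^Tu_{t-1}/\|X_t\|_2^2$, and Lemma \ref{lem: q level sets in general} identifies each event $\{q_t=b\}$ with an explicit Euclidean ball (or its complement) whose center and radius are scalar multiples of $u_{t-1}$; integrating the isotropic Gaussian density of $X_t$ over these regions yields the three case-specific MGF bounds — precisely the core lemmata referenced just above the theorem (the $q_t=0$, $q_t=1$, $q_t=-1$ cases). For $q_t=\pm1$ the dither $y$ is forced to have the sign that makes $\Delta V_t$ strongly negative in mean, since the conditional standard deviation of $y$ is of order $\|u_{t-1}\|_2/(\sigma m)$, which is large once $V_{t-1}>\beta$; for $q_t=0$ the increment need not have negative mean, but $\P(q_t=0\mid\Fc_{t-1})$ is the probability that $y$ lands in a fixed-length interval whose endpoints are $\eps$-separated from $\Qc$'s breakpoints (this is where $A_\eps^c$ enters), hence is small once $V_{t-1}>\beta$, and combined with $e^{\lambda\Delta V_t}\le e^{C_\lambda/4}$ on the truncation event it stays controlled. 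Calibrating these estimates so that the three $A_\eps^c$-contributions sum to at most $\rho$ is what forces $\beta$ to take the stated form $C\,e^{8C_\lambda}\sigma^2 m^2\log^2(N_0)/(\rho^2\eps^2)$, giving the bound $\tilde\rho=\rho+e^{C_\lambda/4}\P(A_\eps)$.

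To close, set $Z_t := e^{\lambda V_t}\1_{G_t}$, noting $Z_0=1$ since $u_0=0$ and $G_0=\Omega$. Writing $\1_{G_t}=\1_{G_{t-1}}\1_{\{\|X_t\|_2^2\le B\}}$ and pulling out the $\Fc_{t-1}$-measurable factor $e^{\lambda V_{t-1}}\1_{G_{t-1}}$, the estimate above gives $\E[Z_t\mid\Fc_{t-1}]\le\tilde\rho Z_{t-1}$ on $\{V_{t-1}>\beta\}$ and $\E[Z_t\mid\Fc_{t-1}]\le e^{C_\lambda/4}e^{\lambda\beta}$ on $\{V_{t-1}\le\beta\}$, hence in all cases $\E[Z_t\mid\Fc_{t-1}]\le\tilde\rho Z_{t-1}+e^{C_\lambda/4+\lambda\beta}$; iterating yields $\E[Z_t]\le\tilde\rho^t+e^{C_\lambda/4+\lambda\beta}\frac{1-\tilde\rho^t}{1-\tilde\rho}$, and then $\P(V_t>\alpha)\le\P(G_t^c)+e^{-\lambda\alpha}\E[Z_t]$ gives \eqref{eq: first layer probability bound}. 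The routine parts are this recursion and the truncation bookkeeping; the genuine obstacle — and the only place the discontinuity of $\Qc$ really bites — is the case-by-case Gaussian integration over the ball regions of Lemma \ref{lem: q level sets in general} together with the simultaneous choice of $C_\lambda$, $C_{\sup}$, and $\beta$, and I expect the $q_t=0$ case, along with pinning down the $\eps$- and $e^{C_\lambda}$-dependence of $\beta$, to be the crux.
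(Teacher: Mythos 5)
Your proposal reproduces the paper's argument essentially step for step: the same high-probability truncation event on $\sup_t\|X_t\|_2$, the same Lyapunov quantity $\|u_t\|_2^2$, the same decomposition of the conditional MGF of the increment into the contractive case ($A_\eps^c$ and $\|u_{t-1}\|_2^2\ge\beta$, handled by Lemmas \ref{lem: mgf q_t=0, gaussian}--\ref{lem: mgf q_t=-1 gaussian}) versus the uniformly bounded case (via Corollary \ref{cor: bounded data implies bounded increments}), followed by the same affine recursion $\E[Z_t\mid\Fc_{t-1}]\le\tilde\rho Z_{t-1}+e^{C_\lambda/4+\lambda\beta}$ and Markov's inequality. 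Your formulation via $Z_t:=e^{\lambda\|u_t\|_2^2}\1_{G_t}$ is a somewhat tidier way to organize the induction that the paper performs on $\E[e^{\lambda\|u_{t-1}\|_2^2}\1_{U_{t-1}}]$, but it is the same proof, and you correctly identify the core Gaussian MGF lemmata (especially the $q_t=0$ region decomposition and the calibration of $\beta$) as the genuine content being deferred.
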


\begin{proof}
The proof technique is inspired by \cite{hajek1982hitting}.
Define the events
\begin{align*}
    U_{t} := \left\{\sup_{j \in \{1, \hdots, t\}} \|X_j\|_2 \leq C_{\sup} \sigma \sqrt{m}\left(\sqrt{\log(N_0)}+1\right)\right\}.
\end{align*} 
Using a union bound and Lemma \ref{lem: gaussian norm concentration}, we see that \(U_{N_0}^C\) happens with low probability since
\begin{align*}
    \P\left(\sup_{t\in [N_0]} \|X_t\|_2 > C_{\sup}\sigma \sqrt{m}\left(\sqrt{\log(N_0)}+1\right)\right) \\
    \leq 2N_0e^{-c_{norm} C_{\sup}^2 m\log(N_0)} = 2e^{-\log(N_0)(c_{norm}C_{\sup}^2 m - 1)}.
\end{align*}
We can therefore bound the probability of interest with appropriate conditioning.
\begin{align*}
    \P\left( \|u_t\|_2^2 \geq \alpha \right) 
    \leq  \P\left( \|u_t\|_2^2 \geq \alpha \Big| U_{N_0} \right)\P(U_{N_0}) + \P(U_{N_0}^C). 
\end{align*}
Looking at the first summand, for any $\lambda>0$, we have by Markov's inequality
\begin{align*}
     \P\left( \|u_t\|_2^2 \geq \alpha \Big| U_{N_0} \right)\P(U_{N_0})
     &\leq
     e^{-\lambda \alpha} \E[e^{\lambda \|u_t\|_2^2} \Big| U_{N_0} ]\P(U_{N_0})\\
     &=
     e^{-\lambda \alpha}\E[e^{\lambda \|u_t\|_2^2} \1_{U_{N_0}} ] \\
		%
		&= e^{-\lambda \alpha} \E\left[ e^{\lambda \|u_{t-1}\|_2^2} e^{\lambda \Delta\|u_{t}\|_2^2} \1_{U_{N_0}} \right]\\
		&=  e^{-\lambda \alpha} \E\left[ \E\left[e^{\lambda \|u_{t-1}\|_2^2} e^{\lambda \Delta\|u_{t}\|_2^2} \1_{U_{N_0}} \Big| \Fc_{t-1} \right]\right].
\end{align*}
We expand the conditional expectation given the filtration into a sum of two parts
\begin{align*}
    \E\left[e^{\lambda \|u_{t-1}\|_2^2} e^{\lambda \Delta\|u_{t}\|_2^2} \1_{U_{N_0}} \Big| \Fc_{t-1} \right] &=
    \E\left[e^{\lambda \|u_{t-1}\|_2^2} e^{\lambda \Delta\|u_{t}\|_2^2} \1_{U_{N_0}}  \1_{A_\eps^C \text{ and } \|u_{t-1}\|_2^2 \geq \beta} \Big| \Fc_{t-1}  \right]\\
    &+ \E\left[e^{\lambda \|u_{t-1}\|_2^2} e^{\lambda \Delta\|u_{t}\|_2^2} \1_{U_{N_0}} \1_{A_\eps \text{ or } \|u_{t-1}\|_2^2 < \beta}\Big| \Fc_{t-1}  \right].
\end{align*}
Towering expectations, the expectation over \(X_t\) of the first summand is bounded above by \(\rho e^{\lambda \|u_{t-1}\|_2^2}\1_{U_{t-1}}\) for all \(w_t\) on the event \(A_{\eps}^C\)  using Lemmas \ref{lem: mgf q_t=0, gaussian}, \ref{lem: mgf q_t=1, gaussian}, and \ref{lem: mgf q_t=-1 gaussian}. Therefore, the same bound is also true for the expectation over \(w_t\). As for the second term, we have
\begin{align*}
    &\E\left[e^{\lambda \|u_{t-1}\|_2^2} e^{\lambda \Delta\|u_{t}\|_2^2} \1_{U_{N_0}} \1_{A_\eps \text{ or } \|u_{t-1}\|_2^2 < \beta} \Big| \Fc_{t-1}  \right] =\\ 
    &\E\left[e^{\lambda \|u_{t-1}\|_2^2} e^{\lambda \Delta\|u_{t}\|_2^2} \1_{U_{N_0}} \1_{\|u_{t-1}\|_2^2 < \beta} \Big| \Fc_{t-1}  \right] + 
     \E\left[e^{\lambda \|u_{t-1}\|_2^2} e^{\lambda \Delta\|u_{t}\|_2^2}\1_{U_{N_0}} \1_{A_\eps \text{ and }   \|u_{t-1}\|_2^2 \geq \beta} \Big| \Fc_{t-1} \right]
\end{align*}
For both terms, we can use the uniform bound on the increments as proven in Corollary \ref{cor: bounded data implies bounded increments}. The first term we can bound by \(e^{\lambda C_{\sup}^2 \sigma^2 m \log(N_0)/4} e^{\lambda \beta} \leq e^{C_{\lambda}/4} e^{\lambda \beta} \). As for the second, expecting over the draw of \(w_t\) gives us 
\begin{align*}
 \E\left[e^{\lambda \|u_{t-1}\|_2^2} e^{\lambda \Delta\|u_{t}\|_2^2} \1_{U_{N_0}} \1_{A_\eps \text{ and }   \|u_{t-1}\|_2^2 \geq \beta} \Big| \Fc_{t-1} \right] &\leq e^{\lambda C_{\sup}^2 \sigma^2 m\log(N_0)/4} e^{\lambda \|u_{t-1}\|_2^2}\1_{U_{t-1}} \P( A_\eps)\\
 &\leq e^{C_{\lambda}/4} e^{\lambda \|u_{t-1}\|_2^2}\1_{U_{t-1}} \P( A_\eps).
\end{align*}
Therefore, we have
\begin{align*}
    &\P\left( \|u_{t}\|_2^2 > \alpha \right) \\
    &\leq e^{-\lambda \alpha}\left(\left(\rho + e^{C_\lambda/4} \P\left( A_\eps \right)\right)\E[e^{\lambda  \|u_{t-1}\|_2^2}\1_{U_{t-1}}] + e^{\lambda\beta + C_{\lambda}/4}\right) + 2e^{-\log(N_0)(c_{norm}C_{\sup}^2 m - 1)}\\
    &= e^{-\lambda \alpha}\left(\tilde{\rho}\E[e^{\lambda \|u_{t-1}\|_2^2}\1_{U_{t-1}}] + e^{\lambda\beta + C_{\lambda}/4}\right) + 2e^{-\log(N_0)(c_{norm}C_{\sup}^2 m - 1)}.
\end{align*}
Proceeding inductively on \(\E[e^{\lambda \|u_{t-1}\|_2^2}]\) yields the claim.
\end{proof}
\begin{remark}
    To simplify the  bound in \eqref{eq: first layer probability bound}, assuming we have \(\tilde{\rho}, \eps \propto 1\) and \(\alpha \gtrsim \beta \propto \sigma^2 m^2 \log^2(N_0)\) we have 
    \begin{align*}
        \P\left(\|u_{t}\|_2^2 \geq \alpha\right) &\leq  e^{-\lambda \alpha} + e^{\frac{C_{\lambda}}{4} + \lambda(\beta - \alpha)} + 2e^{-\log(N_0)(c_{norm}C_{\sup}^2 m - 1)},\\
        &= e^{-\frac{C_{\lambda}m \log(N_0)}{C_{\sup}^2}} + e^{\frac{C_{\lambda}}{4} - C'\frac{C_{\lambda}m \log(N_0)}{C_{\sup}^2}} + 2e^{-\log(N_0)(c_{norm}C_{\sup}^2 m - 1)},\\
        &\leq Ce^{-cm\log(N_0)}.
    \end{align*}
This matches the bound on the probability of failure we give in Theorem \ref{thm: first layer relative error}.
\end{remark}

\begin{lemma}\label{lem: subspace model}
    Suppose \(X = ZA\) where \(Z\in\R^{m\times d}\) satisfies \(Z^T Z = I\), and \(A \in \R^{d\times N_0}\) has i.i.d. \(\Nc(0, \sigma^2)\) entries. In other words, suppose the feature data \(X_t\) are Gaussians drawn from a \(d\)-dimensional subspace of \(\R^{m}\). Then with the remaining hypotheses as Theorem \ref{thm: first layer gory details} we have with probability at least \(1 - Ce^{-c d\log(N_0)} - 3\exp(-c'' d)\)
     \begin{align*}
        \|Xw - Xq\|_2 \lesssim \sigma d \log(N_0).
    \end{align*}
\end{lemma}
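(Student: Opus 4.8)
The plan is to reduce the statement to Theorem~\ref{thm: first layer gory details} with the ambient dimension $m$ replaced by the intrinsic dimension $d$, by exploiting that $Z$ acts as an isometry on the subspace carrying the walk. Write $A_t \in \R^d$ for the columns of $A$, so that $X_t = Z A_t$ and, since $A$ has i.i.d.\ $\Nc(0,\sigma^2)$ entries, the $A_t$ are independent $\Nc(0,\sigma^2 I_{d\times d})$ vectors, independent of $w$. Run the dynamical system \eqref{eq: first layer dynamical system} on the $d\times N_0$ data matrix $A$, and denote its state variable by $\tilde u_t = \sum_{j\le t}(w_j - q_j)A_j \in \R^d$. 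I claim this run is identical to the run on $X$, with $u_t = Z\tilde u_t$ for all $t$: using $Z^T Z = I$ we get $X_t^T u_{t-1} = A_t^T Z^T Z\tilde u_{t-1} = A_t^T\tilde u_{t-1}$ and $\|X_t\|_2^2 = A_t^T Z^T Z A_t = \|A_t\|_2^2$, so by Lemma~\ref{lem: q rounds w with dither} the two systems select the same $q_t$ at every step, and the updates then agree under $u_t = Z\tilde u_t$ (with $u_0 = Z\tilde u_0 = 0$). Since $Z$ has orthonormal columns, $\|u_t\|_2 = \|\tilde u_t\|_2$, and in particular $\|Xw - Xq\|_2 = \|u_{N_0}\|_2 = \|\tilde u_{N_0}\|_2$.

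It then suffices to bound $\|\tilde u_{N_0}\|_2$, and the hypotheses of Theorem~\ref{thm: first layer gory details} hold verbatim for $A$ with $m$ replaced by $d$: the columns of $A$ are independent centered Gaussians in $\R^d$, independent of the $w_t$, and the separation hypothesis $\mathrm{dist}(w_t,\{-1,0,1\})>\eps$ concerns $w$ alone. Applying that theorem together with the simplification in the remark following it, with $\alpha \propto \beta \propto \sigma^2 d^2\log^2(N_0)$, gives $\|\tilde u_{N_0}\|_2^2 \lesssim \sigma^2 d^2\log^2(N_0)$, i.e.\ $\|Xw - Xq\|_2 \lesssim \sigma d\log(N_0)$, on an event of probability at least $1 - Ce^{-cd\log(N_0)}$. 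The extra $3\exp(-c''d)$ loss accounts for the auxiliary Gaussian norm-concentration events invoked along the way (the analogue of the events $U_t$ in the proof of Theorem~\ref{thm: first layer gory details} and the uses of Lemma~\ref{lem: gaussian norm concentration}), which now live in dimension $d$.

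The main point — rather than a genuine obstacle — is precisely this dimension collapse: because $Z$ is an isometry on the relevant $d$-dimensional subspace, every Gaussian concentration estimate in the proof of Theorem~\ref{thm: first layer gory details} (the uniform control $\|A_t\|_2 \approx \sigma\sqrt d$ and the moment-generating-function bounds of Lemmas~\ref{lem: mgf q_t=0, gaussian}--\ref{lem: mgf q_t=-1 gaussian}) is re-run in $\R^d$, so the error scales with $d$ instead of $m$. The only thing requiring care is that the filtration and joint-independence structure of Assumption~\ref{assum: data independent of weights} are preserved when passing from $X_t$ to $A_t$, which is immediate since $v\mapsto Zv$ is deterministic and injective on $\R^d$; hence the whole machinery of Section~\ref{sec: gory proofs} transfers without modification.
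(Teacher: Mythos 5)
Your proposal is correct and follows essentially the same route as the paper: both exploit $Z^TZ = I$ to show the run of \eqref{eq: first layer dynamical system} on $X$ is isometric to a run on $A$ in $\R^d$ (the paper sets $\eta_t = Z^Tu_t$ and derives the reduced recursion; you define $\tilde u_t$ on $A$ and verify $u_t = Z\tilde u_t$, which is the same observation), then both invoke Theorem~\ref{thm: first layer gory details} with $m$ replaced by $d$.
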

\begin{proof}
    We will show that running the dynamical system \eqref{eq: first layer dynamical system} with \(X_t\) is equivalent to running a modified version of \eqref{eq: first layer dynamical system} with the columns of \(A\), denoted \(A_t\). Then we can apply the result of Theorem \ref{thm: first layer gory details}. By definition, we have
    \begin{align*}
        u_0 :&= 0 \in \R^{m},\\
        q_t :&= \Qc\left(w_t + \frac{X_t^T u_{t-1}}{\|X_t\|_2^2} \right),\\
        u_t :&= u_{t-1} + w_t X_t - q_t X_t.
    \end{align*}
    In anticipation of subsequent applications of change of variables, let \(Z = U\Sigma V^T \in \R^{m\times d}\) be the singular value decomposition of \(Z\) where \(U \in \R^{m\times m}\) and \(V\in\R^{d\times d}\) are orthogonal matrices and \(\Sigma \in \R^{m\times d}\) decomposes as
    \begin{align*}
        \Sigma = \begin{bmatrix}
        I_{d\times d} \\
        0
        \end{bmatrix}.
    \end{align*}
    Since \(u_{t}\) is a linear combination of \(X_1, \hdots, X_t\) for all \(t\) it follows that \(u_t\) is in the column space of \(Z\). In other words, \(u_t = Z(Z^T Z)^{-1}Z^T u_t := Z \eta_t\). We may rewrite the above dynamical system in terms of \(A_t, \eta_t\) as
    \begin{align*}
        u_0 :&= 0 \in \R^{m},\\
        q_t :&= \Qc\left(w_t + \frac{A_t^TZ^T Z \eta_{t-1}}{\|ZA_t\|_2^2} \right)\\
        &=  \Qc\left(w_t + \frac{A_t^T \eta_{t-1}}{\|A_t\|_2^2} \right)\\
        Z\eta_t :&= Z\eta_{t-1} + w_t ZA_t - q_t ZA_t\\
        \iff \eta_t &= \eta_{t-1} + w_t A_t - q_t A_t
    \end{align*}
    So, in other words, we've reduced to running \eqref{eq: first layer dynamical system} but now with the state variables \(\eta_{t-1} \in \R^d\) in place of \(u_{t-1}\) and with \(A_t\) in place of \(X_t\). Applying the result of Theorem \ref{thm: first layer gory details} yields the claim.
\end{proof}

\begin{lemma}\label{lem: mgf q_t=0, gaussian}
    Let \(X_t \sim \Nc(0, \sigma^2 I_{m\times m})\) and \(\mathrm{dist}(w_t, \{-1, 0, 1\}) \geq \eps\). Define the event \[U := \left\{\|X_t\|_2 \leq C_{\sup} \sigma \sqrt{m\log(N_0)}\right\},\] and set \(\lambda := \frac{C_\lambda}{C_{\sup}^2 \sigma^2 m \log(N_0)}\), where \(C_\lambda \in (0, \frac{c_{norm}}{12})\) is some constant and $c_{norm}$ is as in Lemma \ref{lem: gaussian norm concentration}. Then there exists a universal constant \(C > 0\) so that with \(\beta := \frac{C e^{C_{\lambda}} \sigma^2 m \log(N_0) }{\rho \eps \sigma}\)
	\begin{align*}
		\E\left[e^{\lambda \Delta \|u_{t}\|_2^2} \1_{q_t = 0} \1_U \1_{\|u_{t-1}\|_2 \geq \beta} \Big| \Fc_{t-1} \right] \leq \rho.
	\end{align*}
\end{lemma}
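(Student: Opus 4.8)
\emph{Proof proposal.} The idea is to decouple the exponential from the probability: on \(U\) the increment \(\Delta\|u_t\|_2^2\) is bounded \emph{deterministically}, while on \(\{\|u_{t-1}\|_2\ge\beta\}\) the event \(\{q_t=0\}\) is \emph{rare}. \textbf{Step 1 (control the exponential on \(U\)).} Since \(U\subseteq\{\|X_t\|_2^2\le C_{\sup}^2\sigma^2 m\log(N_0)\}\), Corollary \ref{cor: bounded data implies bounded increments} gives \(\Delta\|u_t\|_2^2\le \tfrac14 C_{\sup}^2\sigma^2 m\log(N_0)\) on \(U\), hence \(\lambda\,\Delta\|u_t\|_2^2\le C_\lambda/4\) there and \(e^{\lambda\Delta\|u_t\|_2^2}\1_U\le e^{C_\lambda/4}\). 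As \(\{\|u_{t-1}\|_2\ge\beta\}\in\Fc_{t-1}\), it therefore suffices to prove
\[\1_{\{\|u_{t-1}\|_2\ge\beta\}}\;\P\!\left(q_t=0 \,\middle|\, \Fc_{t-1}\right)\;\le\;\rho\,e^{-C_\lambda/4}.\]

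\textbf{Step 2 (polar decomposition and the level set of \(\{q_t=0\}\)).} Conditionally on \(\Fc_{t-1}\), write \(X_t=R_t\Theta_t\) with \(R_t:=\|X_t\|_2\) and \(\Theta_t:=X_t/\|X_t\|_2\); by rotational invariance of \(\Nc(0,\sigma^2 I_m)\), \(R_t\) and \(\Theta_t\) are independent, \(\Theta_t\) is uniform on \(S^{m-1}\), and \(\E R_t=\sigma\,\E\|g\|_2\le\sigma\sqrt m\) for \(g\sim\Nc(0,I_m)\). By Lemma \ref{lem: q rounds w with dither}, \(q_t=0\) iff \(\left|w_t+X_t^T u_{t-1}/\|X_t\|_2^2\right|\le\tfrac12\), i.e., writing \(s:=\|u_{t-1}\|_2\), iff \(\big\langle\Theta_t,\,u_{t-1}/s\big\rangle\) lies in the interval \(\tfrac{R_t}{s}\big[-\tfrac12-w_t,\,\tfrac12-w_t\big]\), whose length is exactly \(R_t/s\) (this can also be read off the ball description of the level sets in Lemma \ref{lem: q level sets in general}). \textbf{Step 3 (spherical small-ball estimate).} For any fixed unit vector \(v\), the marginal \(\langle\Theta_t,v\rangle\) has density proportional to \((1-x^2)^{(m-3)/2}\), which for \(m\ge 3\) is maximized at \(x=0\) with value \(\lesssim\sqrt m\); hence \(\P\!\left(\langle\Theta_t,v\rangle\in J\mid R_t,\Fc_{t-1}\right)\lesssim\sqrt m\,|J|\) for every interval \(J\) (the cases \(m\le 2\) are checked directly, and in any event the interval in Step 2 is a shrinking neighborhood of \(0\) once \(R_t\le C_{\sup}\sigma\sqrt{m\log N_0}\) and \(s\ge\beta\), so the density estimate at \(x=0\) is the governing one). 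Taking expectations over \(R_t\),
\[\P\!\left(q_t=0\,\middle|\,\Fc_{t-1}\right)\;=\;\E\!\left[\P\!\left(q_t=0\mid R_t,\Fc_{t-1}\right)\,\middle|\,\Fc_{t-1}\right]\;\lesssim\;\frac{\sqrt m\,\E R_t}{s}\;\lesssim\;\frac{\sigma m}{s}.\]

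\textbf{Step 4 (conclude).} On \(\{s\ge\beta\}\) the right-hand side is \(\lesssim\sigma m/\beta\). Substituting \(\beta=Ce^{C_\lambda}\sigma^2 m\log(N_0)/(\rho\eps\sigma)=Ce^{C_\lambda}\sigma m\log(N_0)/(\rho\eps)\) gives \(\sigma m/\beta\lesssim \rho\eps/(Ce^{C_\lambda}\log N_0)\), so
\[e^{C_\lambda/4}\,\1_{\{s\ge\beta\}}\,\P\!\left(q_t=0\,\middle|\,\Fc_{t-1}\right)\;\lesssim\;\frac{\rho\,\eps}{C\,e^{3C_\lambda/4}\log N_0}\;\le\;\rho,\]
the last inequality holding once the absolute constant \(C\) in the definition of \(\beta\) is chosen large enough (using \(\eps,\rho\in(0,1)\), \(e^{3C_\lambda/4}\ge 1\), and \(\log N_0\ge\log 2\)). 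Combined with Step 1 this yields the claimed bound \(\E[e^{\lambda\Delta\|u_t\|_2^2}\1_{q_t=0}\1_U\1_{\|u_{t-1}\|_2\ge\beta}\mid\Fc_{t-1}]\le\rho\). (Note the separation \(\mathrm{dist}(w_t,\{-1,0,1\})\ge\eps\) is not actually needed for this particular lemma — the factor \(\eps\) enters only through \(\beta\), making the bound easier; it is recorded here for uniformity with the \(q_t=\pm1\) lemmas.)

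\textbf{Main obstacle.} The only genuinely non-bookkeeping point is Step 3: obtaining the correct \(\sqrt m\) scaling in the sphere small-ball / thin-spherical-cap estimate and verifying that the interval identified in Step 2 concentrates near \(0\) — which is exactly why the truncations \(R_t\le C_{\sup}\sigma\sqrt{m\log N_0}\) (from \(U\)) and \(s\ge\beta\) are imposed — so that the density estimate at the origin, rather than near \(\pm1\), controls the probability. Everything else is a direct assembly of Corollary \ref{cor: bounded data implies bounded increments}, Lemma \ref{lem: q rounds w with dither}, and Markov's inequality.
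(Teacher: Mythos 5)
Your proof is correct, and it takes a genuinely different — and considerably shorter — route than the paper's. The paper bounds the conditional MGF of \(\Delta\|u_t\|_2^2\) directly by integrating the Gaussian density over the level set \(\{q_t=0\}\), decomposing the domain by the sign of \(\langle X_t, u_{t-1}\rangle\) and then further into the regions \(R,S,T\), and invoking Gaussian tail and \(\chi^2\) concentration estimates in each. You instead observe that the increment \(\Delta\|u_t\|_2^2\) is \emph{deterministically} \(\le \|X_t\|_2^2/4\) (Corollary \ref{cor: bounded data implies bounded increments}), so on \(U\) the exponential is pointwise \(\le e^{C_\lambda/4}\) regardless of the value of \(q_t\); the problem then reduces to showing \(\{q_t=0\}\) has small conditional probability. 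The polar decomposition \(X_t=R_t\Theta_t\) turns \(\{q_t=0\}\) into the event \(\langle\Theta_t, u_{t-1}/\|u_{t-1}\|_2\rangle\in\frac{R_t}{\|u_{t-1}\|_2}[-\tfrac12-w_t,\tfrac12-w_t]\), an interval of length \(R_t/\|u_{t-1}\|_2\); the sup of the sphere-marginal density being \(\lesssim\sqrt m\) (for \(m\ge3\)) then gives \(\P(q_t=0\mid\Fc_{t-1})\lesssim \sigma m/\|u_{t-1}\|_2\), and plugging in \(\beta\) closes the argument. This even produces a slightly sharper bound than the paper's (\(\sigma m/\|u_{t-1}\|_2\) versus \(e^{C_\lambda}\sigma m\log(N_0)/(\eps\|u_{t-1}\|_2)\)), and your observation that the \(\eps\)-separation hypothesis is vacuous here is accurate. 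The one thing worth flagging is that this simplicity does not transfer to Lemmas \ref{lem: mgf q_t=1, gaussian} and \ref{lem: mgf q_t=-1 gaussian}: there the level set of \(\{q_t=\pm1\}\) is essentially a half-space, not a thin slab, so \(\P(q_t=\pm1\mid\Fc_{t-1})\) is bounded away from zero even when \(\|u_{t-1}\|_2\) is large, and the crude pointwise bound \(e^{\lambda\Delta\|u_t\|_2^2}\1_U\le e^{C_\lambda/4}\) is not enough; one must exploit the strongly negative drift on the bulk of that event, which is exactly what the paper's integral decomposition does. So your approach replaces one of the three companion lemmas with a cleaner argument, but the other two still require the paper's machinery.
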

\begin{proof}
    \begin{figure}
    \centering
    \includegraphics[width=0.5\linewidth]{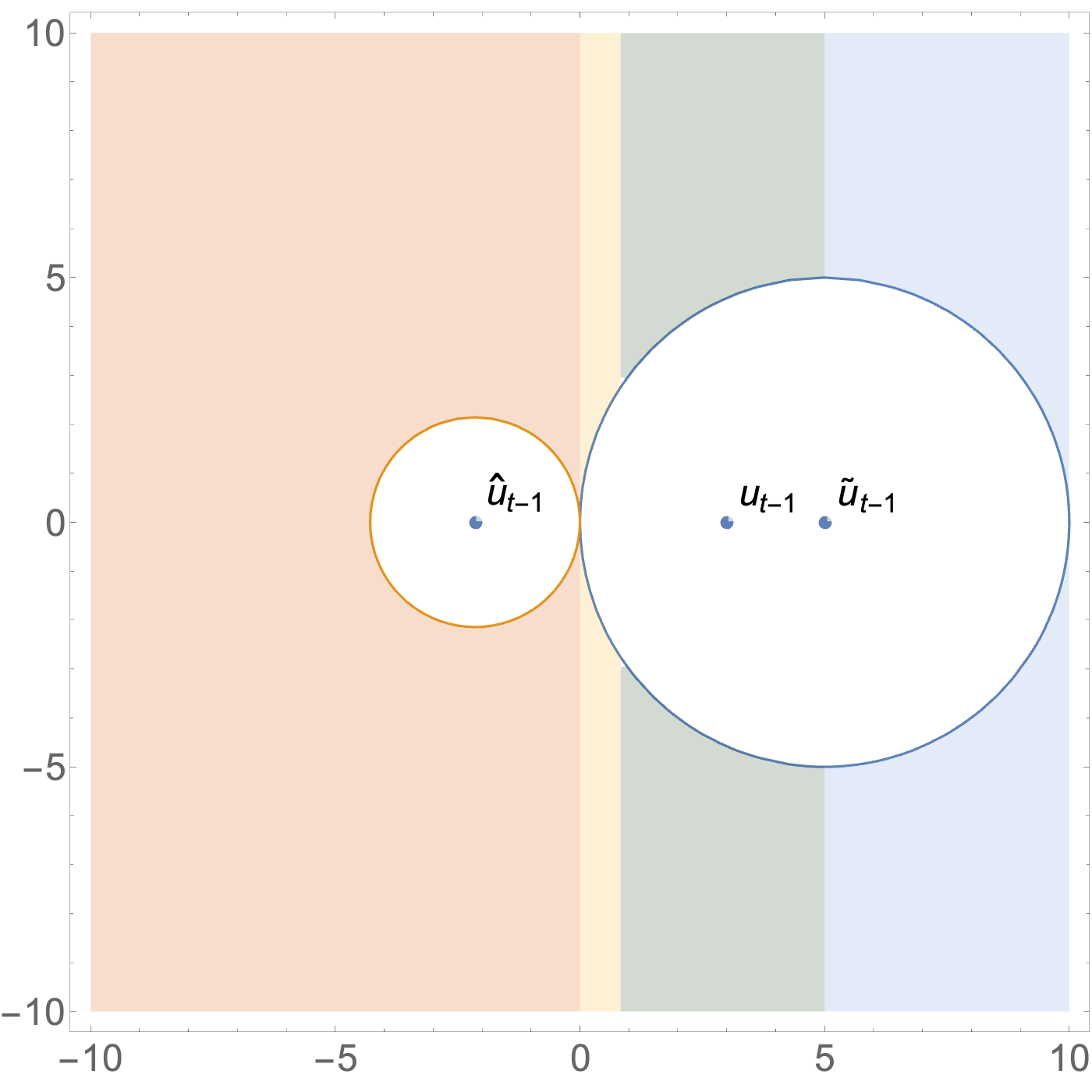}
    \caption{Plotted above is a figure depicting the various regions of integration involved in the derivation of the upper bound for Lemma \ref{lem: mgf q_t=0, gaussian} for the particular case when \(w_t = 0.3\) and \(u_t = 3 e_1\). Moving from left to right, the region in red corresponds to equation \eqref{eq: mgf qt=0 negative half space}, the region in yellow to region \(R\) as in equation \eqref{eq: mgf qt=0 region R simplified}, the region in green to region \(S\) as in equation \eqref{eq: mgf qt=0 S region, pre-simplified}, and the region in blue to region \(T\) as in equation \eqref{eq: mgf qt=0 bad half space T, simplified}.}
    \label{fig: mgf q_t=0 sketch}
    \end{figure}
    Recall \(\Delta \|u_t\|_2^2 = \|u_t\|_2^2 - \|u_{t-1}\|_2^2 = (w_t - q_t)^2 \|X_t\|_2^2 + 2(w_t - q_t)\langle X_t, u_{t-1} \rangle \).
    Let us first consider the case when \(|w_t| < \frac{1}{2}\). We will further assume that \(w_t > 0\), since there is the symmetry between \(\hat{u}_{t-1}\) and \(\tilde{u}_{t-1}\) under the mapping \(w_t \to -w_t\). Before embarking on our calculus journey, let us make some key remarks. First, on the event \(U\), we can bound the increment above by \(\Delta \|u_t\|_2^2 \leq (w_t - q_t)^2 C_{\sup}^2 \sigma^2 m\log(N_0) + 2(w_t - q_t)\langle X_t, u_{t-1} \rangle\). So, it behooves us to find an upper bound for \(\E\left[ e^{2\lambda w_t\langle X_t, u_{t-1} \rangle}\1_U\1_{q_t = 0} \1_{\|u_{t-1}\|_2 \geq \beta}| \Fc_{t-1} \right]\).  Since the exponential function is non-negative, we can always upper bound this expectation by removing the indicator on \(U\). In other words,
    \begin{align}\label{eq: simpler mgf bound qt=0}
        \E\left[ e^{2\lambda w_t\langle X_t, u_{t-1} \rangle}\1_U\1_{q_t = 0}\1_{\|u_{t-1}\|_2 \geq \beta} \Big| \Fc_{t-1} \right] \leq \E\left[ e^{2\lambda w_t\langle X_t, u_{t-1} \rangle}\1_{q_t = 0}\1_{\|u_{t-1}\|_2 \geq \beta} \Big| \Fc_{t-1} \right].
    \end{align}
    Since we're indicating on an event where \(\|u_{t-1}\|_2 \geq \beta\), we will need to handle the events where \(\langle X_t, u_{t-1} \rangle > 0\) with some care, since without an \textit{a priori} upper bound on \(\|u_{t-1}\|\) the moment generating function restricted to this event could explode. Therefore, we'll divide the region of integration into 4 pieces which are depicted in Figure \ref{fig: mgf q_t=0 sketch}. Because of the abundance of notation in the following arguments, we will denote \(\1_{\beta} := \1_{\|u_{t-1}\|_2 \geq \beta}\).
    
    Let's handle the easier event first, namely where \(\langle X_t, u_{t-1} \rangle \leq 0\).  Here, we have
    \begin{align}\label{eq: mgf qt=0 negative half space}
        &\E\left[ e^{2\lambda \langle X_t, u_{t-1} \rangle}\1_{\beta}\1_{q_t = 0} \1_{\langle X_t, u_{t-1} \rangle < 0} \Big| \Fc_{t-1} \right] = \nonumber
        \\&(2\pi\sigma^2)^{-m/2} \1_{\beta} \int_{B(\hat{u}_{t-1}, \|\hat{u}_{t-1}\|)^C\cap\{\langle x, u_{t-1} \rangle\leq 0\}} e^{2\lambda w_t \langle x, u_{t-1} \rangle} e^{\frac{-1}{2\sigma^2} \|x\|_2^2}\,\,dx.
    \end{align}
    By rotational invariance, we may assume without loss of generality that \(u_{t-1} = \|u_{t-1}\|_2 e_1\), where \(e_1 \in \R^{m}\) is the first standard basis vector. In that case, the constraint \(\langle X_t, u_{t-1} \rangle < 0\) is equivalent to \(X_{t,1} < 0\), where \(X_{t,1}\) is the first component of \(X_t\). Using Lemma \ref{lem: q level sets in general}, it follows that the set of \(X_t\) for which \(q_t = 0\) and \(X_{t,1} < 0\) is simply \(\{x \in \R^{m} : x_{1} \leq 0\} \cap B\left( -\|\hat{u}_{t-1}\|_2 e_1, \|\hat{u}_{t-1}\|_2 \right)^C\), where the negative sign here comes from the fact that \(\hat{u}_{t-1} = -(1+2w_t)u_{t-1}\). That means we can rewrite \eqref{eq: mgf qt=0 negative half space} as
    \begin{align*}
        (2\pi\sigma^2)^{-m/2} \1_{\beta} \int_{B(-\|\hat{u}_{t-1}\|_2 e_1, \|\hat{u}_{t-1}\|)^C\cap\{x_1 \leq 0\}} e^{2\lambda w_t \|u_{t-1}\| x_1 - \frac{x_1^2}{2\sigma^2}} e^{\frac{-1}{2\sigma^2} \sum_{j\geq 2}x_j^2}\,\,dx.
    \end{align*}
    Perhaps surprisingly, we can afford to use the crude upper bound on this integral by simply removing the constraint that \(x \in B(-\|\hat{u}_{t-1}\|_2 e_1, \|\hat{u}_{t-1}\|)^C\). Iterating the univariate integrals then gives us
    \begin{align}\label{eq: qt=0, easy half space simplified}
        &(2\pi\sigma^2)^{-m/2} \1_{\beta} \int_{B(-\|\hat{u}_{t-1}\|_2 e_1, \|\hat{u}_{t-1}\|)^C\cap\{x_1 \leq 0\}} e^{2\lambda w_t \|u_{t-1}\| x_1 - \frac{x_1^2}{2\sigma^2}} e^{\frac{-1}{2\sigma^2} \sum_{j\geq 2}x_j^2}\,\,dx \nonumber\\
        &\leq (2\pi\sigma^2)^{-1/2} \1_{\beta} \int_{-\infty}^{0} e^{2\lambda w_t \|u_{t-1}\| x_1 - \frac{x_1^2}{2\sigma^2}}\,\,dx_1 \int_{\R^{m-1}} (2\pi \sigma^2)^{- \frac{m-1}{2}}e^{\frac{-1}{2\sigma^2} \sum_{j\geq 2}x_j^2}\,\,dx_2\dots dx_{m}  \nonumber \\
        & = (2\pi\sigma^2)^{-1/2} \1_{\beta} \int_{-\infty}^{0} e^{2\lambda w_t \|u_{t-1}\| x_1 - \frac{x_1^2}{2\sigma^2}}\,\,dx_1
        = (2\pi\sigma^2)^{-1/2} \1_{\beta} \int_{0}^{\infty} e^{-2\lambda w_t \|u_{t-1}\| x_1 - \frac{x_1^2}{2\sigma^2}}\,\,dx_1.
    \end{align}
    We complete the square and use a change of variables to reformulate \eqref{eq: qt=0, easy half space simplified} as
    \begin{align}\label{eq: qt=0 easy half space ready for tail bound}
         &(2\pi\sigma^2)^{-1/2} \1_{\beta}e^{2\sigma^2 \lambda^2 w_t^2 \|u_{t-1}\|_2^2} \int_{0}^{\infty} e^{-\frac{1}{2\sigma^2}\left(x_1 + 2\sigma^2 \lambda w_t \|u_{t-1}\|_2 \right)^2}\,\,dx_1 \nonumber \\
         &=(2\pi\sigma^2)^{-1/2} \1_{\beta} e^{2\sigma^2 \lambda^2 w_t^2 \|u_{t-1}\|_2^2} \int_{2\sigma^2 \lambda w_t \|u_{t-1}\|_2 }^{\infty} e^{-\frac{x_1^2}{2\sigma^2}}\,\,dx_1.
    \end{align}
    Since the lower limit of integration is positive and large when \(\|u_{t-1}\|_2\) is, we can use a tail bound as in Lemma \ref{lem: gaussian tail bound} to upper bound \eqref{eq: qt=0 easy half space ready for tail bound} by
    \begin{align}\label{eq: mgf qt=0 final easy half space}
        \frac{\1_{\beta}\sigma}{2\sigma^2 \lambda w_t \|u_{t-1}\|_2 \sqrt{2\pi}} \leq \frac{\1_{\beta}\sigma}{\sigma^2 \lambda \eps \|u_{t-1}\|_2} = \frac{\1_{\beta} C_{\sup}^2 \sigma m\log(N_0)}{C_{\lambda} \eps \|u_{t-1}\|_2}, 
    \end{align}
    where the first inequality follows from \(|w_t| \geq \eps\) and the equality follows from\\ \(\lambda = \frac{C_\lambda}{C_{\sup}^2 \sigma^2 m \log(N_0)}\). 
    
    Now we handle the moment generating function on the event that \(\langle X_{t}, u_{t-1} \rangle \geq 0 \). Again, using rotational invariance to assume \(u_{t-1} = \|u_{t-1}\|_2 e_1\), we have by Lemma \ref{lem: q level sets in general} that the event to integrate over is \(\{x \in \R^{m} : x_{1} \geq 0\} \cap B\left( \|\tilde{u}_{t-1}\|_2 e_1, \|\tilde{u}_{t-1}\|_2 \right)^C\). Notice that iterating the integrals gives us
    \begin{align}\label{eq: mgf qt=0 bad halfspace, motivation for chopping}
        &\E\left[ e^{2\lambda \langle X_t, u_{t-1} \rangle}\1_{\beta} \1_{q_t = 0} \1_{\langle X_t, u_{t-1} \rangle \geq 0} \Big| \Fc_{t-1} \right]  \nonumber \\
        &\quad= (2\pi\sigma^2)^{-m/2}\1_{\beta}  \int_{B(\tilde{u}_{t-1}, \|\tilde{u}_{t-1}\|)^C\cap\{x_1 \geq 0\}} e^{2\lambda w_t \|u_{t-1}\| x_1  - \frac{1}{2\sigma^2} \|x\|_2^2}\,\,dx \nonumber \\
        &\quad= (2\pi \sigma^2)^{-1/2}\1_{\beta}  \int_{0}^{\infty} e^{2\lambda w_t \|u_{t-1}\| x_1  - \frac{x_1^2}{2\sigma^2}} \smashoperator{\int_{B\left(0, \sqrt{(2x_1\|\tilde{u}_{t-1}\|_2 - x_1^2)^+}\right)^C}} (2\pi \sigma^2)^{- \frac{m-1}{2}} e^{-\frac{1}{2\sigma^2} \sum_{j = 2}^{m} x_j^2}\,\, dx_2 \hdots dx_{m} dx_{1},
    \end{align}
        with the notation \((z)^+ = \max\{z, 0\}\) for \(z \in \R\). Consequentially, we can rephrase \eqref{eq: mgf qt=0 bad halfspace, motivation for chopping} into a more probabilistic statement. Below, let \(\gamma_j \sim \Nc(0,1)\) denote i.i.d. standard normal random variables. Then \eqref{eq: mgf qt=0 bad halfspace, motivation for chopping} is equal to
    \begin{align}\label{eq: eq: mgf qt=0 bad halfspace, probability integrand}
        (2\pi \sigma^2)^{-1/2} \1_{\beta} \int_{0}^{\infty} e^{2\lambda w_t \|u_{t-1}\| x_1  - \frac{x_1^2}{2\sigma^2}} \P\left( \sigma^2 \sum_{j=1}^{m-1} \gamma_j^2 \geq  2x_1\|\tilde{u}_{t-1}\|_2 - x_1^2 \right)\,\, dx_1.
    \end{align}
    The probability appearing in \eqref{eq: eq: mgf qt=0 bad halfspace, probability integrand} will decay exponentially provided \(2x_1\|\tilde{u}_{t-1}\|_2 - x_1^2\) is sufficiently large. To that end, we will divide up this half-space into the following regions.
    Let \(C_0 \geq 16\) be a constant and define the sets \(R := \{x \in \R^m : 0 \leq x_1 \leq \frac{C_0 \sigma^2 m}{\|\tilde{u}_{t-1}\|_2}\}\), \(S := \{x \in \R^m : \frac{C_0 \sigma^2 m}{\|\tilde{u}_{t-1}\|_2} \leq x_1 \leq \|\tilde{u}_{t-1}\|_2 \}\), and \(T := \{x \in \R^m : \|\tilde{u}_{t-1}\|_2 \leq x_1 \}\). Figure \ref{fig: mgf q_t=0 sketch} gives a visual depiction of this decomposition. Then we have
    \begin{align*}
        \E\left[ e^{2\lambda \langle X_t, u_{t-1} \rangle}\1_{\beta} \1_{q_t = 0} \1_{\langle X_t, u_{t-1} \rangle \geq 0} \Big|\Fc_{t-1} \right]  
        %
        %
        &= (2\pi\sigma^2)^{-m/2}\1_{\beta}  \smashoperator{\int_{B(\tilde{u}_{t-1}, \|\tilde{u}_{t-1}\|)^C\cap R}} e^{2\lambda w_t \|u_{t-1}\| x_1  - \frac{1}{2\sigma^2} \|x\|_2^2}\,\,dx \\
        &\quad\quad\quad\quad + (2\pi\sigma^2)^{-m/2}\1_{\beta}  \smashoperator{\int_{B(\tilde{u}_{t-1}, \|\tilde{u}_{t-1}\|)^C\cap S}} e^{2\lambda w_t \|u_{t-1}\| x_1  - \frac{1}{2\sigma^2} \|x\|_2^2}\,\,dx\\
        &\quad\quad\quad\quad + (2\pi\sigma^2)^{-m/2}\1_{\beta}  \smashoperator{\int_{B(\tilde{u}_{t-1}, \|\tilde{u}_{t-1}\|)^C\cap T}} e^{2\lambda w_t \|u_{t-1}\| x_1  - \frac{1}{2\sigma^2} \|x\|_2^2}\,\,dx.
    \end{align*}
    For the integral over \(R\), we will use the na\"ive upper bound
    \begin{align*}
        \P\left( \sigma^2 \sum_{j=1}^{m-1} \gamma_j^2 \geq  2x_1\|\tilde{u}_{t-1}\|_2 - x_1^2 \right) \leq 1.
    \end{align*}
    This gives us
    \begin{align}\label{eq: mgf qt=0 region R simplified}
        &(2\pi\sigma^2)^{-m/2} \1_{\beta} \int_{B(\tilde{u}_{t-1}, \|\tilde{u}_{t-1}\|)^C\cap R} e^{2\lambda w_t \|u_{t-1}\| x_1  - \frac{1}{2\sigma^2} \|x\|_2^2}\,\,dx \nonumber\\
        &\leq (2\pi \sigma^2)^{-1/2} \1_{\beta} \int_{0}^{\frac{C_0 \sigma^2 m}{\|\tilde{u}_{t-1}\|_2}} e^{2\lambda w_t \|u_{t-1}\| x_1  - \frac{1}{2\sigma^2} x_1^2}  \,\, dx_1 \nonumber\\
        &=  (2\pi \sigma^2)^{-1/2} \1_{\beta}  e^{2\lambda^2\sigma^2w_t^2\|u_{t-1}\|_2^2}\int_{-2\lambda w_t\sigma^2 \|u_{t-1}\|_2}^{\frac{C_0 \sigma^2 m}{\|\tilde{u}_{t-1}\|_2}-2\lambda w_t\sigma^2 \|u_{t-1}\|_2} e^{
        - \frac{1}{2\sigma^2} x_1^2}  \,\, dx_1.
    \end{align}
    The upper limit of integration is negative since \(\|u_{t-1}\|_2^2 \geq \frac{3C_0 C_{\sup}^2 \sigma^2 m \log(N_0)}{2C_{\lambda} \eps} \geq \frac{C_0 |1-2w_t|}{2\lambda w_t}\). Under this assumption, we can upper bound the integral with a Riemann sum. As the maximum of the integrand occurs at the upper limit of integration, we bound \eqref{eq: mgf qt=0 region R simplified} with
    \begin{align}\label{eq: qt=0, close to the origin}
        (2\pi \sigma^2)^{-1/2}\1_{\beta}  \frac{e^{\frac{-1}{2\sigma^2}\left(\frac{C_0^2 \sigma^4 m^2}{\|\tilde{u}_{t-1}\|_2^2} - \frac{4 C_0 \sigma^4 m \lambda w_t \|u_{t-1}\|_2}{\|\tilde{u}_{t-1}\|_2} \right)} C_0 \sigma^2 m}{\|\tilde{u}_{t-1}\|_2}.
    \end{align}
    Recognizing that \(\frac{\|u_{t-1}\|_2}{\|\tilde{u}_{t-1}\|_2} = |1-2w_t| \leq 3\) and recalling that \(\lambda = \frac{C_{\lambda}}{C_{\sup}^2 \sigma^2 m \log(N_0)}\) we can further upper bound by
    \begin{align}\label{eq: mgf qt=0 R final bound}
        \frac{\1_{\beta} e^{2C_0 \lambda \sigma^2 m w_t |1-2w_t|}C_0\sigma m}{\|\tilde{u}_{t-1}\|_2 \sqrt{2\pi}} \leq \frac{\1_{\beta} 3C_0 e^{\frac{6 C_0 C_{\lambda}}{C_{\sup}^2\log(N_0)}} \sigma m}{\|u_{t-1}\|_2}.
    \end{align}
    As was the case for \(R\), we can use the bound \(\P\left( \sigma^2 \sum_{j=1}^{m-1} \gamma_j^2 \geq  2x_1\|\tilde{u}_{t-1}\|_2 - x_1^2 \right) \leq 1\) over \(T\) too. Completing the square in the exponent as we usually do gives us
    \begin{align}\label{eq: mgf qt=0 bad half space T, simplified}
        &(2\pi\sigma^2)^{-m/2}\1_{\beta}  \int_{B(\tilde{u}_{t-1}, \|\tilde{u}_{t-1}\|)^C\cap T} e^{2\lambda w_t \|u_{t-1}\| x_1  - \frac{1}{2\sigma^2} \|x\|_2^2}\,\,dx \nonumber \\
        &\leq  (2\pi \sigma^2)^{-1/2} \1_{\beta}  e^{2\lambda^2 w_t^2 \sigma^2 \|u_{t-1}\|_2^2}\int_{\|\tilde{u}_{t-1}\|_2- 2\lambda w_t \sigma^2 \|u_{t-1}\|_2}^{\infty} e^{\frac{-x_1^2}{2\sigma^2}}\,\,dx_1.
    \end{align}
    Since \(\lambda < \frac{1}{6\sigma^2} \leq \frac{1}{2\sigma^2 w_t |1-2w_t|}\) the lower limit of integration is positive, so we can use a Gaussian tail bound as in Lemma \ref{lem: gaussian tail bound} to bound \eqref{eq: mgf qt=0 bad half space T, simplified} by
    \begin{align}\label{eq: mgf qt=0 bad half space, T first bound}
        &\frac{\1_{\beta} \sigma}{\sqrt{2\pi}\left(\|\tilde{u}_{t-1}\|_2- 2\lambda w_t \sigma^2 \|u_{t-1}\|_2\right)}e^{\frac{-1}{2\sigma^2}\left(\|\tilde{u}_{t-1}\|_2^2 - 4\lambda w_t \sigma^2 \|u_{t-1}\|_2\|\tilde{u}_{t-1}\|_2 \right)} \nonumber\\
        &=  \frac{\1_{\beta} \sigma}{\sqrt{2\pi}\left(\|\tilde{u}_{t-1}\|_2- 2\lambda w_t \sigma^2 \|u_{t-1}\|_2\right)}e^{\frac{-\|u_{t-1}\|_2^2}{2\sigma^2}\left(\frac{1}{|1-2w_t|^2} - \frac{4\lambda w_t \sigma^2}{|1-2w_t|} \right)}.
    \end{align}
    As \(\lambda < \frac{1}{12\sigma^2} \leq \frac{1}{4w_t \sigma^2 |1-2w_t|}\) the exponent appearing in \eqref{eq: mgf qt=0 bad half space, T first bound} is negative. Bounding the exponential by \(1\) then gives us the upper bound
    \begin{align}\label{eq: qt=0, hard half space, T final bound}
        \frac{\1_{\beta} \sigma}{\sqrt{2\pi}\left(\|\tilde{u}_{t-1}\|_2- 2\lambda w_t \sigma^2 \|u_{t-1}\|_2\right)}
        &= \frac{\1_{\beta} \sigma}{\|u_{t-1}\|_2 \left(\frac{1}{|1-2w_t|} - \frac{2w_t C_\lambda \sigma^2}{C_{\sup}^2 \sigma^2m\log(N_0)}\right)} \nonumber\\
        &\leq \frac{\1_{\beta} \sigma}{\|u_{t-1}\|_2 \left(\frac{1}{3} - \frac{2C_\lambda}{C_{\sup}^2 m\log(N_0)}\right)}.
    \end{align}
    Now, for \(S\) we can use the exponential decay of the probability appearing in \eqref{eq: eq: mgf qt=0 bad halfspace, probability integrand}. To make the algebra a bit nicer, we can upper-bound this probability by \(\P\left(\sigma^2\sum_{j=1}^{m-1} \gamma_j^2 \geq x_1 \|\tilde{u}_{t-1}\|\right)\) since on \(S\) we have \(0 \leq x_1 \leq \|\tilde{u}_{t-1}\|_2\). Setting \(\nu := \frac{1}{\sigma\sqrt{m-1}}\sqrt{x_1 \|\tilde{u}_{t-1}\|}\), Lemma \ref{lem: gaussian norm concentration} tells us for \(x_1 \geq \frac{C_0\sigma^{2}m}{\|\tilde{u}_{t-1}\|} \)
		\begin{align*}
		    \P\left( \sqrt{\sum_{j=1}^{m-1} \gamma_j^2} \geq \sqrt{m-1} \nu \right) \leq 2\exp(-c_{norm} (\nu-1)^2 (m-1)).
		\end{align*}
		To simplify our algebra, we remark that for any \(c > 0\),
		\begin{align*}
		    e^{-c (m-1) (z-1)^2} \leq e^{\frac{-c}{2}(m-1)z^2},
		\end{align*}
		provided \(z \geq 4\). By our choice of \(C_0\), this happens to be the case on \(S\), as \(\frac{C_0\sigma^{2}m}{\|\tilde{u}_{t-1}\|} \leq x_1 \leq \|\hat{u}_{t-1}\|\) and so
		\begin{align*}
		\nu^2 \geq \frac{x_1 \|\tilde{u}_{t-1}\|_2}{\sigma^2 m} \geq C_0.
		\end{align*}
		This gives us the upper bound on the probability
		\begin{align*}
		    \P\left( \sigma^2 \sum_{j=1}^{m-1} \gamma_j^2 \geq  2x_1\|\tilde{u}_{t-1}\|_2 - x_1^2 \right) &\leq 2\exp(-c_{norm}(m-1)\nu^2/2) \\
		    &= 2\exp\left(\frac{-c_{norm}x_1 \|\tilde{u}_{t-1}\|_2}{2\sigma^2}\right).
		\end{align*}
	    Consequentially, we can bound the integral over \(S\) as follows
		\begin{align}\label{eq: mgf qt=0 S region, pre-simplified}
		    & (2\pi \sigma^2)^{-1/2} \1_{\beta} \int_{B(\tilde{u}_{t-1}, \|\tilde{u}_{t-1}\|)^C\cap S} e^{2\lambda w_t \|u_{t-1}\| x_1  - \frac{x_1^2}{2\sigma^2}} \P\left( \sigma^2 \sum_{j=1}^{m-1} \gamma_j^2 \geq  2x_1\|\tilde{u}_{t-1}\|_2 - x_1^2 \right)\,\, dx_1 \nonumber\\
		    &\leq 2 \cdot (2\pi \sigma^2)^{-1/2} \1_{\beta}  \int_{\frac{C_0\sigma^{2}m}{\|\tilde{u}_{t-1}\|}}^{\|\tilde{u}_{t-1}\|} e^{2\lambda w_t \|u_{t-1}\|x_1 - \frac{x_1^2}{2\sigma^2} - \frac{c_{norm}x_1\|\hat{u}_{t-1}\|_2}{2\sigma^2}}\,\,dx_1 \nonumber\\
		    &= 2 \cdot (2\pi \sigma^2)^{-1/2} \1_{\beta}  \int_{\frac{C_0\sigma^{2}m}{\|\tilde{u}_{t-1}\|}}^{\|\tilde{u}_{t-1}\|} e^{\left(2\lambda w_t \|u_{t-1}\|_2 - \frac{c_{norm} \|\tilde{u}_{t-1}\|}{2\sigma^2}\right)x_1 - \frac{x_1^2}{2\sigma^2} }\,\,dx_1.
		\end{align}
		Setting \(2\zeta := c_{norm}\|\tilde{u}\|_2 - 4\lambda \sigma^2 w_t \|u_{t-1}\|\), we have that \eqref{eq: mgf qt=0 S region, pre-simplified} is equal to
		\begin{align}\label{eq: mgf qt=0 S region, simplified}
		    2 \cdot (2\pi \sigma^2)^{-1/2} \1_{\beta}  \int_{\frac{C_0\sigma^{2}m}{\|\tilde{u}_{t-1}\|}}^{\|\tilde{u}_{t-1}\|} e^{ \frac{-2\zeta x_1 }{2\sigma^2}- \frac{x_1^2}{2\sigma^2} }\,\,dx_1
		     &= 2 \cdot (2\pi \sigma^2)^{-1/2}  \1_{\beta}  e^{\frac{\zeta^2}{2\sigma^2}} \int_{\frac{C_0\sigma^{2}m}{\|\tilde{u}_{t-1}\|}}^{\|\tilde{u}_{t-1}\|} e^{\frac{-1}{2\sigma^2}(x_1 + \zeta)^2}\,\,dx_1 \nonumber\\
		     &\leq 2 \cdot (2\pi \sigma^2)^{-1/2}   \1_{\beta}  e^{\frac{\zeta^2}{2\sigma^2}} \int_{\frac{C_0\sigma^{2}m}{\|\tilde{u}_{t-1}\|}+\zeta}^{\infty} e^{\frac{-x_1^2}{2\sigma^2}}\,\,dx_1.
		\end{align}
        We remark that \(\zeta > 0\) if \(-2\lambda w_t + \frac{c_{norm}}{2|1-2w_t|\sigma^2} > 0\) which holds since \(\lambda < \frac{c_{norm}}{12\sigma^2} < \frac{c_{norm}}{4w_t|1-2w_t|\sigma^2}\). Therefore, the lower limit of integration is positive and we can use a Gaussian tail bound as in Lemma \ref{lem: gaussian tail bound} to upper bound \eqref{eq: mgf qt=0 S region, simplified} by
        \begin{align}\label{eq: mgf qt=0 S final bound}
            \frac{\1_{\beta} 2\sigma   e^{\frac{-1}{2\sigma^2}\left( \frac{C_0^2\sigma^{4}m^2}{\|\tilde{u}_{t-1}\|^2} +   2 \frac{C_0\sigma^2 m \zeta}{\|\tilde{u}_{t-1}\|_2}\right)}}{\sqrt{2\pi}\left(\frac{C_0\sigma^{2}m}{\|\tilde{u}_{t-1}\|}+\zeta\right)}
            & \leq \frac{\1_{\beta} 2\sigma   }{\sqrt{2\pi}\zeta} = \frac{\1_{\beta} 4\sigma }{\sqrt{2\pi}\|u_{t-1}\|_2\left(\frac{c_{norm}}{|1-2w_t|} - 4\lambda \sigma^2 w_t\right)}
            \\&\leq \frac{\1_{\beta} 4\sigma}{\|u_{t-1}\|_2\left(\frac{c_{norm}}{3} - \frac{4C_{\lambda}}{C_{\sup}^2m\log(N_0)}\right)}.
        \end{align}
        Putting it all together, and remembering to add back in the factor \(e^{\lambda C_{\sup}^2 \sigma^2 m \log(N_0) w_t^2} = e^{C_{\lambda}w_t^2} \leq e^{C_{\lambda}}\) we have previously ignored, we've bound \(\E\left[e^{\lambda \Delta \|u_{t}\|_2^2}\1_{\beta}  \1_{q_t = 0} \1_U \1_{\|u_{t-1}\|_2 \geq \beta}\Big| \Fc_{t-1} \right]\) from above with
        \begin{align*}
            &\underbrace{\frac{ \1_{\beta} e^{C_{\lambda}}C_{\sup}^2 \sigma m\log(N_0)}{C_{\lambda} \eps \|u_{t-1}\|_2}}_{\text{\eqref{eq: mgf qt=0 final easy half space}}}
            + \underbrace{\frac{\1_{\beta} 3C_0 e^{\frac{6 C_0 C_{\lambda}}{C_{\sup}^2\log(N_0)} + C_{\lambda}} \sigma m}{\|u_{t-1}\|_2}}_{\text{\eqref{eq: mgf qt=0 R final bound}}}
            + \underbrace{\frac{\1_{\beta} e^{C_{\lambda}} \sigma}{\|u_{t-1}\|_2 \left(\frac{1}{3} - \frac{2C_\lambda}{C_{\sup}^2 m\log(N_0)}\right)}}_{\text{\eqref{eq: qt=0, hard half space, T final bound}}}
            \\ &+  \underbrace{\frac{\1_{\beta} 4\sigma e^{C_{\lambda}}}{\|u_{t-1}\|_2\left(\frac{c_{norm}}{3} - \frac{4C_{\lambda}}{C_{\sup}^2m\log(N_0)}\right)}}_{\text{\eqref{eq: mgf qt=0 S final bound}}}
            \lesssim \frac{\1_{\beta} e^{C_{\lambda}}\sigma m \log(N_0)}{\|u_{t-1}\|_2 \eps }.
        \end{align*}
So, when \( |w_t| < 1/2\) and \(\|u_{t-1}\|_2 \geq \beta \gtrsim \frac{\sigma m \log(N_0)}{\rho \eps}\) the claim follows.
        
        Now, let's consider the case when \(w_t \geq 1/2\). Then it must be, by Lemma \ref{lem: q level sets in general}, that \(X_t \in B(\tilde{u}_{t-1}, \|\tilde{u}_{t-1}\|_2) \cap B(\hat{u}_{t-1}, \|\hat{u}_{t-1}\|_2)^C\). By non-negativity of the exponential function, we can always upper-bound the moment generating function by instead integrating over \( X_t \in  B(\hat{u}_{t-1}, \|\hat{u}_{t-1}\|_2)^C \cap \{x_1 \leq 0\}\). Pictorially, one can see this by looking at the subfigure on the right in Figure \ref{fig: regions of integration}. In this scenario, we're integrating over the region in green. The upper bound we're proposing is derived by ignoring the constraint from the blue region on the left half-space. Using this upper bound we can retrace through the steps we took to bound the integrals over \(R, S,\) and \(T\) with only minor modifications and obtain the desired result. By symmetry, an analogous approach will work for \(w_t \leq -1/2\).
\end{proof}

\begin{lemma}\label{lem: mgf q_t=1, gaussian}
    With the same hypotheses as Lemma \ref{lem: mgf q_t=0, gaussian},
	\begin{align*}
		\E\left[e^{\lambda \Delta \|u_{t}\|_2^2} \1_{\beta} \1_{q_t = 1} \1_U \1_{\|u_{t-1}\|_2 \geq \beta} \Big| \Fc_{t-1} \right] \leq \rho.
	\end{align*}
\end{lemma}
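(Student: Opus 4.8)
The plan is to follow the same route as the proof of Lemma~\ref{lem: mgf q_t=0, gaussian}, exploiting the fact that the level set $\{q_t=1\}$ is geometrically far friendlier than $\{q_t=0\}$. From \eqref{eq: nicer form}, $q_t=1$ is equivalent to $(1-2w_t)\|X_t\|_2^2\le 2\langle X_t,u_{t-1}\rangle$, so on this event one always has $\langle X_t,u_{t-1}\rangle\ge\tfrac{1-2w_t}{2}\|X_t\|_2^2$. Reading this together with Lemma~\ref{lem: q level sets in general} and Remark~\ref{re: symmetry of q_t level sets}: for $w_t<1/2$ the computation there (which uses only $1-2w_t>0$) shows $\{q_t=1\}=B(\tilde u_{t-1},\|\tilde u_{t-1}\|_2)$ with $\tilde u_{t-1}=(1-2w_t)^{-1}u_{t-1}$ a positive multiple of $u_{t-1}$, a ball through the origin, hence contained in $\{\langle X_t,u_{t-1}\rangle\ge0\}$; for $w_t=1/2$ it is the half-space $\{\langle X_t,u_{t-1}\rangle\ge0\}$; and for $w_t>1/2$ it is the complement of $B(\tilde u_{t-1},\|\tilde u_{t-1}\|_2)$, whose centre is now a negative multiple of $u_{t-1}$, so that this ball lies in $\{\langle X_t,u_{t-1}\rangle\le0\}$. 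Since, by \eqref{eq: drift first layer}, $\Delta\|u_t\|_2^2=(1-w_t)^2\|X_t\|_2^2-2(1-w_t)\langle X_t,u_{t-1}\rangle$ on $\{q_t=1\}$, I split the expectation according to the sign of $\langle X_t,u_{t-1}\rangle$.

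On $\{q_t=1\}\cap\{\langle X_t,u_{t-1}\rangle\ge0\}$ the drift term $-2(1-w_t)\langle X_t,u_{t-1}\rangle$ is non-positive. On $U$ one has $\lambda(1-w_t)^2\|X_t\|_2^2\le C_\lambda(1-w_t)^2\le 4C_\lambda$, so after pulling out $e^{4C_\lambda}$, discarding the non-negative indicators $\1_{q_t=1}\1_U$, this contribution is bounded by $e^{4C_\lambda}\1_{\|u_{t-1}\|_2\ge\beta}\,\E\big[e^{-2\lambda(1-w_t)\langle X_t,u_{t-1}\rangle}\1_{\langle X_t,u_{t-1}\rangle\ge0}\mid\Fc_{t-1}\big]$. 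Rotating so that $u_{t-1}=\|u_{t-1}\|_2e_1$ and integrating out the remaining coordinates turns this into exactly the ``easy half-space'' integral of \eqref{eq: qt=0, easy half space simplified}--\eqref{eq: mgf qt=0 final easy half space}, with $w_t$ replaced by $1-w_t\ge\eps$; completing the square as in \eqref{eq: qt=0 easy half space ready for tail bound} and applying the Gaussian tail bound of Lemma~\ref{lem: gaussian tail bound} bounds it by $\1_{\|u_{t-1}\|_2\ge\beta}\,\tfrac{e^{4C_\lambda}C_{\sup}^2\sigma m\log(N_0)}{2\sqrt{2\pi}\,C_\lambda\eps\|u_{t-1}\|_2}$.

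The event $\{q_t=1\}\cap\{\langle X_t,u_{t-1}\rangle<0\}$ is empty unless $w_t>1/2$, so assume $w_t>1/2$. There, $\langle X_t,u_{t-1}\rangle\ge\tfrac{1-2w_t}{2}\|X_t\|_2^2$ forces $|\langle X_t,u_{t-1}\rangle|\le\tfrac{2w_t-1}{2}\|X_t\|_2^2$, whence $\Delta\|u_t\|_2^2\le\big[(1-w_t)^2+(1-w_t)(2w_t-1)\big]\|X_t\|_2^2=w_t(1-w_t)\|X_t\|_2^2\le\tfrac14\|X_t\|_2^2$; on $U$ this makes $e^{\lambda\Delta\|u_t\|_2^2}\le e^{C_\lambda/4}$, so the contribution is at most $e^{C_\lambda/4}\1_{\|u_{t-1}\|_2\ge\beta}\,\P\big(q_t=1,\ \langle X_t,u_{t-1}\rangle<0\mid\Fc_{t-1}\big)$. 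To bound this probability I again rotate so $u_{t-1}=\|u_{t-1}\|_2e_1$, write $x_1$ for the first coordinate of $X_t$, note that $\{q_t=1\}\cap\{x_1<0\}$ forces $\sum_{j\ge2}x_j^2\ge\tfrac{2\|u_{t-1}\|_2|x_1|}{2w_t-1}-x_1^2$, and split the $x_1$-integral at $|x_1|\asymp\sigma^2m/\|u_{t-1}\|_2$: on the small-$|x_1|$ piece I use $\P(\cdot)\le1$ together with the small Lebesgue length of that interval, and on the complementary piece the $\chi^2$ concentration of Lemma~\ref{lem: gaussian norm concentration}, exactly as in the $R,S,T$ decomposition of Lemma~\ref{lem: mgf q_t=0, gaussian}. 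This yields $\P\big(q_t=1,\ \langle X_t,u_{t-1}\rangle<0\mid\Fc_{t-1}\big)\lesssim\sigma m/\|u_{t-1}\|_2$.

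Adding the two contributions, the conditional expectation is at most a universal constant times $\1_{\|u_{t-1}\|_2\ge\beta}\,\tfrac{e^{4C_\lambda}\sigma m\log(N_0)}{\eps\|u_{t-1}\|_2}$, which is $\le\rho$ as soon as $\|u_{t-1}\|_2\ge\beta$, with $\beta$ as in Lemma~\ref{lem: mgf q_t=0, gaussian} (enlarging the universal constant in $\beta$ if necessary). The only real work is the $\langle X_t,u_{t-1}\rangle<0$ piece: making the bound on $\P(q_t=1,\ \langle X_t,u_{t-1}\rangle<0\mid\Fc_{t-1})$ rigorous requires the same careful, region-by-region use of Gaussian and $\chi^2$ tail bounds as the $R,S,T$ argument in Lemma~\ref{lem: mgf q_t=0, gaussian}, whereas the $\langle X_t,u_{t-1}\rangle\ge0$ piece collapses directly to the ``easy half-space'' computation already carried out there.
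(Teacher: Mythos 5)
Your proof is correct and takes a genuinely different route from the paper's. The paper splits into the cases $w_t<1/2$ and $w_t>1/2$; in the first case it integrates directly over the ball $B(\tilde u_{t-1},\|\tilde u_{t-1}\|_2)$, decomposing into two regions $R,S$ and invoking lower-tail $\chi^2$ concentration on $R$, while in the second case it replays the four-region decomposition of Lemma~\ref{lem: mgf q_t=0, gaussian} wholesale. You instead split by the sign of $\langle X_t,u_{t-1}\rangle$. On $\{\langle X_t,u_{t-1}\rangle\ge 0\}$ the drift term is non-positive, so you can discard the indicator $\1_{q_t=1}$, enlarge the domain of integration to the entire half-space, and collapse to the paper's ``easy half-space'' Gaussian tail computation \eqref{eq: qt=0, easy half space simplified}--\eqref{eq: mgf qt=0 final easy half space} with $w_t$ replaced by $1-w_t$; this eliminates the $\chi^2$ step for the $w_t<1/2$ case altogether. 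On $\{\langle X_t,u_{t-1}\rangle<0\}$, nonempty only for $w_t\ge 1/2$, you make the observation---essentially a pointwise strengthening of Corollary~\ref{cor: bounded data implies bounded increments} on this event---that $\Delta\|u_t\|_2^2\le w_t(1-w_t)\|X_t\|_2^2\le\frac14\|X_t\|_2^2$, so the exponential is at most $e^{C_\lambda/4}$ and it suffices to bound $\P\big(q_t=1,\,\langle X_t,u_{t-1}\rangle<0\mid\Fc_{t-1}\big)$, cleanly decoupling the MGF estimate from the geometry. The probability bound still requires the full three-region analysis of Lemma~\ref{lem: mgf q_t=0, gaussian}: your prose describes only a two-way split at $|x_1|\asymp\sigma^2m/\|u_{t-1}\|_2$, but your appeal to the $R,S,T$ decomposition is the accurate account, since one also needs the far-tail region $|x_1|>\|\tilde u_{t-1}\|_2$ where the threshold $2\|\tilde u_{t-1}\|_2|x_1|-x_1^2$ can turn negative and one falls back on $\P\le 1$ together with the Gaussian tail of $x_1$. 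Both routes land on the same $\lesssim \sigma m\log(N_0)/(\eps\|u_{t-1}\|_2)$ bound; yours is arguably the cleaner organization because it makes the distinction between the benign and dangerous half-spaces primary and treats the two ranges of $w_t$ uniformly.
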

\begin{proof}
    To begin, let's consider the case when \(w_t < 1/2\).
    Recalling that \(\tilde{u}_{t-1} = \frac{1}{1-2w_t} u_{t-1}\), and arguing as we did at the beginning of the proof of Lemma \ref{lem: mgf q_t=0, gaussian}, Lemma \ref{lem: q level sets in general} tells us
    \begin{align*}
        &\E\left[e^{\lambda \Delta \|u_{t}\|_2^2} \1_{\beta} \1_{q_t = 1} \1_U\1_{\|u_{t-1}\|_2 \geq \beta} \Big| \Fc_{t-1} \right] \nonumber \\
        &\leq  (2\pi\sigma^2)^{-m/2} \1_{\beta}  e^{\lambda C_{\sup}^2 \sigma^2 m \log(N_0) (w_t - 1)^2}\smashoperator{\int_{B(\tilde{u}_{t-1}, \|\tilde{u}_{t-1}\|_2)}} e^{2\lambda (w_t - 1) x^T u_{t-1}} e^{\frac{-1}{2\sigma^2}\|x\|_2^2}\,\,dx.
    \end{align*}
    As before, we have denoted \(\1_{\beta} := \1_{\|u_{t-1}\|_2 \geq \beta}\) for conciseness. Using rotational invariance, we may assume that \(u_{t-1} = \|u_{t-1}\|_2 e_1\). Just as we did in Lemma \ref{lem: mgf q_t=0, gaussian}, expressing this integral as nested iterated integrals gives us the probabilistic formulation
    \begin{align*}
        \frac{ \1_{\beta} e^{\lambda C_{\sup}^2 \sigma^2 m \log(N_0) (w_t - 1)^2}}{\sqrt{2\pi} \sigma}\smashoperator{\int_{0}^{\quad \quad 2\|\tilde{u}_{t-1}\|_2}} e^{2\lambda (w_t - 1) \|u_{t-1}\| x_1  - \frac{1}{2\sigma^2} x_1^2} \P\left(\sigma^2 \sum_{j=1}^{m-1} \gamma_j^2 \leq 2x_1 \|\tilde{u}_{t-1}\|_2 - x_1^2 \right)\,\,dx_1,
    \end{align*}
    where, as before, the \(\gamma_j \sim \Nc(0,1)\) are i.i.d. standard normal random variables. So, consider decomposing the above integral into the following two pieces. Set \(R := \{x \in \R^m : 0 \leq x_1 \leq \frac{ C_1 \sigma^2 m}{\|\tilde{u}_{t-1}\|_2}\}\) and \(S := \{x \in \R^m : \frac{C_1 \sigma^2 m}{\|\tilde{u}_{t-1}\|_2} \leq x_1 \leq 2\|\tilde{u}_{t-1}\|_2 \}\) where \(C_1 \in (0,1)\) is a fixed constant. Then on \(R\) we have by Lemma \ref{lem: gaussian norm concentration}
    \begin{align*}
        \P&\left( \sum_{j = 1}^{m-1} g_j^2 \leq (m-1) \left( \frac{1}{\sigma^2 (m-1)} (2x_1 \|\tilde{u}_{t-1}\|_2 - x_1^2) \right) \right) \\&\leq \P\left( \sum_{j = 1}^{m-1} g_j^2 \leq (m-1) \left( \frac{1}{\sigma^2 (m-1)} (2x_1 \|\tilde{u}_{t-1}\|_2) \right) \right) 
        \leq 2e^{-c (1-C_1)^2 (m-1)}.
    \end{align*}
    Setting aside the factor \(e^{\lambda C_{\sup}^2 \sigma^2 m \log(N_0) (w_t - 1)^2}\) for the moment, we have that the integral over \(R\) is equal to
    \begin{align}\label{eq: mgf qt=1, R simplified}
        &(2\pi \sigma^2)^{-1/2}\1_{\beta}  \int_{0}^{\frac{C_1 \sigma^2 m}{\|\tilde{u}_{t-1}\|_2}} e^{2\lambda (w_t - 1) \|u_{t-1}\| x_1  - \frac{1}{2\sigma^2} x_1^2} \P\left(\sigma^2 \sum_{j=1}^{m-1} \gamma_j^2 \leq 2x_1 \|\tilde{u}_{t-1}\|_2 - x_1^2 \right)\,\,dx \nonumber\\
        &\leq (2\pi \sigma^2)^{-1/2} \1_{\beta}   2e^{-c (1-C_1)^2 (m-1)} \int_{0}^{\frac{C_1 \sigma^2 m}{\|\tilde{u}_{t-1}\|_2}} e^{2\lambda (w_t - 1) \|u_{t-1}\| x_1  - \frac{1}{2\sigma^2} x_1^2} dx_1 \nonumber\\
        &=  (2\pi \sigma^2)^{-1/2} \1_{\beta}  2e^{-c (1-C_1)^2 (m-1)} e^{2 \sigma^2 \lambda^2 (w_t-1)^2 \|u_{t-1}\|_2^2} \int_{2\lambda \sigma^2 (1- w_t)\|u_{t-1}\|_2}^{\frac{C_1 \sigma^2 m}{\|\tilde{u}_{t-1}\|_2} + 2\lambda \sigma^2 (1- w_t)\|u_{t-1}\|_2} e^{-\frac{1}{2\sigma^2} x_1^2} dx_1.
    \end{align}
    We remark that the lower limit of integration is strictly positive. Therefore, using a Riemann approximation to the integral and knowing that the maximum of the integral occurs at the lower limit of integration bounds \eqref{eq: mgf qt=1, R simplified} above by
    \begin{align}\label{eq: mgf qt=1, R final bound}
        \1_{\beta} 2e^{-c (1-C_1)^2 (m-1)} \frac{C_1 \sigma m}{\|\tilde{u}_{t-1}\|_2 \sqrt{2\pi}}.
    \end{align}
    On \(S\), we use the bound \(\P\left( \sum_{j = 1}^{m-1} g_j^2 \leq (m-1) \left( \frac{1}{\sigma^2 (m-1)} (2x_1 \|\tilde{u}_{t-1}\|_2 - x_1^2) \right)\right)\leq 1\) to get
    \begin{align}\label{eq: mgf qt=1, S simplified}
        &(2\pi \sigma^2)^{-1/2} \1_{\beta} \times \\ &\smashoperator{\int_{\quad \quad \frac{C_1 \sigma^2 m}{\|\tilde{u}_{t-1}\|_2}}^{\quad \quad \ \ 2\|\tilde{u}_{t-1}\|_2}} e^{2\lambda (w_t - 1) \|u_{t-1}\| x_1  - \frac{1}{2\sigma^2} x_1^2} \times \P\left( \sum_{j = 1}^{m-1} g_j^2 \leq (m-1) \left( \frac{1}{\sigma^2 (m-1)} (2x_1 \|\tilde{u}_{t-1}\|_2 - x_1^2) \right)\right)\,\,dx_1 \nonumber\\
        &\leq (2\pi \sigma^2)^{-1/2}\1_{\beta}  \smashoperator{\int_{\quad \quad \frac{C_1 \sigma^2 m}{\|\tilde{u}_{t-1}\|_2}}^{\quad \quad \ \  2\|\tilde{u}_{t-1}\|_2}} e^{2\lambda (w_t - 1) \|u_{t-1}\| x_1  - \frac{1}{2\sigma^2} x_1^2} dx_1 \nonumber\\
        &=    \1_{\beta} e^{2 \sigma^2 \lambda^2 (w_t-1)^2 \|u_{t-1}\|_2^2} \int_{\frac{C_1 \sigma^2 m }{\|\tilde{u}_{t-1}\|_2} + 2\lambda \sigma^2 (1-w_t) \|u_{t-1}\|_2}^{2\|\tilde{u}_{t-1}\|_2 + 2\lambda \sigma^2 (1-w_t) \|u_{t-1}\|_2} (2\pi \sigma^2)^{-1/2} e^{-\frac{1}{2\sigma^2} x_1^2} dx_1.
    \end{align}
    Since the lower limit of integration is strictly positive, we can use a Gaussian tail bound as in Lemma \ref{lem: gaussian tail bound} to upper bound \eqref{eq: mgf qt=1, S simplified} by
    \begin{align}\label{eq: mgf qt=1 S final bound}
        \frac{\1_{\beta} \sigma e^{-\frac{1}{2\sigma^2}\left( \frac{C_1^2 \sigma^4 m^2}{\|\tilde{u}_{t-1}\|_2^2} + 4 \lambda (1-w_t) C_1 \sigma^4 m \frac{\|u_{t-1}\|_2}{\|\tilde{u}_{t-1}\|_2} \right)}}{\left(\frac{C_1 \sigma^2 m }{\|\tilde{u}_{t-1}\|_2} + 2\lambda \sigma^2 (1-w_t) \|u_{t-1}\|_2\right)\sqrt{2\pi}}
        \leq \frac{ \1_{\beta} \sigma }{\sqrt{2\pi} 2\lambda \sigma^2 (1-w_t) \|u_{t-1}\|}.
    \end{align}
    To summarize, we have shown, at least when \(w_t < 1/2\), that
    \begin{align}\label{eq: mgf qt=1 final bound, wt < 1/2}
         &\E\left[e^{\lambda \Delta \|u_{t}\|_2^2} \1_{\beta} \1_{q_t = 1} \1_U \Big| \Fc_{t-1} \right] \nonumber \\
         &\leq   \underbrace{\1_{\beta} e^{\lambda C_{\sup}^2 \sigma^2 m \log(N_0) (w_t - 1)^2} 2e^{-c (1-C_1)^2 (m-1)} \frac{C_1 \sigma m}{\|\tilde{u}_{t-1}\|_2 \sqrt{2\pi}}}_{\text{\eqref{eq: mgf qt=1, R final bound}}}
         + \underbrace{\frac{\1_{\beta} e^{\lambda C_{\sup}^2 \sigma^2 m \log(N_0) (w_t - 1)^2} \sigma }{\sqrt{2\pi} 2\lambda \sigma^2 (1-w_t) \|u_{t-1}\|}}_{\text{\eqref{eq: mgf qt=1 S final bound}}} \nonumber\\
         &\leq \frac{\1_{\beta} e^{\lambda C_{\sup}^2 \sigma^2 m \log(N_0) (w_t - 1)^2}}{\|u_{t-1}\|_2}\left(\frac{2 \sigma m |1-2w_t|}{\sqrt{2\pi}}
         + \frac{ \sigma }{\sqrt{2\pi} 2\lambda \sigma^2 (1-w_t) }\right) \nonumber\\
         &\leq \frac{\1_{\beta} e^{\lambda C_{\sup}^2 \sigma^2 m \log(N_0) (w_t - 1)^2}}{\|u_{t-1}\|_2}\left(6 \sigma m
         + \frac{\sigma}{\lambda \sigma^2 \eps}\right) \nonumber\\
         &\leq \frac{\1_{\beta} e^{4 C_{\lambda}} \sigma m \log(N_0)}{\|u_{t-1}\|_2}\left(\frac{6}{\log(N_0)}
         + \frac{1}{C_\lambda \eps}\right)\nonumber\\
         & \lesssim \frac{\1_{\beta} \sigma m \log(N_0)}{\|u_{t-1}\|_2 \eps}.
    \end{align}
    Therefore, when \(\|u_{t-1}\|_2 \geq \beta \gtrsim \frac{C e^{C_{\lambda}} \sigma m \log(N_0) }{\rho \eps}\), \eqref{eq: mgf qt=1 final bound, wt < 1/2} is bounded above by \(\rho\) as desired.
    
    Now, let's consider the case when \(w_t > 1/2\). In this scenario, we can express the expectation as
    \begin{align*}
         &\E\left[e^{\lambda \Delta \|u_{t}\|_2^2} \1_{\beta} \1_{q_t = 1} \1_U \Big| \Fc_{t-1} \right] \\
         &\leq   e^{\lambda C_{\sup}^2 \sigma^2 m \log(N_0) (w_t - 1)^2} (2\pi\sigma^2)^{-m/2} \1_{\beta} \smashoperator{\int_{B(\tilde{u}_{t-1}, \|\tilde{u}_{t-1}\|_2)^C}} e^{2\lambda (w_t - 1) x^T u_{t-1}} e^{\frac{-1}{2\sigma^2}\|x\|_2^2}\,\,dx.
    \end{align*}
    Using the exact same approach as in the proof of Lemma \ref{lem: mgf q_t=0, gaussian}, we can partition the domain of integration into the following pieces:
    \begin{align*}
        \smashoperator{\int_{B(\tilde{u}_{t-1}, \|\tilde{u}_{t-1}\|_2)^C}} e^{2\lambda (w_t - 1) x^T u_{t-1}} e^{\frac{-1}{2\sigma^2}\|x\|_2^2}\,\,dx
        \quad \quad = \quad & \quad  \smashoperator{\int_{B(\tilde{u}_{t-1}, \|\tilde{u}_{t-1}\|_2)^C \cap \{x_1 \leq - \|\tilde{u}_{t-1}\|\}}} e^{2\lambda (w_t - 1) x^T u_{t-1}} e^{\frac{-1}{2\sigma^2}\|x\|_2^2}\,\,dx\\
       \quad &+  \quad \smashoperator{\int_{B(\tilde{u}_{t-1}, \|\tilde{u}_{t-1}\|_2)^C \cap \{- \|\tilde{u}_{t-1}\| \leq x_1 \leq \frac{-C \sigma^2 m}{\|\tilde{u}_{t-1}\|}\}}} e^{2\lambda (w_t - 1) x^T u_{t-1}} e^{\frac{-1}{2\sigma^2}\|x\|_2^2}\,\,dx\\
        \quad &  + \quad  \smashoperator{\int_{B(\tilde{u}_{t-1}, \|\tilde{u}_{t-1}\|_2)^C \cap \{- \frac{-C \sigma^2 m}{\|\tilde{u}_{t-1}\|} \leq x_1 \leq 0\}}} e^{2\lambda (w_t - 1) x^T u_{t-1}} e^{\frac{-1}{2\sigma^2}\|x\|_2^2}\,\,dx\\
        &+ \quad \smashoperator{\int_{B(\tilde{u}_{t-1}, \|\tilde{u}_{t-1}\|_2)^C \cap \{0 \leq x_1\}}} e^{2\lambda (w_t - 1) x^T u_{t-1}} e^{\frac{-1}{2\sigma^2}\|x\|_2^2}\,\,dx.
    \end{align*}
    The same  arguments from the proof of Lemma \ref{lem: mgf q_t=0, gaussian}  apply here with only minor modifications. Namely, an argument exactly like that given for \eqref{eq: mgf qt=0 negative half space} gives us 
    \begin{align*}
        \int_{B(\tilde{u}_{t-1}, \|\tilde{u}_{t-1}\|_2)^C \cap \{0 \leq x_1\}} e^{2\lambda (w_t - 1) x^T u_{t-1}} e^{\frac{-1}{2\sigma^2}\|x\|_2^2}\,\,dx \leq \frac{\sigma}{\lambda \sigma^2 \eps \|u_{t-1}\|}.
    \end{align*}
    Similarly, the chain of logic used to derive \eqref{eq: qt=0, hard half space, T final bound} gives us
    \begin{align*}
        \int_{B(\tilde{u}_{t-1}, \|\tilde{u}_{t-1}\|_2)^C \cap \{x_1 \leq - \|\tilde{u}_{t-1}\|\}} e^{2\lambda (w_t - 1) x^T u_{t-1}} e^{\frac{-1}{2\sigma^2}\|x\|_2^2}\,\,dx \leq \frac{\sigma}{\|u_{t-1}\|_2 \left(\frac{1}{3} - \frac{2C_\lambda}{C_{\sup}^2 m\log(N_0)}\right)}.
    \end{align*}
    Calculations for the derivation of \eqref{eq: mgf qt=0 S final bound} give us
    \begin{align*}
         \int_{B(\tilde{u}_{t-1}, \|\tilde{u}_{t-1}\|_2)^C \cap \{- \|\tilde{u}_{t-1}\| \leq x_1 \leq \frac{-C \sigma^2 m}{\|\tilde{u}_{t-1}\|}\}} e^{2\lambda (w_t - 1) x^T u_{t-1}} e^{\frac{-1}{2\sigma^2}\|x\|_2^2}\,\,dx\\
         \leq \frac{4\sigma }{\|u_{t-1}\|_2\left(\frac{c_{norm}}{3} - \frac{4C_{\lambda}}{C_{\sup}^2m\log(N_0)}\right)}.
    \end{align*}
    Finally, the same reasoning that was used to derive \eqref{eq: mgf qt=0 R final bound} gives us
    \begin{align*}
        \int_{B(\tilde{u}_{t-1}, \|\tilde{u}_{t-1}\|_2)^C \cap \{- \frac{-C \sigma^2 m}{\|\tilde{u}_{t-1}\|} \leq x_1 \leq 0\}} e^{2\lambda (w_t - 1) x^T u_{t-1}} e^{\frac{-1}{2\sigma^2}\|x\|_2^2}\,\,dx \leq  \frac{3C_0 e^{\frac{6 C_0 C_{\lambda}}{C_{\sup}^2\log(N_0)}} \sigma m}{\|u_{t-1}\|_2}.
    \end{align*}
    Following the remainder of the proof of Lemma \ref{lem: mgf q_t=0, gaussian} in this scenario gives us the result when \(w_t > 1/2\).
\end{proof}
\begin{lemma}\label{lem: mgf q_t=-1 gaussian}
    With the same hypotheses as Lemma \ref{lem: mgf q_t=0, gaussian}
	\begin{align*}
		\E\left[e^{\lambda \Delta \|u_{t}\|_2^2}  \1_{q_t = -1} \1_U \1_{\|u_{t-1}\|_2 \geq \beta} \Big| \Fc_{t-1}\right] \leq \rho.
	\end{align*}
\end{lemma}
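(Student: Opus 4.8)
The plan is to reduce the statement directly to Lemma~\ref{lem: mgf q_t=1, gaussian} by exploiting the reflection symmetry of \eqref{eq: first layer dynamical system} under the joint sign change \((w_t, X_t) \mapsto (-w_t, -X_t)\). Fix \(\Fc_{t-1}\), so that \(u_{t-1}\) is deterministic, and note that because \(X_t \sim \Nc(0,\sigma^2 I)\) is symmetric with \(\|{-X_t}\|_2 = \|X_t\|_2\), the event \(U\), the indicator \(\1_{\|u_{t-1}\|_2 \geq \beta}\), and the hypothesis \(\mathrm{dist}(w_t,\{-1,0,1\}) \geq \eps\) are all invariant under this map. Using the closed form \eqref{eq: nicer form}, one checks that \(q_t = -1\) for the pair \((w_t, X_t)\) exactly when \((-w_t) + (-X_t)^T u_{t-1}/\|{-X_t}\|_2^2 > 1/2\), i.e.\ exactly when the ternary quantizer returns \(+1\) on the pair \((-w_t, -X_t)\); correspondingly, by Lemma~\ref{lem: q level sets in general} (and Remark~\ref{re: symmetry of q_t level sets} when \(|w_t| > 1/2\)), the region \(\{X_t : q_t = -1\}\) associated with the weight \(w_t\) is carried by \(X_t \mapsto -X_t\) onto the region \(\{X_t : q_t = +1\}\) associated with the weight \(-w_t\), since \(-\hat u_{t-1}(w_t) = \tfrac{1}{1+2w_t} u_{t-1} = \tilde u_{t-1}(-w_t)\) and \(\|\hat u_{t-1}(w_t)\|_2 = \|\tilde u_{t-1}(-w_t)\|_2\).

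Next I would verify that the increment matches. By \eqref{eq: drift first layer}, on \(\{q_t = -1\}\) we have
\[
\Delta\|u_t\|_2^2 = (w_t+1)^2 \|X_t\|_2^2 + 2(w_t+1)\langle X_t, u_{t-1}\rangle,
\]
and substituting \(w_t \mapsto -w_t\), \(X_t \mapsto -X_t\) turns \((w_t+1)^2\|X_t\|_2^2\) into \((w_t-1)^2\|X_t\|_2^2\) and \(2(w_t+1)\langle X_t, u_{t-1}\rangle\) into \(2(w_t-1)\langle X_t, u_{t-1}\rangle\); this is precisely the increment that appears on the event \(\{q_t = +1\}\) for the weight \(-w_t\) in the proof of Lemma~\ref{lem: mgf q_t=1, gaussian}. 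Consequently, performing the change of variables \(X_t \mapsto -X_t\) (whose Jacobian is \(1\) and under which the Gaussian density is invariant) shows that the conditional expectation
\[
\E\!\left[e^{\lambda \Delta\|u_t\|_2^2}\,\1_{q_t=-1}\,\1_U\,\1_{\|u_{t-1}\|_2 \geq \beta}\,\Big|\,\Fc_{t-1}\right]
\]
for the weight \(w_t\) equals the analogous expectation with \(q_t = +1\) and weight \(-w_t\). Since \(-w_t\) again satisfies \(\mathrm{dist}(-w_t,\{-1,0,1\}) \geq \eps\), Lemma~\ref{lem: mgf q_t=1, gaussian} bounds the latter by \(\rho\); as this holds uniformly over all admissible weight values, it also survives integrating over \(w_t\), which yields the claim.

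I do not expect a genuine obstacle here: the entire content is the bookkeeping needed to confirm that every ingredient — the defining inequality for \(q_t\) in \eqref{eq: nicer form}, the balls from Lemma~\ref{lem: q level sets in general}, the increment \eqref{eq: drift first layer}, and the conditioning events — transforms consistently under the reflection, including the boundary cases \(|w_t| \geq 1/2\) (where the level sets become complements of balls by Remark~\ref{re: symmetry of q_t level sets}, and the argument goes through verbatim) and the degenerate case \(w_t = \pm 1/2\) (half-spaces), all of which were already handled in the proof of Lemma~\ref{lem: mgf q_t=1, gaussian}. If one preferred to avoid the symmetry argument, one could instead repeat the computation of Lemma~\ref{lem: mgf q_t=0, gaussian} verbatim with the ball \(B(\hat u_{t-1}, \|\hat u_{t-1}\|_2)\) and the factor \((w_t+1)\) in place of \((w_t-1)\), partitioning the relevant half-space into analogues of \(R\), \(S\), and \(T\); but this only re-derives what the reflection gives for free.
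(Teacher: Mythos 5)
Your reflection argument is correct and cleanly formalizes what the paper leaves as a one-line remark ("effectively the same as Lemma~\ref{lem: mgf q_t=1, gaussian}"). The key identities you check --- that $q_t(w_t,X_t)=-1$ iff $q_t(-w_t,-X_t)=+1$, that $-\hat u_{t-1}(w_t)=\tilde u_{t-1}(-w_t)$ with matching radii, and that the increment $(w_t+1)^2\|X_t\|_2^2+2(w_t+1)\langle X_t,u_{t-1}\rangle$ transforms into the $q_t=+1$ increment for weight $-w_t$ under $X_t\mapsto -X_t$ --- all hold, and the Gaussian density, the event $U$, the $\Fc_{t-1}$-measurable indicator $\1_{\|u_{t-1}\|_2\geq\beta}$, and the constraint $\mathrm{dist}(w_t,\{-1,0,1\})\geq\eps$ are all invariant under the reflection, so the change of variables is legitimate and the bound from Lemma~\ref{lem: mgf q_t=1, gaussian} transfers directly. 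One minor slip: in your closing remark the "verbatim" alternative should reference Lemma~\ref{lem: mgf q_t=1, gaussian} (the other ball case), not Lemma~\ref{lem: mgf q_t=0, gaussian} (the complement-of-two-balls case); the paper's own pointer confirms this. Aside from that, your symmetry reduction is, if anything, a tighter justification than the paper's "effectively the same" hand-wave, since it explains \emph{why} repeating the computation must succeed rather than asking the reader to trust that it does.
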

\begin{proof}
    The proof is effectively the same as that for Lemma \ref{lem: mgf q_t=1, gaussian}.
\end{proof}


\acks{This work was supported in part by National Science Foundation
Grant DMS-2012546 and a UCSD senate research award.}

\vskip 0.2in
\bibliography{citations}

\end{document}